\renewcommand{\algorithmiccomment}[1]{\bgroup\hfill//~#1\egroup}
\DeclareMathOperator*{\argmax}{arg\,max}
\DeclareMathOperator*{\argmin}{arg\,min}
\DeclareMathOperator{\poly}{poly}
\newcommand{\inner}[2]{\left\langle #1, #2 \right\rangle}
\newcommand{\R}{\mathbb{R}}
\newcommand{\bZero}{{\bm 0}}
\newcommand{\EEs}[2]{\mathbb{E}_{#1}\left[#2\right]}
\newcommand{\norm}[1]{\left\|#1\right\|}
\newcommand{\abs}[1]{\left|#1\right|}
\newcommand{\cA}{\mathcal{A}}
\newcommand{\cO}{\mathcal{O}}
\newcommand{\cS}{\mathcal{S}}
\newcommand{\be}{{\bf e}}
\renewcommand{\hat}{\widehat}
\renewcommand{\tilde}{\widetilde}
\newcommand{\nothere}[1]{}
\newcommand{\Span}{\mathrm{span}}
\newcommand{\tj}{\tau}
\newcommand{\spl}{\mathrm{Simplex}}
\newcommand{\rank}{\mathrm{rank}}
\newcommand{\eacc}{\epsilon_{\text{acc}}}
\newcommand{\ealpha}{\epsilon_{\alpha}}
\newcommand{\calpha}{C_{\alpha}}
\newcommand{\cw}{C_w}
\newcommand{\cv}{C_V}
\newcommand{\Lsup}[1]{L^{(#1)}}
\newcommand{\Esup}[1]{E^{(#1)}}
\newcommand{\xsup}[1]{x^{(#1)}}
\newcommand{\Qsup}[1]{F^{(#1)}}
\newcommand{\Jsup}[1]{J^{(#1)}}
\newcommand{\betasup}[1]{\beta^{(#1)}}
\newcommand{\append}{\circ}
\newcommand{\bigO}{\cO}
\newcommand{\pssp}{P(s' \vert s, \pi(s))}
\newcommand{\basis}{b}
\newcommand{\hatwd}{\hat w ^{(\delta)}}
\newcommand{\hatAd}{\hat A^{(\delta)}}
\newcommand{\hatwo}{\hat w_0}
\newtheorem{lemma}{Lemma}
\newtheorem{corollary}{Corollary}
\newtheorem{observation}{Observation}
\title{
Eliciting User Preferences for Personalized Multi-Objective Decision Making through Comparative Feedback
}
\author{%
  Han Shao\\
  TTIC\\
  \texttt{han@ttic.edu} \\
  \And
  Lee Cohen\\
  TTIC\\
  \texttt{leecohencs@gmail.com}\\
  \And
  Avrim Blum\\
  TTIC\\
  \texttt{avrim@ttic.edu}\\
  \And
  Yishay Mansour\\
  Tel Aviv University and Google Research\\
  \texttt{mansour.yishay@gmail.com}\\
  \And
  Aadirupa Saha\\
 TTIC\\
  \texttt{aadirupa@ttic.edu}\\
  \And
  Matthew R.\ Walter\\
  TTIC\\
  \texttt{mwalter@ttic.edu}\\
}
\begin{document}

\maketitle

\begin{abstract}

In this work, we propose a multi-objective decision making framework that accommodates different user preferences over objectives, where preferences are learned via policy comparisons. Our model consists of a known Markov decision process with a vector-valued reward function, with each user having an unknown preference vector that expresses the relative importance of each objective. The goal is to efficiently compute a near-optimal policy for a given user. We consider two user feedback models. We first address the case where a user is provided with two policies and returns their preferred policy as feedback. We then move to a different user feedback model, where a user is instead provided with two small weighted sets of representative trajectories and selects the preferred one. In both cases, we suggest an algorithm that finds a nearly optimal policy for the user using a number of comparison queries that scales quasilinearly in the number of objectives.

\end{abstract}

\section{Introduction}

Many real-world decision making problems involve optimizing over multiple  objectives. For example, when designing an investment portfolio, one’s investment strategy requires trading off maximizing
expected gain with minimizing risk.
When using Google Maps for navigation, people are concerned about various factors such as the worst and average estimated arrival time, traffic conditions, road surface conditions (e.g., whether or not it is paved), and the scenery along the way.
As the advancement of technology gives rise to personalized machine learning~\citep{mcauley2022}, in this paper, we 
design efficient algorithms for personalized multi-objective decision making. 

While prior works have concentrated on approximating the Pareto-optimal solution set\footnote{The Pareto-optimal solution set  contains a optimal personalized policy for every possible user preference.} (see \citet
{HayesRBKMRVZDHH22MORLsurvey} and \citet{roijers13} for surveys), we aim to find the optimal personalized policy for a user that reflects their unknown preferences over $k$ objectives.
Since the preferences are unknown, we need to elicit users' preferences by requesting feedback on selected policies. 
The problem of eliciting preferences has been studied in \cite{wang2022imo} using a strong query model that provides stochastic feedback on
the quality of a single policy.
In contrast, our work focuses on a more natural and intuitive query model,  comparison queries, which query the user's preference over two selected policies, e.g., `do you prefer a policy which minimizes average estimated arrival time or policy which minimizes the number of turns?'.
Our goal is to find the optimal personalized policy using as few queries as possible.
To the best of our knowledge, we are the first to 
provide algorithms with theoretical guarantees for specific personalized multi-objective decision-making via policy comparisons.


Similar to prior works on multi-objective decision making, we model the problem using a finite-horizon Markov decision process (MDP) with a $k$-dimensional reward vector, where each entry is a non-negative scalar reward representative of one of the $k$ objectives. To account for user preferences, we assume that a user is characterized by a (hidden) $k$-dimensional \textit{preference vector} with non-negative entries, and that the \textit{personalized reward} of the user for each state-action is the inner product between this preference vector and the reward vector (for this state-action pair). We also distinguish between the $k$-dimensional value of a policy, which is the expected cumulative reward when selecting actions according to the policy, and the \textit{personalized value} of a policy, which is the scalar expected cumulative personalized reward when selecting actions according to this policy.  The MDP is known to the agent and the goal is to learn an optimal policy for the personalized reward function (henceforth, the \emph{optimal personalized policy}) of a user via policy comparative feedback.

\textbf{Comparative feedback.} If people could clearly define their preferences over objectives (e.g., ``my preference vector has $3$ for the scenery objective, $2$ for the traffic objective, and $1$ for the road surface objective''), the problem would be easy---one would simply use the personalized reward function as a scalar reward function and solve for the corresponding policy. In particular, a similar problem with noisy feedback regarding the value of a single multi-objective policy (as mapping from states to actions) has been studied in \citet{wang2022imo}.
As this type of fine-grained preference feedback is difficult for users to define, especially in environments where sequential decisions are made, we restrict the agent to rely solely on comparative feedback. Comparative feedback is widely used in practice. For example, ChatGPT asks users to compare two responses to improve its performance. This approach is more intuitive for users compared to asking for numerical scores of ChatGPT responses. 


Indeed, as knowledge of the user's preference vector is sufficient to solve for their optimal personalized policy, the challenge is to learn a user's preference vector using a minimal number of easily interpretable queries.
We therefore concentrate on comparison queries. The question is then what exactly to compare? Comparing state-action pairs might not be a good option for the aforementioned tasks---what is the meaning of two single steps in a route?
Comparing single trajectories (e.g., routes in Google Maps) would not be ideal either. Consider for example two policies: one randomly generates either (personalized) GOOD or  (personalized) BAD trajectories while the other consistently generates  (personalized) MEDIOCRE trajectories. By solely comparing single trajectories without considering sets of trajectories, we cannot discern the user's preference regarding the two policies.


\textbf{Interpretable policy representation.}
Since we are interested in learning preferences via policy comparison  queries, we also suggest an alternative, more interpretable  representation of a policy. Namely, we design an algorithm that given an  explicit policy representation (as a mapping from states to distributions over actions), returns a weighted set of trajectories of size at most $k+1$, such that its expected return is identical to the value of the policy.\footnote{A return of a trajectory is the cumulative reward obtained in the trajectory. The expectation in the expected return of a weighted trajectory set is over the weights.} 
It immediately follows from our formalization that for any user, the personalized return of the weighted trajectory set and the personalized value of the policy are also identical.


In this work, we focus on answering two questions:

\vspace{-1.5mm}
 \begin{center}
    \textit{(1) How to find the optimal personalized policy by querying as few policy comparisons as possible?\\
    (2) How can we find a more interpretable representation of policies efficiently?}
\end{center}
\vspace{-1.5mm}
\textbf{Contributions.} In Section~\ref{sec:model}, we formalize the problem of eliciting user preferences and finding the optimal personalized policy via comparative feedback. 
As an alternative to an explicit policy representation, we propose a \emph{weighted trajectory set} as a more interpretable representation. 
In Section~\ref{sec:policy-level}, we provide an \textit{efficient} algorithm for finding an approximate optimal personalized policy, where the policies are given by their formal representations, thus answering (1).
In Section~\ref{sec:trajectory-level}, we design two efficient algorithms that find the weighted trajectory set representation of a policy. Combined with the algorithm in Section~\ref{sec:policy-level}, we have an algorithm for finding an approximate optimal personalized policy when policies are represented by weighted trajectory sets, thus answering (2). 
\textbf{Related Work.}
Multi-objective decision making has gained significant attention in recent years (see \citet{roijers13, HayesRBKMRVZDHH22MORLsurvey} for surveys). Prior research has explored various approaches, such as assuming linear preferences or Bayesian Settings, or finding an approximated Pareto frontier. However, incorporating comparison feedback (as was done for Multi-arm bandits or active learning, e.g., \citet{bengs2021preference} and \citet{kane2017active}) allows us a more comprehensive and systematic approach to handling different types of user preferences and provides (nearly) optimal personalized decision-making outcomes. 
We refer the reader to Appendix~\ref{sec:related_work} for additional related work.



\section{Problem Setup}\label{sec:model}

\textbf{Sequential decision model.} We consider a  Markov decision process (MDP) \emph{known}\footnote{If the MDP is unknown, one can approximate the MDP and apply the results of this work in the approximated MDP. More discussion is included in Appendix~\ref{app:discussion}.}  to the agent represented by a tuple $\langle\cS, \cA, s_0, P, R,H \rangle$, with finite state and action sets, $\cS$ and $\cA$, respectively, an initial state $s_0 \in \cS$, and finite horizon $H\in \mathbb{N}$. For example, in the Google Maps example a state is an intersection and actions are turns. The transition function $P:\cS\times \cA \mapsto \spl^\cS$ maps state-action pairs into a state probability distribution.
To model multiple objectives, the reward function $R:\cS\times \cA\mapsto [0,1]^k$ maps every state-action pair to a $k$-dimensional reward vector, where each component corresponds to one of  the $k$ objectives (e.g.,  road surface condition, worst and average estimated arrival time). 
The \textit{return} of a trajectory $\tau = (s_0,a_0,\ldots,s_{H-1},a_{H-1},s_H)$ is given by $\Phi(\tau) = \sum_{t=0}^{H-1} R(s_t,a_t)$.

A \textit{policy} $\pi$ is a mapping from  states to a distribution over actions. We denote the set of policies by $\Pi$. The \textit{value} of a policy $\pi$, denoted by $V^{\pi}$, is the expected cumulative reward obtained by executing the policy $\pi$ starting from the initial state, $s_0$. Put differently, the value of $\pi$ is $V^{\pi}= V^{\pi}(s_0)=\EEs{S_0 = s_0}{\sum_{t=0}^{H-1}  R(S_t, \pi(S_t))}\in [0,H]^k$, where $S_t$ is the random state at time step $t$ when executing $\pi$, and the expectation is over the randomness of $P$ and $\pi$. Note that $V^{\pi} = \EEs{\tau}{\Phi(\tau)}$, where $\tau =(s_0, \pi(s_0), S_1,\ldots, \pi(S_{H-1}),S_H)$ is a random trajectory generated by executing $\pi$.

We assume the existence of a ``do nothing'' action $a_0 \in \cA$, available only from the initial state $s_0$, 
that has zero reward for each objective $R(s_0,a_0)=\bZero$ and keeps the system in the initial state, i.e., $P(s_0 \mid s_0,a_0)=1$ (e.g., this action corresponds to not commuting or refusing to play in a chess game.)\footnote{{We remark that the ``do nothing'' action require domain-specific knowledge, for example in the Google maps example the ``do nothing'' will be to stay put and in the ChatGPT example the ``do nothing'' is to answer nothing.}}. We also define the (deterministic) ``do nothing'' policy $\pi_0$ that always selects action $a_0$ and has a value of $V^{\pi_0}=\bZero$. From a mathematical perspective, the assumption of ``do nothing'' ensures that $\bZero$ belongs to the value vector space $\{V^\pi \vert \pi\in \Pi\}$, which is precisely what we need.

Since the rewards are bounded between $[0,1]$, we have that $1\leq \norm{V^\pi}_2\leq \sqrt{k}H$ for every policy $\pi$. For convenience, we denote $\cv=\sqrt{k}H$. 
We denote by $d\leq k$ the rank of the space spanned by all the value vectors obtained by $\Pi$, i.e., $d:= \rank(\Span(\{V^{\pi} \vert \pi\in \Pi\}))$. 

\textbf{Linear preferences.} To incorporate personalized preferences over objectives, we assume each user is characterized by an unknown $k$-dimensional \textit{preference vector} $w^*\in \R_+^k$ with a bounded norm $1\leq \norm{w^*}_2\leq \cw$ for some  (unknown) $\cw\geq 1$. We avoid assuming that $\cw=1$ to accommodate for general linear rewards. Note that the magnitude of $w^*$ does not change the personalized optimal policy but affects the ``indistinguishability'' in the feedback model. By not normalizing $\norm{w^*}_2 =1$, we allow users to have varying levels of discernment. 
%
This preference vector encodes preferences over the multiple objectives and as a result, determines the user preferences over policies. 

Formally, for a user characterized by $w^*$,  the \textit{personalized value} of policy $\pi$ is $\inner{w^*}{V^{\pi}}\in \mathbb{R}_+$. We denote by $\pi^* := \argmax_{\pi\in \Pi} \inner{w^*}{V^{\pi}}$ and $v^* := \inner{w^*}{V^{\pi^*}}$  the optimal \textit{personalized policy} and its corresponding optimal personalized value for a user who is characterized by $w^*$. 
We remark that the ``do nothing'' policy $\pi_0$ (that always selects action $a_0$) has a value of $V^{\pi_0}=\bZero$, which implies a personalized value of $\inner{w^*}{V^{\pi_0}}=0$ for every $w^*$.
For any two policies $\pi_1$ and $\pi_2$, the user characterized by $w^*$ prefers $\pi_1$ over $\pi_2$ if  $\inner{w^*}{V^{\pi_1}-V^{\pi_2}}> 0$. 
Our goal is to find the optimal personalized policy for a given user using as few interactions with them as possible.



\textbf{Comparative feedback. }Given two policies $\pi_1,\pi_2$, the user returns $\pi_1\succ \pi_2$ whenever  
$\inner{w^*}{V^{\pi_1}-V^{\pi_2}}>\epsilon$; otherwise, the user returns ``indistinguishable'' (i.e., whenever  $\abs{\inner{w^*}{V^{\pi_1}-V^{\pi_2}}}\leq\epsilon$).
Here $\epsilon>0$ measures the precision of the comparative feedback and is small usually.
The agent can query the user about their policy preferences using two different types of policy representations:
\begin{enumerate}[nolistsep,leftmargin = *]
    \item \textit{Explicit policy representation of $\pi$}: 
    An explicit representation of policy as mapping
    ,  $\pi:\cS\rightarrow \text{Simplex}^\cA$. 
    \item \textit{Weighted trajectory set representation of $\pi$}: 
    A 
    $\kappa$-sized set of trajectory-weight pairs  $\{(p_i,\tj_i)\}_{i=1}^\kappa$ for some $\kappa\leq k+1$ such that (i) the weights $p_1,\ldots,p_\kappa$ are non-negative and sum to $1$; (ii) every  trajectory in the set is in the support\footnote{ We require the trajectories in this set to be in the support of the policy, to avoid trajectories that do not make sense, such as trajectories that ``teleport'' between different unconnected states (e.g., commuting at $3$ mph in Manhattan in one state and then at $40$ mph in New Orleans for the subsequent state).} of the policy $\pi$; and (iii) the expected return of these trajectories according to the weights is identical to the value of $\pi$, i.e., $V^{\pi} = \sum_{i=1}^{\kappa}p_i \Phi(\tj_i)$.
    Such comparison could be practical for humans. E.g., in the context of Google Maps, when the goal is to get from home to the airport, taking a specific route takes $40$ minutes $90\%$ of the time, but it can take $3$ hours in case of an accident (which happens w.p. $10\%$) vs. taking the subway which always has a duration of $1$ hour.
\end{enumerate}

In both cases, the feedback is identical and depends on the hidden precision parameter $\epsilon$.
As a result, the value of $\epsilon$ will affect the number of queries and how close the value of the resulting personalized policy is to the optimal personalized value. Alternatively, we can let the agent decide in advance on a maximal number of queries, which will affect the optimality of the returned policy.

\textbf{Technical Challenges.}
In Section~\ref{sec:policy-level}, we find an approximate optimal policy by $\bigO(\log\frac{1}{\epsilon})$ queries.
To achieve this, we approach the problem in two steps. Firstly, we identify a set of linearly independent policy values, and then we estimate the preference vector $w^*$ using a linear program that incorporates comparative feedback. 
The estimation error of $w^*$ usually depends on the condition number of the linear program. 
Therefore, the main challenge we face is how to search for linear independent policy values that lead to a small condition number and providing a guarantee for this estimate.

In Section~\ref{sec:trajectory-level}, we focus on how to design \emph{efficient} algorithms to find the weighted trajectory set representation of a policy.
Initially, we employ the well-known Carathéodory's theorem, which yields an inefficient algorithm with a potentially exponential running time of $\abs{S}^H$. Our main challenge lies in developing an efficient algorithm with a running time of $\bigO(\poly(H\abs{S}\abs{\cA}))$. The approach based on Carathéodory's theorem treats the return of trajectories as independent $k$-dimensional vectors, neglecting the fact that they are all generated from the same MDP. To overcome this challenge, we leverage the inherent structure of MDPs.


\section{Learning from Explicit Policies}\label{sec:policy-level}
In this section, we consider the case where the interaction with a  user is based on explicit policy comparison queries. We design an algorithm that outputs a policy being nearly optimal for this user. For multiple different users, we only need to run part of the algorithm again and again. For brevity, 
we relegate all proofs to the appendix.


If the user's preference vector $w^*$ (up to a positive scaling) is given, then one can compute the optimal policy and its personalized value efficiently, e.g., using the Finite Horizon Value Iteration algorithm. In our work, $w^*$ is unknown and we interact with the user to learn $w^*$ through comparative feedback.
Due to the structure model that is limited to meaningful  feedback only when the compared policy values differ at least $\epsilon$, the exact value of $w^*$ cannot be recovered. We proceed by providing
a high-level description of our ideas of how to estimate $w^*$. 
%
%
    
    (1) \textit{Basis policies}: We find policies $\pi_1,\ldots,\pi_d$, and their respective values,  $V^{\pi_1},\ldots,V^{\pi_d}\in[0,H]^k$, such that their  values are linearly independent and that together they span the entire space of value vectors.\footnote{Recall that $d\leq k$ is the rank of the space spanned by all the value vectors obtained by all policies.} These policies will not necessarily be personalized optimal for the current user, and instead serve only as building blocks to estimate the preference vector, $w^*$. In Section~\ref{subsec:independentvalue} we describe an algorithm that finds a set of basis policies for any given MDP.
    %
    
    (2) \textit{Basis ratios}: 
    For the basis policies, denote by $\alpha_i>0$ the ratio between the personalized value of a \textit{benchmark policy}, $\pi_1$, to the personalized value of $\pi_{i+1}$, i.e.,
    \begin{equation}\label{eq:wexact}
        \forall i\in[d-1]: \alpha_i \inner{w^*}{V^{\pi_1}}= \inner{w^*}{V^{\pi_{i+1}}}\,.
    \end{equation}
    We will estimate $\hat\alpha_i$ of $\alpha_i$ for all $i\in[d-1]$ using comparison queries.
    A detailed algorithm for estimating these ratios appears in Section~\ref{subsec:ratios}.
%
For intuition, if we obtain exact ratios $\hat{\alpha}_i=\alpha_i$ for every $i\in[d-1]$, then   we can compute the vector $\frac{w^*}{\norm{w^*}_1}$ as follows. Consider the $d-1$ equations and $d-1$ variables in Eq~\eqref{eq:wexact}. Since $d$ is the maximum number of value vectors that are linearly independent, and ${V^{\pi_{1}}},\ldots{V^{\pi_{d}}}$ form a basis, adding the equation $\norm{w}_1=1$ yields $d$ independent equations with $d$ variables, which allows us to solve for $w^*$. The details of computing an estimate of $w^*$ are described in Section~\ref{subsec:estimatew}.


\subsection{Finding a 
Basis of Policies
}\label{subsec:independentvalue}

In order to efficiently find $d$ policies with $d$ linearly independent value vectors that span the space of value vectors, one might think that selecting the $k$ policies that each optimizes one of the $k$ objectives will suffice. However, this might fail---in Appendix~\ref{app:eg-linear-dependent}, we show an instance in which these $k$ value vectors are linearly dependent even though there exist $k$ policies whose values span a space of rank $k$.
%



Moreover, our goal is to find not just any basis of policies, but a basis of policies such that (1) the personalized value of the benchmark policy $\inner{w^*}{V^{\pi_1}}$ will be large (and hence the estimation error of ratio $\alpha_i$, $\abs{\hat \alpha_i - \alpha_i}$, will be small), and (2) that the linear program generated by this basis of policies and the basis ratios will produce a good estimate of $w^*$.

\textbf{Choice of $\pi_1$.} Besides linear independence of values, another challenge is to find a basis of policies to contain a benchmark policy, $\pi_1$ (where the index $1$ is wlog) with a relatively large personalized value, $\inner{w^*}{V^{\pi_1}}$, so that $\hat \alpha_i$'s error is small (e.g., in the extreme case where $\inner{w^*}{V^{\pi_1}} = 0$, we will not be able to estimate $\alpha_i$).

For any $w\in \R^k$, we use 
$\pi^w$ denote a policy that maximizes the scalar reward $\inner{w}{R}$, i.e.,
\begin{equation}
    \pi^w = \argmax_{\pi\in \Pi} \inner{w}{V^{\pi}}\,,\label{eq:piw}
\end{equation}
and by $v^w = \inner{w}{V^{\pi^w}}$ to denote the corresponding personalized value.
Let $\be_1,\ldots,\be_k$ denote the standard basis.
To find $\pi_1$ with large personalized value $\inner{w^*}{V^{\pi_1}}$, we find policies $\pi^{\be_j}$ that maximize the $j$-th objective for every $j=1,\ldots,k$ and then query the user to compare them until we find a $\pi^{\be_j}$ with (approximately) a maximal personalized value among them. This policy will be our  benchmark policy, $\pi_1$. 
The details are described in lines~\ref{alg-line:initV1-1}--\ref{alg-line:initV1-5} of Algorithm~\ref{alg:policy-threshold-free}.

    \begin{algorithm}[H]\caption{Identification of Basis Policies}\label{alg:policy-threshold-free}
    \begin{algorithmic}[1]
        \STATE initialize $\pi^{\be^*} \leftarrow \pi^{\be_1}$\label{alg-line:initV1-1} 
        \FOR{$j=2,\ldots,k$}
        \STATE compare $\pi^{\be^*}$ and $\pi^{\be_j}$
        \STATE \textbf{if} $\pi^{\be_j}\succ \pi^{\be^*}$ \textbf{then}  $\pi^{\be^*} \leftarrow \pi^{\be_j}$
        \ENDFOR
        \STATE 
        $\pi_1 \leftarrow \pi^{\be^*}$ and $u_1 \leftarrow \frac{V^{\pi^{\be^*}}}{\norm{V^{\pi^{\be^*}}}_2}$\label{alg-line:initV1-5}
        \FOR{$i=2,\ldots, k$}
            \STATE arbitrarily pick an orthonormal basis, $\rho_1,\ldots, \rho_{k+1-i}$, 
             for $\Span(V^{\pi_1},\ldots,V^{\pi_{i-1}})^\perp$.\label{alg-line:orthonormal-basis}
            \STATE $j_{\max} \leftarrow $
            $\argmax_{j\in [k+1-i]} \max(\abs{v^{\rho_j}},\abs{v^{-\rho_j}})$.\label{alg-line:largest-component}
            \IF{$\max(\abs{v^{\rho_{j_{\max}}}},\abs{v^{-\rho_{j_{\max}}}})>0$}
                \STATE $\pi_i \leftarrow\pi^{\rho_{j_{\max}}}$ \textbf{if} $\abs{v^{\rho_{j_{\max}}}}>\abs{v^{-\rho_{j_{\max}}}}$; \textbf{otherwise}  $\pi_i \leftarrow\pi^{-\rho_{j_{\max}}}$. $u_i \leftarrow \rho_{j_{\max}}$\label{alg-line: returnedu}
            \ELSE
                \STATE \textbf{return} $(\pi_1,\pi_2,\ldots)$, $(u_1,u_2,\ldots)$\label{alg-line:end}
            \ENDIF
        \ENDFOR
    \end{algorithmic}
\end{algorithm}

\textbf{Choice of $\pi_2,\ldots,\pi_d$. }After finding $\pi_1$, we next search the remaining $d-1$ polices $\pi_2,\ldots,\pi_d$ sequentially (lines \ref{alg-line:orthonormal-basis}--\ref{alg-line:end} of Algorithm~\ref{alg:policy-threshold-free}).
For $i=2,\ldots, d$, we find a direction $u_i$ such that (i) the vector $u_i$ is orthogonal to the space of current value vectors $\Span(V^{\pi_1},\ldots,V^{\pi_{i-1}})$, and (ii) there exists a policy $\pi_i$ such that $V^{\pi_i}$ has a significant component in the direction of $u_i$.
Condition (i) is used to guarantee that the policy $\pi_i$ has a value vector linearly independent of $\Span(V^{\pi_1},\ldots,V^{\pi_{i-1}})$.
Condition (ii) is used to cope with the error caused by inaccurate approximation of the ratios $\hat \alpha_i$.
Intuitively, when $\norm{\alpha_{i} V^{\pi_1}- V^{\pi_{i+1}}}_2\ll \epsilon$, the angle between $\hat\alpha_{i} V^{\pi_1}- V^{\pi_{i+1}}$ and $\alpha_{i} V^{\pi_1}- V^{\pi_{i+1}}$ could be very large, which results in an inaccurate estimate of $w^*$ in the direction of $\alpha_i V^{\pi_1}- V^{\pi_{i+1}}$.
For example, if $V^{\pi_1} = \be_1$ and $V^{\pi_i} = \be_1 + \frac{1}{w^*_i}\epsilon \be_i$ for $i=2,\ldots, k$, then $\pi_1, \pi_i$ are ``indistinguishable'' and the estimate ratio $\hat\alpha_{i-1}$ can be $1$. Then the estimate of $w^*$ by solving linear equations in Eq~\eqref{eq:wexact} is $(1,0,\ldots,0)$, which could be far from the true $w^*$.
Finding $u_i$'s in which policy values have a large component can help with this problem.

Algorithm~\ref{alg:policy-threshold-free} provides a more detailed description of this procedure.
Note that if there are $n$ different users, we will run Algorithm 1 at most $k$ times instead of $n$ times. The reason is that Algorithm 1 only utilizes preference comparisons while searching for $\pi_1$ (lines~\ref{alg-line:initV1-1}-\ref{alg-line:initV1-5}), and not for $\pi_2, \ldots, \pi_k$ (which contributes to the $k^2$ factor in computational complexity). As there are at most $k$ candidates for $\pi_1$, namely $\pi^{e_1}, \ldots, \pi^{e_k}$, we execute lines~\ref{alg-line:initV1-1}-\ref{alg-line:initV1-5} of Algorithm~\ref{alg:policy-threshold-free} for $n$ rounds and lines~\ref{alg-line:orthonormal-basis}-\ref{alg-line:end} for only $k$ rounds. 


\subsection{Computation of Basis Ratios}\label{subsec:ratios}
As we have mentioned before, comparing basis policies alone does not allow for the exact computation of the $\alpha_i$ ratios as comparing $\pi_1,\pi_i$ can only reveal which is better but not how much.
To this end, we will use the ``do nothing'' policy to approximate every ratio $\alpha_i$ up to some additive error $\abs{\hat \alpha_i - \alpha_i}$ using binary search over the parameter $\hat{\alpha}_i\in [0,C_\alpha]$ for some $C_\alpha\geq 1$ (to be determined later) and comparison queries of policy $\pi_{i+1}$ with policy $\hat{\alpha}_i\pi_1 + (1-\hat{\alpha}_i)\pi_0$ if $\hat \alpha_i\leq 1$ (or comparing $\pi_1$ and $\frac{1}{\hat \alpha_i}\pi_{i+1} + (1-\frac{1}{\hat \alpha_i})\pi_0$ instead if $\hat \alpha_i>1$).\footnote{We write $\hat{\alpha}_i\pi_1+ (1-\hat{\alpha}_i)\pi_0$ to indicate that $\pi_1$ is used with probability $\hat{\alpha}_i$, and that $\pi_0$ is used with probability $1-\hat{\alpha}_i$.}  
Notice that the personalized value of $\hat{\alpha}_i\pi_1 + (1-\hat{\alpha}_i)\pi_0$ is identical to the personalized value of $\pi_1$ multiplied by $\hat{\alpha}_i$. We stop once $\hat{\alpha}_i$ is such that  the user returns ``indistinguishable''.
    Once we stop, the two policies have roughly the same personalized value,
    \begin{align}
        \text{if }\hat \alpha_i \leq 1, \abs{\hat\alpha_i\inner{w^*}{V^{\pi_1}}-\inner{w^*}{V^{\pi_{i+1}}}}\leq \epsilon\,;
        \;\; \text{if }\hat \alpha_i > 1, \abs{\inner{w^*}{V^{\pi_1}}-\frac{1}{\hat\alpha_i}\inner{w^*}{V^{\pi_{i+1}}}}\leq \epsilon.
        \label{eq:stopcond}
    \end{align}
     Eq~\eqref{eq:wexact} combined with the above inequality implies that 
    $\abs{\hat \alpha_i - \alpha_i}\inner{w^*}{V^{\pi_1}} \leq \calpha\epsilon$. Thus, the approximation error of each ratio is bounded by $\abs{\hat \alpha_i -\alpha_i} \leq \frac{\calpha \epsilon}{\inner{w^*}{V^{\pi_1}}}$.
    To make sure the procedure will terminate, we need to set $\calpha \geq \frac{v^*}{\inner{w^*}{V^{\pi_1}}}$ since $\alpha_i$'s must lie in the interval $[0,\frac{v^*}{\inner{w^*}{V^{\pi_1}}}]$.
    Upon stopping binary search once Eq~\eqref{eq:stopcond} holds, it takes at most $\bigO(d\log(\calpha \inner{w^*}{V^{\pi_1}}/\epsilon))$  comparison queries to estimate all the $\alpha_i$'s.

Due to the carefully picked $\pi_1$ in Algorithm~\ref{alg:policy-threshold-free}, we can upper bound $\frac{v^*}{\inner{w^*}{V^{\pi_1}}}$ by $2k$ and derive an upper bound for $\abs{\hat \alpha_i - \alpha_i}$ by selecting  $\calpha = 2k$.
\begin{restatable}{lemma}{lmmVinit}\label{lmm:V1}
When $\epsilon \leq \frac{v^*}{2k}$, we have $\frac{v^*}{\inner{w^*}{V^{\pi_1}}} \leq  2k$
, and 
$\abs{\hat \alpha_i - \alpha_i}\leq \frac{4k^2\epsilon}{v^*}$ for every $i\in [d]$.
\end{restatable}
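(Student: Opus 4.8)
The plan is to reduce both claims to a single lower bound on the personalized value of the benchmark policy, namely $\inner{w^*}{V^{\pi_1}}\geq \frac{v^*}{2k}$. Granting this, the first claim follows by dividing: $\frac{v^*}{\inner{w^*}{V^{\pi_1}}}\leq \frac{v^*}{v^*/(2k)}=2k$. For the second claim I would invoke the ratio-error bound $\abs{\hat\alpha_i-\alpha_i}\leq \frac{\calpha\epsilon}{\inner{w^*}{V^{\pi_1}}}$ already derived in the discussion preceding the lemma, and set $\calpha=2k$. This choice is legitimate precisely because $\frac{v^*}{\inner{w^*}{V^{\pi_1}}}\leq 2k$, so the required condition $\calpha\geq \frac{v^*}{\inner{w^*}{V^{\pi_1}}}$ (which guarantees all $\alpha_i$ lie in the binary-search interval $[0,\calpha]$) holds. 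Substituting $\calpha=2k$ and $\inner{w^*}{V^{\pi_1}}\geq \frac{v^*}{2k}$ then yields $\abs{\hat\alpha_i-\alpha_i}\leq \frac{2k\epsilon}{v^*/(2k)}=\frac{4k^2\epsilon}{v^*}$ for every $i$.

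So the crux is establishing $\inner{w^*}{V^{\pi_1}}\geq \frac{v^*}{2k}$ under $\epsilon\leq \frac{v^*}{2k}$, which I would split into two steps. First, I would analyze the tournament in lines~\ref{alg-line:initV1-1}--\ref{alg-line:initV1-5} and show that it returns a policy whose personalized value is within $\epsilon$ of the best coordinate-maximizer, i.e.\ $\inner{w^*}{V^{\pi_1}}\geq \max_{j\in[k]}\inner{w^*}{V^{\pi^{\be_j}}}-\epsilon$. The key observation is that the running champion's personalized value is monotonically non-decreasing, since the champion is replaced only when a challenger beats it by more than $\epsilon$, so each update strictly increases the champion's value. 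Letting $j^\star=\argmax_{j}\inner{w^*}{V^{\pi^{\be_j}}}$ and examining the iteration that processes $\pi^{\be_{j^\star}}$: either $\pi^{\be_{j^\star}}$ becomes champion (and, having maximal value among the candidates, is never displaced thereafter), or it failed to beat the then-current champion, whose value was therefore at least $\inner{w^*}{V^{\pi^{\be_{j^\star}}}}-\epsilon$. In both cases, monotonicity transfers the bound to the final $\pi_1$.

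Second, I would lower bound $\max_{j}\inner{w^*}{V^{\pi^{\be_j}}}$ by $\frac{v^*}{k}$ using non-negativity. Since $w^*\in\R_+^k$ and every value vector lies in $[0,H]^k$, for each $j$ I can discard all but the $j$-th summand, $\inner{w^*}{V^{\pi^{\be_j}}}\geq w^*_j V^{\pi^{\be_j}}_j$, and because $\pi^{\be_j}$ maximizes the $j$-th coordinate of the value we have $V^{\pi^{\be_j}}_j\geq V^{\pi^*}_j$, giving $\inner{w^*}{V^{\pi^{\be_j}}}\geq w^*_j V^{\pi^*}_j$. Comparing the maximum with the average over $j$ then yields $\max_j\inner{w^*}{V^{\pi^{\be_j}}}\geq \frac{1}{k}\sum_j w^*_j V^{\pi^*}_j=\frac{1}{k}\inner{w^*}{V^{\pi^*}}=\frac{v^*}{k}$. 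Combining the two steps with the hypothesis $\epsilon\leq \frac{v^*}{2k}$ gives $\inner{w^*}{V^{\pi_1}}\geq \frac{v^*}{k}-\epsilon\geq \frac{v^*}{2k}$, as desired.

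The main obstacle is the tournament analysis in the first step: because the $\epsilon$-threshold comparison is non-transitive, one cannot simply assert that the surviving policy is the global maximizer among the $\pi^{\be_j}$. The monotonicity-of-the-champion argument is what cleanly circumvents this, and it is the step I would treat most carefully; the remaining pieces are elementary non-negativity manipulations and direct substitution.
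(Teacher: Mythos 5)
Your proposal is correct and follows essentially the same route as the paper's proof: both reduce everything to the bound $\inner{w^*}{V^{\pi_1}}\geq \frac{v^*}{2k}$, established by (i) showing the tournament winner is within $\epsilon$ of $\max_j\inner{w^*}{V^{\pi^{\be_j}}}$ and (ii) an averaging argument giving $\max_j\inner{w^*}{V^{\pi^{\be_j}}}\geq \frac{v^*}{k}$, then plug this into the ratio-error bound with $\calpha=2k$. The only cosmetic difference is that the paper proves step (i) by induction on $k$, whereas you use the monotonicity-of-the-champion argument with the critical iteration $j^\star$ — the same underlying mechanism, packaged non-inductively.
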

In what follows we set $\calpha = 2k$. The pseudo code of the above process of estimating $\alpha_i$'s is deferred to Algorithm~\ref{alg:alpha} in Appendix~\ref{app:bin-search}. 

\subsection{Preference Approximation and Personalized  Policy}\label{subsec:estimatew}
We move on to present an algorithm that estimates $w^*$ and calculates a nearly optimal personalized policy.
Given the $\pi_i$'s returned by Algorithm~\ref{alg:policy-threshold-free} and the $\hat \alpha_i$'s returned by Algorithm~\ref{alg:alpha}, consider a matrix $\hat A \in \R^{d\times k}$ with $1$st row $V^{\pi_1\top}$ and the $i$th row  $(\hat \alpha_{i-1} V^{\pi_1} -V^{\pi_i})^\top$ for every $i=2,\ldots,d$.
%
%
%
%
Let $\hat w$ be a solution to $\hat A x = \be_1$. 
We will show that $\hat w$ is a good estimate of $w':=\frac{w^*}{\inner{w^*}{V^{\pi_1}}}$ and that $\pi^{\hat w}$ is a nearly optimal personalized policy.
In particular, when $\epsilon$ is small, we have $\abs{\inner{\hat w}{V^\pi} - \inner{w'}{ V^\pi}} = \bigO(\epsilon^\frac{1}{3})$ for every policy $\pi$.
Putting this together, we derive the following theorem.
\begin{restatable}{theorem}{thmwithouttau}\label{thm:est-without-tau}
    Consider the algorithm of computing $\hat A$
    and any solution $\hat w$ to $\hat A x = \be_1$ and outputting the policy $\pi^{\hat w} = \argmax_{\pi\in \Pi} \inner{\hat w}{V^{\pi}}$, which is the optimal personalized policy for preference vector $\hat w$.
    Then the output policy $\pi^{\hat w}$ satisfying that 
    $v^*-\inner{w^*}{V^{\pi^{\hat w}}}\leq   \bigO\left(\left(\sqrt{k} +1\right)^{d+ \frac{14}{3}}  \epsilon^\frac{1}{3}\right)$
     using $\bigO(k\log(k/\epsilon))$ comparison queries.
\end{restatable}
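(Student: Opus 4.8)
The plan is to bound the suboptimality of $\pi^{\hat w}$ by the worst error with which the estimated preference vector reproduces policy values, and then to control that error through the triangular structure that Algorithm~\ref{alg:policy-threshold-free} imposes on the defining linear system. Write $w' = w^*/\inner{w^*}{V^{\pi_1}}$, so that $v^* - \inner{w^*}{V^{\pi^{\hat w}}} = \inner{w^*}{V^{\pi_1}}\bigl(\inner{w'}{V^{\pi^*}} - \inner{w'}{V^{\pi^{\hat w}}}\bigr)$. First I would insert and cancel $\hat w$: since $\pi^{\hat w}$ maximizes $\inner{\hat w}{\cdot}$, the quantity $\inner{\hat w}{V^{\pi^*}} - \inner{\hat w}{V^{\pi^{\hat w}}}$ is nonpositive, leaving $\inner{w'}{V^{\pi^*}} - \inner{w'}{V^{\pi^{\hat w}}} \le 2E$, where $E := \max_{\pi\in\Pi}\abs{\inner{\hat w - w'}{V^\pi}}$. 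Using $\inner{w^*}{V^{\pi_1}} \le v^*$, the entire task reduces to bounding $E$; the suboptimality is then at most $2v^* E$.

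Next I would pass to the $d$-dimensional span $U = \Span(V^{\pi_1},\ldots,V^{\pi_d})$ using the orthonormal directions $u_1,\ldots,u_d$ returned by Algorithm~\ref{alg:policy-threshold-free}, which form an orthonormal basis of $U$. Two structural facts drive everything. (i) For every policy, $\abs{\inner{u_\ell}{V^\pi}} \le m_\ell$, where $m_\ell := \max_{\pi\in\Pi}\abs{\inner{u_\ell}{V^\pi}}$ is exactly the quantity maximized in line~\ref{alg-line:largest-component}; hence $E \le \sum_{\ell=1}^d m_\ell\,\abs{\inner{\hat w - w'}{u_\ell}}$. (ii) Because $u_\ell \perp \Span(V^{\pi_1},\ldots,V^{\pi_{\ell-1}})$ and $u_1 \parallel V^{\pi_1}$, the matrices $A$ (with rows $V^{\pi_1}$ and $\alpha_{i-1}V^{\pi_1}-V^{\pi_i}$) and $\hat A$ (same with $\hat\alpha$), expressed in the basis $u_1,\ldots,u_d$, are both lower triangular with identical diagonal $B_{11}=\norm{V^{\pi_1}}_2$ and $\abs{B_{jj}}=m_j$; moreover $A w' = \be_1$ holds exactly by Eq.~\eqref{eq:wexact}, $\hat A\hat w = \be_1$, and the perturbation $A-\hat A$ lives only in the first column, with entries $(\alpha_{j-1}-\hat\alpha_{j-1})\norm{V^{\pi_1}}_2$.

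Let $\delta$ be the coordinate vector of $\hat w - w'$ in this basis, so $E \le \sum_\ell m_\ell\abs{\delta_\ell}$ and $\delta$ solves the triangular system $\hat B\delta = g$ with $g_1=0$ and $g_j = \alpha_{j-1}-\hat\alpha_{j-1}$ for $j\ge 2$. Forward substitution gives $\delta_1=0$ and, using $\delta_1=0$ to drop the $\ell=1$ term,
\[
m_j\abs{\delta_j} \;\le\; \abs{g_j} + \sum_{2\le \ell<j}\abs{\hat B_{j\ell}}\,\abs{\delta_\ell}.
\]
The key cancellation is that for $\ell\ge 2$ one has $\abs{\hat B_{j\ell}} = \abs{\inner{u_\ell}{V^{\pi_j}}} \le m_\ell$, since $u_\ell\perp V^{\pi_1}$. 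Writing $\tilde\eta_j := m_j\abs{\delta_j}$, this yields $\tilde\eta_j \le \abs{g_j} + \sum_{\ell<j}\tilde\eta_\ell$, whence $\tilde\eta_j \le 2^{\,j-2}\max_i\abs{g_i}$ and $E \le \sum_j\tilde\eta_j$. Combined with Lemma~\ref{lmm:V1}'s estimate $\abs{\alpha_i-\hat\alpha_i}\le 4k^2\epsilon/v^*$, this controls $E$ and hence $2v^* E$, yielding a bound of order $\cO(2^d k^2\,\epsilon)$ that in particular implies the stated estimate. The query count is $k-1$ comparisons to select $\pi_1$ plus the $\cO(d\log(k/\epsilon))$ binary searches of Section~\ref{subsec:ratios}, totaling $\cO(k\log(k/\epsilon))$.

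The delicate point—and the step I expect to be the main obstacle—is controlling how the ratio errors propagate through a system whose conditioning degrades as some $m_\ell\to 0$ (the ``near-indistinguishable'' directions of the intuition example in Section~\ref{subsec:independentvalue}). A naive perturbation bound on $\hat A^{-1}$ scales like $\prod_\ell m_\ell^{-1}$, the inverse smallest singular value, and to tame it one would threshold the directions with small $m_\ell$ and balance the two resulting error contributions—precisely the optimization that produces a fractional power of $\epsilon$ and an exponential-in-$d$ prefactor such as $(\sqrt k+1)^{d+14/3}\epsilon^{1/3}$. The refinement above—that the off-diagonal entries are themselves bounded by $m_\ell$ and that the objective $E$ reweights each coordinate error by the same $m_\ell$—is what prevents the ill-conditioned directions from hurting the \emph{functional} error even though they may leave $\norm{\hat w - w'}_2$ large, and making this cancellation precise (and verifying the triangular/diagonal structure of $\hat B$) is where I expect the care to be concentrated.
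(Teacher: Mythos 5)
Your argument hinges on the claim that the directions $u_1,\ldots,u_d$ returned by Algorithm~\ref{alg:policy-threshold-free} ``form an orthonormal basis of $U=\Span(V^{\pi_1},\ldots,V^{\pi_d})$,'' and that claim is false. The algorithm only guarantees $u_\ell\perp\Span(V^{\pi_1},\ldots,V^{\pi_{\ell-1}})$; it does \emph{not} place $u_\ell$ inside $\Span(V^{\pi_1},\ldots,V^{\pi_\ell})$, so the $u_\ell$ are in general neither mutually orthogonal nor a spanning set for $U$. Concretely, suppose $V^{\pi_1}=\be_1$, line~\ref{alg-line:largest-component} selects $u_2=\be_2$, and $V^{\pi_2}=(\tfrac12,1,\tfrac{3}{10})$; then $\Span(V^{\pi_1},V^{\pi_2})^\perp$ is spanned by $u_3\propto(0,-\tfrac{3}{10},1)$, which is not orthogonal to $u_2$, and $\Span(u_1,u_2)\neq\Span(V^{\pi_1},V^{\pi_2})$. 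This breaks both of your structural steps: the inequality $E\le\sum_{\ell}m_\ell\abs{\inner{\hat w-w'}{u_\ell}}$ presumes the expansion $V^\pi=\sum_\ell \inner{u_\ell}{V^\pi}u_\ell$, and the system $\hat B\delta=g$ for the ``coordinates'' $\delta_\ell=\inner{u_\ell}{\hat w-w'}$ presumes $\inner{\hat A_j}{x}=\sum_\ell\inner{u_\ell}{\hat A_j}\inner{u_\ell}{x}$; neither identity holds for a non-orthonormal, non-spanning family. (Your matrix facts themselves---triangularity, $\abs{B_{jj}}=m_j$, $\abs{B_{j\ell}}\le m_\ell$---are true of the array $\inner{u_\ell}{\hat A_j}$, but that array is then not a coordinate representation of $\hat A$, so forward substitution on it says nothing about $\hat w-w'$.)

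The skeleton is repairable, and the repaired proof is genuinely different from the paper's. Replace $u_\ell$ by the Gram--Schmidt direction $\psi_\ell$ (the normalized projection of $V^{\pi_\ell}$ onto $\Span(V^{\pi_1},\ldots,V^{\pi_{\ell-1}})^\perp$, exactly as in the paper's proof of Lemma~\ref{lmm:est-without-tau}): these do form an orthonormal basis of $U$, every $V^\pi$ lies in $U$ by the algorithm's termination rule, triangularity survives, and still $g_1=0$, $g_j=\alpha_{j-1}-\hat\alpha_{j-1}$. What changes are the entry bounds: the diagonal becomes $\abs{\hat B_{jj}}=\epsilon_j:=\abs{\inner{\psi_j}{V^{\pi_j}}}\ge m_j$, and the max-component property transfers to $\psi_\ell$ only with a $\sqrt k$ loss (the paper's Eq~\eqref{eq:psi-u}): $\abs{\inner{\psi_\ell}{V^{\pi}}}\le\sqrt k\, m_\ell\le \sqrt k\,\epsilon_\ell$ for every $\pi$. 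Your recursion then reads $\tilde\eta_j\le\abs{g_j}+\sqrt k\sum_{\ell<j}\tilde\eta_\ell$ with $\tilde\eta_\ell=\epsilon_\ell\abs{\delta_\ell}$, so your $2^{\,j-2}$ becomes $(\sqrt k+1)^{j-2}$, and one gets $E\le\bigO\bigl((\sqrt k+1)^{d}\ealpha\bigr)$, hence suboptimality $\bigO\bigl(k(\sqrt k+1)^{d}\epsilon\bigr)$ after multiplying by $\inner{w^*}{V^{\pi_1}}$ and using $\inner{w^*}{V^{\pi_1}}\,\ealpha\le 2k\epsilon$. Note how far this is from the paper's route: the paper introduces a truncation threshold, the truncated matrix $\hatAd$ and its minimum-norm solution $\hatwd$, controls $\sup_\pi\abs{\inner{\hatwd-w'}{V^\pi}}$ by subspace-angle perturbation (Claim~\ref{clm:spanangle}, which pays a factor $1/\delta^2$) plus a $\sqrt k\delta$ truncation term, runs an induction like yours only on the tail rows (Lemma~\ref{lmm:est-without-tau}), and finally balances the two error sources---that balancing is the sole origin of the $\epsilon^{1/3}$. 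Your reweighting of each coordinate error by $\epsilon_\ell$, applied to the error vector $\hat w-w'$ over \emph{all} rows with source term $\ealpha$, sidesteps the truncation entirely and is linear in $\epsilon$; the paper's Appendix~\ref{app:eps-dependence} verifies precisely this cancellation in a three-dimensional example and states it is unclear whether it holds in general. So the fixed argument would not merely prove the theorem but strengthen its $\epsilon$-dependence, which is all the more reason to write out the induction and the basis-change bookkeeping rigorously before trusting it.
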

%
%

\textbf{Computation Complexity} We remark that Algorithm~\ref{alg:policy-threshold-free} solves Eq~\eqref{eq:piw} for the optimal policy in scalar reward MDP at most $\bigO(k^2)$ times. Using, e.g., Finite Horizon Value iteration to solve for the optimal policy takes $\bigO(H|\cS|^2|\cA|)$ steps. However, while the time complexity it takes to return the optimal policy for a single user is
$\bigO(k^2H|\cS|^2 |\cA|+ k\log(\frac{k }{\epsilon}))$,
considering $n$ different users rather than one results in overall time complexity of 
$\bigO((k^3+n)H|\cS|^2 |\cA|+nk\log(\frac{k}{\epsilon}))$.

\textbf{Proof Technique.} 
The standard technique typically starts by deriving an upper bound on $\norm{\hat w - w^*}_2$ and then uses this bound to upper bound $\sup_\pi \abs{\inner{\hat w }{V^\pi} - \inner{w^*}{V^\pi}}$ as $\cv \norm{\hat w - w^*}_2$. However, this method fails to achieve a non-vacuous bound in case there are two basis policies that are close to each other. For instance, consider the returned basis policy values: $V^{\pi_1} =(1,0,0)$, $V^{\pi_2} =(1,1,0)$, and $V^{\pi_3} = (1, 0,\eta)$ for some $\eta>0$. When $\eta$ is extremely small, the estimate $\hat w$ becomes highly inaccurate in the direction of $(0,0,1)$, leading to a large $\norm{\hat w-w^*}_2$. Even in such cases, we can still obtain a non-vacuous guarantee since the selection of $V^{\pi_3}$ (line~\ref{alg-line: returnedu} of Algorithm~\ref{alg:policy-threshold-free}) implies that no policy in $\Pi$ exhibits a larger component in the direction of $(0,0,1)$ than $\pi_3$.

\textbf{Proof Sketch.} The analysis of Theorem~\ref{thm:est-without-tau} has two parts.
First, as mentioned in Sec~\ref{subsec:independentvalue}, when $\norm{\alpha_{i} V^{\pi_1}- V^{\pi_{i+1}}}_2\ll \epsilon$, the error of $\hat\alpha_{i+1}$ can lead to inaccurate estimate of $w^*$ in direction $\alpha_i V^{\pi_1}- V^{\pi_{i+1}}$.
Thus, we consider another estimate of $w^*$ based only on some $\pi_{i+1}$'s with a relatively large $\norm{\alpha_{i} V^{\pi_1}- V^{\pi_{i+1}}}_2$.
In particular, for any $\delta>0$, let $d_\delta := \min_{i\geq 2: \max(\abs{v^{u_i}},\abs{v^{-u_i}}) \leq \delta} i-1$.
That is to say, for $i=2,\ldots,d_\delta$, the policy $\pi_i$ satisfies $\inner{u_i}{V^{\pi_i}}>\delta$ and for any policy $\pi$, we have $\inner{u_{d_\delta+1}}{V^\pi} \leq \delta$.
Then, for any policy $\pi$ and any unit vector $\xi \in \Span(V^{\pi_1},\ldots,V^{\pi_{d_\delta}})^\perp$, we have $\inner{\xi}{V^\pi} \leq \sqrt{k} \delta$.
This is because at round ${d_\delta+1}$, we pick an orthonormal basis $\rho_1,\ldots,\rho_{k-{d_\delta}}$ of $\Span(V^{\pi_1},\ldots,V^{\pi_{d_\delta}})^\perp$  (line~\ref{alg-line:orthonormal-basis} in Algorithm~\ref{alg:policy-threshold-free}) and pick $u_{d_\delta+1}$ to be the one in which there exists a policy with the largest component as described in line~\ref{alg-line:largest-component}.
Hence, $\abs{\inner{\rho_j}{V^\pi}}\leq \delta$ for all $j\in [k-{d_\delta}]$.
Then, we have $\inner{\xi}{V^\pi}= \sum_{j=1}^{k-{d_\delta}} \inner{\xi}{\rho_j}\inner{\rho_j}{V^\pi} \leq \sqrt{k}\delta$ by Cauchy-Schwarz inequality.
Let $\hat A^{(\delta)} \in \R^{d_\delta \times k}$ be the
sub-matrix comprised of the first $d_\delta$ rows of $\hat A$.
Then we consider an alternative estimate $\hatwd =\argmin_{x:\hat A^{(\delta)} x =\be_1}\norm{x}_2$, the minimum norm solution of $x$ to $\hat A^{(\delta)} x=\be_1$. We upper bound $\sup_\pi \abs{\inner{\hat w^{(\delta)}}{V^\pi}-\inner{w' }{V^\pi}}$ in Lemma~\ref{lmm:gap-hatw-w} and $\sup_\pi \abs{\inner{\hat w }{V^\pi} - \inner{\hat w^{(\delta)}}{V^\pi}}$ in Lemma~\ref{lmm:est-without-tau}. Then we are done with the proof of Theorem~\ref{thm:est-without-tau}.

%
%
%
%
\begin{restatable}{lemma}{lmmhatww}\label{lmm:gap-hatw-w}
    If $\abs{\hat \alpha_i-\alpha_i}\leq \ealpha$ and $\alpha_i\leq \calpha$ for all $i\in [d-1]$, for every $\delta \geq 4 \calpha^{\frac{2}{3}}\cv d^{\frac{1}{3}}\ealpha^\frac{1}{3}$, 
    we have
    $\abs{\inner{\hat w ^{(\delta)}}{V^\pi} - \inner{w'}{ V^\pi}}\leq \bigO( \frac{\calpha\cv^4d_\delta^{\frac{3}{2}}\norm{w'}_2^2\ealpha }{\delta^2}+\sqrt{k} \delta \norm{w'}_2)$ for all $\pi$, where $w'=\frac{w^*}{\inner{w^*}{V^{\pi_1}}}$.
\end{restatable}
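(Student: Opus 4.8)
The plan is to compare $\hatwd$ with $w'$ by splitting the error into a component inside $U:=\Span(V^{\pi_1},\dots,V^{\pi_{d_\delta}})$ and a component orthogonal to it, and to observe that the orthogonal part is exactly the second term of the bound. First note that $w'$ is the exact analogue of $\hatwd$: dividing Eq.~\eqref{eq:wexact} by $\inner{w^*}{V^{\pi_1}}$ gives $\inner{w'}{V^{\pi_1}}=1$ and $\inner{w'}{V^{\pi_i}}=\alpha_{i-1}$ for $2\le i\le d_\delta$. Writing $w'=\Proj_U w'+w'_\perp$, the proof-sketch fact that $\inner{\xi}{V^\pi}\le\sqrt k\,\delta$ for every unit $\xi\in U^\perp$ and every $\pi$ gives $\abs{\inner{w'_\perp}{V^\pi}}\le\sqrt k\,\delta\,\norm{w'}_2$, the second term. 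It remains to control the in-$U$ error $g:=\hatwd-\Proj_U w'$.

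The key observation is that $\hatwd$ and $\Proj_U w'$ are minimum-norm solutions of two systems that share the same data. Let $B\in\R^{d_\delta\times k}$ have rows $V^{\pi_1},\dots,V^{\pi_{d_\delta}}$. The rows of $\hatAd$ are $V^{\pi_1}$ and $\hat\alpha_{i-1}V^{\pi_1}-V^{\pi_i}$, which are an invertible (determinant $\pm1$) linear image of the rows of $B$; hence $\rank\hatAd=d_\delta$ and its row space is exactly $U$. Consequently $\hatwd\in U$, and reading off $\hatAd\hatwd=\be_1$ gives $\inner{\hatwd}{V^{\pi_1}}=1$ and $\inner{\hatwd}{V^{\pi_i}}=\hat\alpha_{i-1}$. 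Thus $\hatwd$ is the unique vector of $U$ with $B\hatwd=\hat t$, i.e.\ $\hatwd=B^+\hat t$ with $\hat t=(1,\hat\alpha_1,\dots,\hat\alpha_{d_\delta-1})^\top$, while $\Proj_U w'=B^+t$ with $t=(1,\alpha_1,\dots,\alpha_{d_\delta-1})^\top$. So the perturbation sits entirely in the right-hand side:
\[
g=B^+(\hat t-t),\qquad \norm{\hat t-t}_2\le\sqrt{d_\delta}\,\ealpha .
\]
Since $g\in U$, for any $\pi$ we have $\inner{g}{V^\pi}=\inner{g}{\Proj_U V^\pi}$; writing $\Proj_U V^\pi=B^+\beta$ with $\beta=(\inner{V^{\pi_i}}{V^\pi})_i$ (so $\norm\beta_2\le\sqrt{d_\delta}\,\cv^2$) yields $\inner{g}{V^\pi}=(\hat t-t)^\top(BB^\top)^{-1}\beta$, whence
\[
\abs{\inner{g}{V^\pi}}\le\norm{\hat t-t}_2\,\big\|(BB^\top)^{-1}\big\|_2\,\norm\beta_2\le\frac{d_\delta\,\cv^2\,\ealpha}{\sigma_{\min}(B)^2}.
\]

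It remains to lower bound $\sigma_{\min}(B)$, and this is the step I expect to be the crux. The control comes from the selection rule of Algorithm~\ref{alg:policy-threshold-free}: with $u_1=V^{\pi_1}/\norm{V^{\pi_1}}_2$ and the orthonormal directions $u_i$ chosen in line~\ref{alg-line: returnedu}, the matrix $M:=(\inner{u_i}{V^{\pi_j}})_{i,j\le d_\delta}$ is upper triangular, because $u_i\perp\Span(V^{\pi_1},\dots,V^{\pi_{i-1}})$; its diagonal entries satisfy $\inner{u_i}{V^{\pi_i}}>\delta$ for $i\le d_\delta$, and since $\pi_i$ maximizes $\abs{\inner{u_i}{V^\pi}}$ over all policies, every off-diagonal entry in row $i\ge2$ obeys $\abs{\inner{u_i}{V^{\pi_j}}}\le\inner{u_i}{V^{\pi_i}}$. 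This per-row dominance by diagonals exceeding $\delta$ — together with the benchmark row $1$, whose entries are only bounded by $\cv$, and the bound $\alpha_i\le\calpha$ — is exactly what keeps the condition number of $B$ from degenerating, and a careful accounting of $M^{-1}$ yields $\sigma_{\min}(B)\gtrsim\delta/\poly(\calpha,\cv,d_\delta)$. Substituting into the display, the polynomial factors $\calpha,\cv,d_\delta,\norm{w'}_2$ accumulating from the norm bounds ($\norm{V^\pi}_2\le\cv$, $\alpha_i\le\calpha$, $\norm{\Proj_U w'}_2\le\norm{w'}_2$; alternatively routing the perturbation through $\hatAd$ versus $A^{(\delta)}$ by a Wedin-type pseudoinverse bound), one obtains the first term $\OO(\calpha\cv^4 d_\delta^{3/2}\norm{w'}_2^2\ealpha/\delta^2)$, and adding the orthogonal contribution completes the proof. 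The delicate point — and the reason for the hypothesis $\delta\ge4\calpha^{2/3}\cv d^{1/3}\ealpha^{1/3}$ — is that this conditioning estimate is only meaningful once $\delta$ dominates the ratio-estimation error $\ealpha$, so that the ``large-component'' directions captured by the first $d_\delta$ rows are genuinely separated from the discarded ones; establishing the polynomial (rather than exponential) growth of $M^{-1}$ under sign-unconstrained off-diagonals is the technical heart of the argument.
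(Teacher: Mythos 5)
Your reduction of the in-span error to a right-hand-side perturbation is correct and cleanly done: $\hatwd$ does lie in $U=\Span(V^{\pi_1},\ldots,V^{\pi_{d_\delta}})$ and equals $B^+\hat t$, $\Proj_U w'=B^+t$, and the orthogonal component contributes exactly the $\sqrt{k}\,\delta\norm{w'}_2$ term, by the same argument the paper uses. The gap is the step you yourself flag as the crux: the bound $\sigma_{\min}(B)\gtrsim\delta/\poly(\calpha,\cv,d_\delta)$. This is not merely unproven; it does not follow from the structure you cite, and is false in general for matrices with that structure. The properties you extract ($M$ upper triangular, diagonals exceeding $\delta$, every off-diagonal entry of row $i\geq 2$ dominated in absolute value by that row's diagonal) are all satisfied by the Kahan-type matrix with $M_{ii}=\delta$ and $M_{ij}=-\delta$ for $j>i$: solving $Mx=\delta\,\be_{d_\delta}$ gives $x_i=2^{d_\delta-i-1}$ for $i<d_\delta$, so $\sigma_{\min}(M)\leq \delta\, 2^{-(d_\delta-2)}$. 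Row-dominance with sign-unconstrained off-diagonals permits exponential blow-up, so no accounting of $M^{-1}$ can deliver a polynomial conditioning estimate; your route yields at best a bound exponential in $d_\delta$, which is strictly weaker than the lemma (and would also break the downstream Theorem~\ref{thm:est-with-threshold}). A minor additional slip: the $u_i$ are unit vectors but not mutually orthonormal ($u_j$ need not lie in $\Span(V^{\pi_1},\ldots,V^{\pi_{i-1}})$ for $j<i$), although upper-triangularity of $M$ does survive.

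The paper's proof is structured precisely to avoid any such conditioning bound, and that avoidance is the real content of the lemma. It never inverts $B$ or lower-bounds a singular value. Instead, Claim~\ref{clm:spanangle} shows that the row spans $\Span(A^{(\delta)}_{2:})$ and $\Span(\hat A^{(\delta)}_{2:})$ are within angle $\eta_{\ealpha,\delta}=4\calpha\cv^2 d_\delta\ealpha/\delta^2$ of each other, using a subspace-perturbation lemma of \cite{Balcan2015} in which angle errors accumulate linearly rather than multiplicatively; the hypothesis $\delta\geq 4\calpha^{2/3}\cv d^{1/3}\ealpha^{1/3}$ is exactly that lemma's admissibility condition $\eacc\leq\gamma_{\text{new}}^2/(10 l)$ with $\eacc=\ealpha\cv/\delta$ and $\gamma_{\text{new}}=\delta/((\calpha+1)\cv)$ — not, as you suggest, a separation condition between kept and discarded directions. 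Then, since $\hat A^{(\delta)}\hatwd=\be_1$ forces $\hatwd\perp\Span(\hat A^{(\delta)}_{2:})$, the angle bound makes $\hatwd$'s component in $\Span(A^{(\delta)}_{2:})$ of size $\bigO(\norm{\hatwd}_2\,\eta_{\ealpha,\delta})$; projecting it off and rescaling produces an exact solution $w$ of the true-$\alpha$ system $A^\textrm{(full)}w=\be_1$ with $\abs{\inner{\hatwd}{V^\pi}-\inner{w}{V^\pi}}=\bigO\bigl((\cv\norm{\hatwd}_2+1)\cv\sqrt{d_\delta}\norm{\hatwd}_2\,\eta_{\ealpha,\delta}\bigr)$ for every $\pi$, and a symmetric construction starting from $w'$ gives $\norm{\hatwd}_2\leq\frac{3}{2}\norm{w'}_2$. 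Every error is measured only through inner products with achievable values $V^\pi$, never through an $\ell_2$ distance times an inverse singular value — this is the point of the paper's ``Proof Technique'' remark, and it is the idea your proposal is missing.
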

Since we only remove the rows in $\hat A$ corresponding to $u_i$'s in the subspace where no policy's value has a large component, $\hat w$ and $\hat w^{(\delta)}$ are close in terms of $\sup_\pi \abs{\inner{\hat w }{V^\pi} - \inner{\hat w^{(\delta)}}{V^\pi}}$.
\begin{restatable}{lemma}{lmmwithouttau}\label{lmm:est-without-tau}
     If $\abs{\hat \alpha_i-\alpha_i}\leq \ealpha$ and $\alpha_i\leq \calpha$ for all $i\in [d-1]$, for every policy $\pi$ and every $\delta \geq 4 \calpha^{\frac{2}{3}}\cv d^{\frac{1}{3}}\ealpha^\frac{1}{3}$, we have 
   $\abs{\hat w  \cdot V^\pi - \hat w^{(\delta)} \cdot V^\pi}\leq \bigO((\sqrt{k} +1)^{d-d_\delta} \calpha\epsilon^{(\delta)})\,,$
    where $\epsilon^{(\delta)} = \frac{\calpha\cv^4d_\delta^{\frac{3}{2}}\norm{w'}_2^2\ealpha }{\delta^2}+\sqrt{k} \delta \norm{w'}_2$ is the upper bound in Lemma~\ref{lmm:gap-hatw-w}.
\end{restatable}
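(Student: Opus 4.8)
The plan is to exploit that $\hat w$ and $\hatwd$ solve the \emph{same} first $d_\delta$ equations, so they can only disagree along the directions that Algorithm~\ref{alg:policy-threshold-free} deemed negligible. Since $\hat A^{(\delta)}$ consists of the first $d_\delta$ rows of $\hat A$ and both $\hat A\hat w=\be_1$ and $\hat A^{(\delta)}\hatwd=\be_1$ hold, reading off the rows gives $\inner{\hat w}{V^{\pi_j}}=\hat\alpha_{j-1}$ for every $j\in[d]$ (with $\hat\alpha_0:=1$), and $\inner{\hatwd}{V^{\pi_j}}=\hat\alpha_{j-1}$ for every $j\le d_\delta$. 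Writing the target in the basis as $V^\pi=\sum_{j=1}^d c_j V^{\pi_j}$ and subtracting, the first $d_\delta$ coordinates cancel, leaving
\[
 \inner{\hat w}{V^\pi}-\inner{\hatwd}{V^\pi}=\sum_{j=d_\delta+1}^{d} c_j\bigl(\hat\alpha_{j-1}-\inner{\hatwd}{V^{\pi_j}}\bigr).
\]
Thus it suffices to bound each factor $\abs{\hat\alpha_{j-1}-\inner{\hatwd}{V^{\pi_j}}}$ and the coefficient mass $\sum_{j>d_\delta}\abs{c_j}$.

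For the per-policy factor I would apply Lemma~\ref{lmm:gap-hatw-w}, which is valid under the stated $\delta\ge 4\calpha^{2/3}\cv d^{1/3}\ealpha^{1/3}$. Since $\inner{w'}{V^{\pi_j}}=\alpha_{j-1}$ by Eq~\eqref{eq:wexact}, the lemma gives $\abs{\inner{\hatwd}{V^{\pi_j}}-\alpha_{j-1}}\le\epsilon^{(\delta)}$, and combining with $\abs{\hat\alpha_{j-1}-\alpha_{j-1}}\le\ealpha$ yields $\abs{\hat\alpha_{j-1}-\inner{\hatwd}{V^{\pi_j}}}\le\ealpha+\epsilon^{(\delta)}=\bigO(\epsilon^{(\delta)})$ for every $j>d_\delta$. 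Hence $\abs{\inner{\hat w}{V^\pi}-\inner{\hatwd}{V^\pi}}\le\bigO(\epsilon^{(\delta)})\sum_{j>d_\delta}\abs{c_j}$, and the whole statement reduces to controlling the coefficient mass on the ``small-component'' basis policies.

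The coefficient bound is the heart of the argument, and I would obtain it from the orthonormal directions $u_1,\dots,u_d$ returned by Algorithm~\ref{alg:policy-threshold-free}. Because $u_i\perp\Span(V^{\pi_1},\dots,V^{\pi_{i-1}})$, the change-of-basis matrix $T_{ij}:=\inner{u_i}{V^{\pi_j}}$ is upper triangular, and $c=T^{-1}g$ with $g_i:=\inner{u_i}{V^\pi}$. The crucial structural facts, both coming from the \emph{greedy maximality} in lines~\ref{alg-line:largest-component}--\ref{alg-line: returnedu} (the chosen $\pi_i$ attains $\max(\abs{v^{u_i}},\abs{v^{-u_i}})$ over \emph{all} policies), are that for every $i$, every $j\ge i$, and every policy $\pi$,
\[
 \abs{T_{ij}}=\abs{\inner{u_i}{V^{\pi_j}}}\le \abs{T_{ii}}\qquad\text{and}\qquad \abs{g_i}=\abs{\inner{u_i}{V^\pi}}\le \abs{T_{ii}}.
\]
Feeding these into the back-substitution for $Tc=g$ gives the scale-invariant recursion $\abs{c_j}\le 1+\sum_{j'>j}\abs{c_{j'}}$, whose partial sums grow geometrically; summing over the last $d-d_\delta$ indices yields $\sum_{j>d_\delta}\abs{c_j}\le(\sqrt{k}+1)^{d-d_\delta}$ (in fact $\le 2^{d-d_\delta}$). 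Multiplying by the $\bigO(\epsilon^{(\delta)})$ factor produces a bound of the claimed form $\bigO\!\bigl((\sqrt{k}+1)^{d-d_\delta}\calpha\epsilon^{(\delta)}\bigr)$.

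The main obstacle is precisely this last step: for $j>d_\delta$ the diagonal entries $T_{jj}=\inner{u_j}{V^{\pi_j}}$ are only known to be positive and at most $\delta$, with \emph{no} absolute lower bound, so any black-box bound on $\norm{T^{-1}}_2$ or a condition-number argument is vacuous and blows up as the small components shrink. The resolution is that both the off-diagonal entries of a row and the target's component along $u_j$ are dominated by that \emph{same} row's diagonal entry, so the potentially tiny $T_{jj}$ cancels in the ratios of the recursion and only the \emph{dimension gap} $d-d_\delta$ enters. Verifying these two maximality inequalities carefully (in particular handling the sign choice between $\pi^{u_i}$ and $\pi^{-u_i}$ in line~\ref{alg-line: returnedu}) and checking that the hypotheses of Lemma~\ref{lmm:gap-hatw-w} transfer are the only places that require genuine care.
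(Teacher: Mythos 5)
Your proposal is correct, and it takes a genuinely different route from the paper's proof, so a comparison is worthwhile. The paper expands $V^\pi$ in an orthogonalized basis $\hat A_1,\ldots,\hat A_{d_\delta},\psi_{d_\delta+1},\ldots,\psi_d$ (where $\psi_i$ is the normalized projection of $V^{\pi_i}$ onto $\Span(V^{\pi_1},\ldots,V^{\pi_{i-1}})^\perp$), which reduces the problem to bounding the components $\inner{\hat w}{\psi_i}$ of the potentially ill-conditioned vector $\hat w$ itself; this forces the paper to invoke the min-norm property of $\hatwd$ (so that $\inner{\hatwd}{\psi_i}=0$), to bound $\abs{\hatwd\cdot \hat A_i}$ via Lemma~\ref{lmm:gap-hatw-w}, and to run an induction that loses a factor $\sqrt{k}$ per step because the maximality from line~\ref{alg-line:largest-component} of Algorithm~\ref{alg:policy-threshold-free} must be transported from $u_j$ to $\psi_j$ through the orthonormal basis $\rho_1,\ldots,\rho_{k+1-j}$, yielding $(\sqrt{k}+1)^{d-d_\delta}$. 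You instead expand $V^\pi$ in the raw basis $V^{\pi_1},\ldots,V^{\pi_d}$ and exploit the exact identities $\inner{\hat w}{V^{\pi_j}}=\hat\alpha_{j-1}$ (all $j$) and $\inner{\hatwd}{V^{\pi_j}}=\hat\alpha_{j-1}$ ($j\le d_\delta$), so that $\hat w$ never has to be controlled at all and the min-norm property of $\hatwd$ is never used; all the difficulty moves into the coefficient mass $\sum_{j>d_\delta}\abs{c_j}$, which you handle by back-substitution in the upper-triangular system $Tc=g$, using the maximality \emph{in the direction $u_i$ itself}: $\abs{T_{ii}}=\max(\abs{v^{u_i}},\abs{v^{-u_i}})=\max_{\pi}\abs{\inner{u_i}{V^\pi}}$ (these coincide because the ``do nothing'' policy forces $v^{u_i}\ge 0$ and $v^{-u_i}\ge 0$, which also resolves the sign-choice issue you flagged), hence $\abs{T_{ij}}\le\abs{T_{ii}}$ and $\abs{g_i}\le\abs{T_{ii}}$, the possibly tiny diagonals cancel, and you even obtain the sharper growth factor $2^{d-d_\delta}\le(\sqrt{k}+1)^{d-d_\delta}$.

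Two points need minor tightening, neither of which breaks the argument. First, your structural inequalities are claimed ``for every $i$'' but can fail for $i=1$: $\pi_1$ is chosen to (approximately) maximize $\inner{w^*}{\cdot}$, not $\abs{\inner{u_1}{\cdot}}$, so nothing dominates $\abs{\inner{u_1}{V^{\pi_j}}}$ by $\abs{T_{11}}$. This is harmless since $d_\delta\ge 1$, so your recursion only ever touches rows $i\ge d_\delta+1\ge 2$, which are exactly those produced by lines~\ref{alg-line:orthonormal-basis}--\ref{alg-line: returnedu}; the claim should be restricted accordingly. Second, ``$\ealpha+\epsilon^{(\delta)}=\bigO(\epsilon^{(\delta)})$'' deserves one line: from $\delta\ge 4\calpha^{2/3}\cv d^{1/3}\ealpha^{1/3}$ and $\cv\norm{w'}_2\ge\inner{V^{\pi_1}}{w'}=1$ one gets $\epsilon^{(\delta)}\ge\sqrt{k}\,\delta\norm{w'}_2\ge 4\ealpha^{1/3}$, so $\ealpha\le\epsilon^{(\delta)}$ once $\ealpha$ is below a constant---the same small-$\ealpha$ regime the paper's own proof assumes when it uses $\norm{\hatwd}_2\le\frac{3}{2}\norm{w'}_2$---and otherwise $\ealpha\le 2\calpha$ is absorbed by the $\calpha$ factor already present in the target bound.
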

%
%
Note that the result in Theorem~\ref{thm:est-without-tau} has a factor of $k^{\frac{d}{2}}$, which is exponential in $d$.
Usually, we consider the case where $k=\bigO(1)$ is small and thus $k^d = \bigO(1)$ is small. 
We get rid of the exponential dependence on $d$ by applying $\hatwd$ to estimate $w^*$ directly,
which requires us to set the value of $\delta$ beforehand. 
The following theorem follows directly by assigning the optimal value for $\delta$ in  Lemma~\ref{lmm:gap-hatw-w}.


\begin{restatable}{theorem}{thmestwithtau}\label{thm:est-with-threshold}
Consider the algorithm of computing $\hat A$
and any solution $\hatwd$ to $\hatAd x = \be_1$ for $\delta = k^\frac{5}{3}\epsilon^\frac{1}{3}$ and outputting the policy $\pi^{\hatwd} = \argmax_{\pi\in \Pi} \inner{\hatwd}{V^{\pi}}$.
    Then the policy $\pi^{\hatwd}$ satisfies that $v^*-\inner{w^*}{V^{\pi^{\hatwd}}}\leq  \bigO\left(k^\frac{13}{6} \epsilon^\frac{1}{3}\right)\,.$
\end{restatable}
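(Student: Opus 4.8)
The plan is to convert the suboptimality of the returned policy into a \emph{uniform} estimation-error bound on $\hatwd$, feed that into Lemma~\ref{lmm:gap-hatw-w}, and then make the prescribed choice of $\delta$. The key structural simplification over Theorem~\ref{thm:est-without-tau} is that here we output $\pi^{\hatwd}$ directly rather than $\pi^{\hat w}$, so we never invoke Lemma~\ref{lmm:est-without-tau} and thus avoid the $(\sqrt k+1)^{d}$ blow-up; the whole proof rests on Lemma~\ref{lmm:gap-hatw-w} plus a one-line reduction. For that reduction, write $w'=\frac{w^*}{\inner{w^*}{V^{\pi_1}}}$. Since $w'$ is a positive rescaling of $w^*$ we have $\argmax_{\pi}\inner{w'}{V^\pi}=\pi^*$, while $\pi^{\hatwd}=\argmax_\pi\inner{\hatwd}{V^\pi}$ by construction. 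Splitting $\inner{w'}{V^{\pi^*}}-\inner{w'}{V^{\pi^{\hatwd}}}$ into three brackets and noting the middle one, $\inner{\hatwd}{V^{\pi^*}}-\inner{\hatwd}{V^{\pi^{\hatwd}}}$, is $\le 0$ by $\hatwd$-optimality of $\pi^{\hatwd}$, the remaining two are each at most the supremum estimation error. Multiplying by $\inner{w^*}{V^{\pi_1}}$ then gives
\[
v^*-\inner{w^*}{V^{\pi^{\hatwd}}}\le 2\inner{w^*}{V^{\pi_1}}\,\sup_{\pi}\abs{\inner{\hatwd}{V^\pi}-\inner{w'}{V^\pi}}.
\]

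Next I would bound the supremum by Lemma~\ref{lmm:gap-hatw-w}. Its hypotheses are exactly what Lemma~\ref{lmm:V1} supplies: for $\epsilon\le \frac{v^*}{2k}$ we may take $\calpha=2k$ and $\ealpha=\frac{4k^2\epsilon}{v^*}$, together with $\cv=\sqrt kH$. Before applying the lemma one must verify that the prescribed $\delta=k^{5/3}\epsilon^{1/3}$ lies in its valid regime $\delta\ge 4\calpha^{2/3}\cv d^{1/3}\ealpha^{1/3}$; substituting the parameter values shows the threshold is of order $k^{11/6}\epsilon^{1/3}/(v^*)^{1/3}$, so $\delta$ is admissible precisely in the balanced regime $v^*=\Theta(\sqrt k)$ (which is the relevant one, since $v^*\le \cw\cv=\bigO(\sqrt k)$ always holds). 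Granting admissibility, the lemma gives $\sup_\pi\abs{\inner{\hatwd}{V^\pi}-\inner{w'}{V^\pi}}\le \bigO\!\left(\frac{\calpha\cv^4 d_\delta^{3/2}\norm{w'}_2^2\ealpha}{\delta^2}+\sqrt k\,\delta\norm{w'}_2\right)$.

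Finally I would substitute and optimize. Using $\inner{w^*}{V^{\pi_1}}\norm{w'}_2=\norm{w^*}_2$ and $\inner{w^*}{V^{\pi_1}}\norm{w'}_2^2\le \frac{2k}{v^*}\norm{w^*}_2^2$ (the latter from $\inner{w^*}{V^{\pi_1}}\ge \frac{v^*}{2k}$ in Lemma~\ref{lmm:V1}), the reduction becomes
\[
v^*-\inner{w^*}{V^{\pi^{\hatwd}}}\le \bigO\!\left(\frac{k^{6}d_\delta^{3/2}\epsilon}{(v^*)^{2}\delta^{2}}+\sqrt k\,\delta\right),
\]
with $H$ and $\norm{w^*}_2$ absorbed as constants. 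This has the shape $c_1\epsilon/\delta^2+c_2\delta$, minimized near $\delta=\Theta((c_1\epsilon/c_2)^{1/3})$. Plugging in $\delta=k^{5/3}\epsilon^{1/3}$ (taken slightly above the unconstrained optimum so as to meet the validity threshold) and $v^*=\Theta(\sqrt k)$ makes the first term $\bigO(k^{5/3}\epsilon^{1/3})$ and lets the linear term dominate, yielding $\bigO(\sqrt k\cdot k^{5/3}\epsilon^{1/3})=\bigO(k^{13/6}\epsilon^{1/3})$, as claimed. The query count $\bigO(k\log(k/\epsilon))$ is inherited verbatim from the construction of $\hat A$ via Algorithm~\ref{alg:policy-threshold-free} and the binary searches of Section~\ref{subsec:ratios}.

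The main obstacle is the joint constraint on $\delta$: it must simultaneously balance the two error terms (to produce the $\epsilon^{1/3}$ rate) and remain above the lemma's validity threshold $4\calpha^{2/3}\cv d^{1/3}\ealpha^{1/3}$. Pinning the exponent $13/6$ exactly therefore hinges on careful bookkeeping of how $\calpha=2k$, $\ealpha=\frac{4k^2\epsilon}{v^*}$, and $\cv=\sqrt kH$ propagate through both the first error term and the threshold, and on tracking the role of $v^*$, which cancels out of the final rate only in the balanced regime $v^*=\Theta(\sqrt k)$.
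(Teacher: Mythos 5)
Your proposal follows essentially the same route as the paper's proof: the identity $v^*-\inner{w^*}{V^{\pi^{\hatwd}}}=\inner{w^*}{V^{\pi_1}}\left(\inner{w'}{V^{\pi^*}}-\inner{w'}{V^{\pi^{\hatwd}}}\right)$, the three-term split whose middle term $\inner{\hatwd}{V^{\pi^*}}-\inner{\hatwd}{V^{\pi^{\hatwd}}}$ is non-positive by optimality of $\pi^{\hatwd}$ for $\hatwd$, two applications of Lemma~\ref{lmm:gap-hatw-w} with $\calpha=2k$ and $\ealpha=\frac{4k^2\epsilon}{v^*}$ from Lemma~\ref{lmm:V1}, and a choice of $\delta$ balancing the two error terms; you also correctly observe that Lemma~\ref{lmm:est-without-tau}, and hence the $(\sqrt{k}+1)^{d}$ factor, is never needed here.

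The one substantive divergence is your handling of $\delta$ and the admissibility condition of Lemma~\ref{lmm:gap-hatw-w}. The paper's proof does not use $\delta=k^{5/3}\epsilon^{1/3}$ literally: it sets $\delta=\left(\cv^4k^5\norm{w^*}_2\epsilon/v^{*2}\right)^{1/3}$, which scales with $v^*$, $\norm{w^*}_2$, and $\cv$, and therefore sits (up to absolute constants, using $v^*\leq \norm{w^*}_2\cv$ and $d\leq k$) at the threshold $4\calpha^{2/3}\cv d^{1/3}\ealpha^{1/3}$ for \emph{every} value of $v^*$; the exponents $k^{5/3}$ and $k^{13/6}$ in the statement are then read off by suppressing $\cv$, $\norm{w^*}_2$, $v^*$, and $d_\delta$ as constants. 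Because you hard-wire the simplified $\delta$, you are forced into the restriction $v^*=\Theta(\sqrt{k})$ (and, implicitly, $d_\delta=\bigO(1)$, since your threshold computation drops the $d^{1/3}$ factor and your final bound needs $d_\delta^{3/2}$ controlled) in order for $\delta$ to be admissible and for the first error term not to dominate. That restriction is not part of the theorem and is avoided by choosing $\delta$ adaptively as the paper does; the tension you discovered is real, but it is an artifact of the paper's convention of stating the final bound with $\cv$, $v^*$, and $\norm{w^*}_2$ treated as constants, rather than something your proof needs to assume away. In short: same argument, correct at the same level of rigor as the paper, but your fix for the bookkeeping (a regime assumption on $v^*$) should be replaced by the paper's fix (an adaptive choice of $\delta$).
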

Notice that the algorithm in Theorem~\ref{thm:est-with-threshold} needs to set the hyperparameter $\delta$ beforehand while we don't have to set any hyperparameter in Theorem~\ref{thm:est-without-tau}. The improper value of $\delta$ could degrade the performance of the algorithm. 
But we can approximately estimate $\epsilon$ by binary searching $\eta\in [0,1]$ and comparing $\pi_1$ against the scaled version of itself $(1-\eta)  \pi_1$ until we find an $\eta$ such that the user cannot distinguish between $\pi_1$ and $(1-\eta) \pi_1$. Then we can obtain an estimate of $\epsilon$ and use the estimate to set the hyperparameter.

We remark that though we think of $k$ as a small number, it is unclear whether the dependency on $\epsilon$ in Theorems~\ref{thm:est-without-tau} and \ref{thm:est-with-threshold} is optimal.
The tight dependency on $\epsilon$ is left as an open problem. We briefly discuss a potential direction to improve this bound in Appendix~\ref{app:eps-dependence}.

\section{Learning from Weighted Trajectory Set Representation
}\label{sec:trajectory-level}
In the last section, we represented policies using their explicit form as state-action mappings. However, such a representation could be challenging for users to interpret. For example, how safe is a car described by a list of $|\cS|$ states and actions such as ``turning left''? 
In this section, we design algorithms that return a more interpretable policy representation---a weighted trajectory set.

Recall the definition in Section~\ref{sec:model}, a weighted trajectory set is a small set of  trajectories from the support of the policy and corresponding weights, with the property that the expected return of the trajectories in the set (according to the weights) is \textbf{exactly} the value of the policy (henceforth, the \textit{exact value property}).\footnote{Without asking for the exact value property, one could simply return a sample of  $\bigO(\log k/(\epsilon')^2)$ trajectories from the policy and uniform weights. With high probability, the expected return of every objective is $\epsilon'$-far from its expected  value. The problem is that this set does not necessarily capture rare events. For example, if the probability of a crash for any car is between $(\epsilon')^4$ and $(\epsilon')^2$, depending on the policy, users that only  care about safety (i.e., no crashes) are unlikely to observe any ``unsafe'' trajectories at all, in which case we would miss valuable feedback.}
As these sets preserve all the information regarding the multi-objective values of policies, they can be used as policy representations in policy comparison queries of Algorithm~\ref{alg:alpha} without compromising on feedback quality. Thus, using these representations obtain the same optimality guarantees regarding the returned policy in Section~\ref{sec:policy-level} (but would require extra computation time to calculate the sets).  



There are two key observations on which the  algorithms in this section are based: 

(1) Each policy $\pi$ induces a distribution over trajectories. Let $q^\pi(\tau)$ denote the probability that a trajectory $\tau$ is sampled when selecting actions according to $\pi$. The expected return of all trajectories under $q^\pi$ is identical to the value of the policy, i.e., 
    $V^\pi=\sum_\tau q^\pi(\tau)\Phi(\tau).$ 
In particular, the value of a policy is a convex combination of the returns of the trajectories in its support. 
However, we avoid using this convex combination to represent a policy since the number of trajectories in the support of a policy could be exponential in the number of states and actions.

(2) The existence of a small weighted trajectory set is implied by Carathéodory's theorem. Namely, since the value of a policy is in particular a convex combination of the returns of the trajectories in its support, Carathéodory's theorem implies that there exist $k+1$ trajectories in the support of the policy and weights for them such that a convex combination of their returns is the value of the policy. Such a $(k+1)$-sized set will be the output of our algorithms.


We can apply the idea behind Carathéodory's theorem proof to compress trajectories as follows.
For any $(k+2)$-sized set of $k$-dimensional vectors  $\{\mu_1,\ldots,\mu_{k+2}\}$, for any convex combination of them $\mu = \sum_{i=1}^{k+2}p_i \mu_i$, we can always find a $(k+1)$-sized subset such that $\mu$ can be represented as the convex combination of the subset by solving a linear equation. 
Given an input of a probability distribution $p$ over a set of $k$-dimensional vectors,  $M$, we pick $k+2$ vectors from $M$, reduce at least one of them through the above procedure. We repeat this step until we are left with at most $k+1$ vectors.
We refer to this algorithm as C4 (Compress  Convex Combination using Carathéodory's theorem).
The pseudocode is described in Algorithm~\ref{alg:Caratheodory}, which is deferred to Appendix~\ref{app:Caratheodory} due to space limit. 
The construction of the algorithm implies the following lemma immediately.
%
%

%
%
\begin{restatable}{lemma}{cara}\label{thm:cara}
    Given a set of $k$-dimensional vectors $M\subset \R^k$ and a distribution $p$ over $M$, $\text{C4}(M,p)$ outputs $M'\subset M$ with $\abs{M'}\leq k+1$ and a distribution $q \in \spl^{M'}$ satisfying that $\EEs{\mu\sim q}{\mu} = \EEs{\mu\sim p}{\mu}$ in time $\bigO(\abs{M}k^3)$. 
\end{restatable}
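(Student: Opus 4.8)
The plan is to prove the lemma by isolating the effect of a \emph{single} Carathéodory reduction step and then arguing by induction over the iterations of the loop. The key object in one step is a linear dependence with a vanishing coefficient sum. Given any $k+2$ of the current vectors, say $\mu_1,\ldots,\mu_{k+2}\in\R^k$, I would form the augmented vectors $(\mu_i,1)\in\R^{k+1}$; since these are $k+2$ vectors in a $(k+1)$-dimensional space they are linearly dependent, so there exists a nonzero $\lambda\in\R^{k+2}$ with $\sum_{i}\lambda_i\mu_i=\bZero$ and $\sum_i\lambda_i=0$. Concretely I would obtain such a $\lambda$ by solving the homogeneous system with the $k$ coordinate equations $\sum_i\lambda_i\mu_i=\bZero$ together with the extra row $\sum_i\lambda_i=0$; this is a $(k+1)\times(k+2)$ system, which has more unknowns than equations and hence a nontrivial solution.

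Next I would describe the weight update and verify its two invariants. Because $\sum_i\lambda_i=0$ and $\lambda\neq\bZero$, at least one coordinate is strictly positive, so $\theta:=\min_{i:\lambda_i>0} p_i/\lambda_i$ is well defined and finite; let $j$ attain the minimum. Replacing the weights of the chosen $k+2$ vectors by $p_i\leftarrow p_i-\theta\lambda_i$ (and leaving all other weights untouched), I would check: (i) \textbf{nonnegativity}, since for $\lambda_i\le 0$ the weight only increases and for $\lambda_i>0$ the choice of $\theta$ guarantees $p_i-\theta\lambda_i\ge 0$; (ii) the total mass on the chosen subset is unchanged because $\sum_i\lambda_i=0$, so $p$ remains a distribution over $M$; (iii) the weighted sum $\sum_i p_i\mu_i$ over the subset is unchanged because $\sum_i\lambda_i\mu_i=\bZero$, so the overall mean $\EEs{\mu\sim p}{\mu}$ is preserved; and (iv) the weight at index $j$ becomes exactly $0$, so the support strictly shrinks. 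Thus each step preserves the mean and the distribution property while removing at least one vector.

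It then remains to iterate, bound the size, and bound the time. By induction on the number of completed reductions, after every step the current weights form a distribution over $M$ with the same mean as the input $p$; since we only ever zero out weights, the surviving support is a subset of $M$. The loop can perform a reduction precisely while the support exceeds $k+1$, so upon termination the output support $M'\subseteq M$ has $\abs{M'}\le k+1$ and the surviving weights $q\in\spl^{M'}$ satisfy $\EEs{\mu\sim q}{\mu}=\EEs{\mu\sim p}{\mu}$. For the running time, each reduction deletes at least one point, so there are at most $\abs{M}-(k+1)\le\abs{M}$ iterations; each iteration solves a $(k+1)\times(k+2)$ homogeneous system by Gaussian elimination in $\bigO(k^3)$ time and performs $\bigO(k)$ additional work to compute $\theta$ and update the weights, giving $\bigO(\abs{M}k^3)$ overall.

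I expect no deep obstacle here: the argument is the classical constructive proof of Carathéodory's theorem repackaged to operate in place on a distribution. The only point requiring genuine care is verifying that the \emph{local} update, which touches only $k+2$ of the weights, nonetheless preserves both global invariants (total mass $1$ and the weighted sum $\sum_\mu p(\mu)\mu$); this is exactly why the augmentation by the all-ones coordinate, forcing $\sum_i\lambda_i=0$, is essential rather than merely convenient. Secondary bookkeeping—ties in the minimum defining $\theta$ (which only remove more vectors and are harmless) and confirming the existence of a positive $\lambda_i$—is routine.
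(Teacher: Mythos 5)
Your proposal is correct and follows essentially the same route as the paper's proof: both use the augmented vectors $(\mu_i, 1)$ to obtain a nontrivial dependence with vanishing coefficient sum, choose the largest step size preserving nonnegativity (your $\theta=\min_{i:\lambda_i>0} p_i/\lambda_i$ coincides with the paper's $\gamma$ after its sign-flip of $x$), and iterate until at most $k+1$ vectors remain, with the same $\bigO(\abs{M}k^3)$ accounting.
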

%
%
So now we know how to compress a set of trajectories to the desired size. The main challenge is how to do it \textbf{efficiently} (in time $\bigO(\poly(H\abs{S}\abs{\cA}))$). Namely, since the set of all  trajectory returns from the support of the policy could be of size $\Omega(\abs{\cS}^H)$, using it as input to C4 Algorithm is inefficient.
Instead, we will only use C4 as a subroutine when the number of trajectories is small.

We propose two efficient approaches for  finding weighted trajectory representations.
Both approaches take advantage of the property that all trajectories are generated from the same policy on the same MDP. 
First, we start with a small set of trajectories of length of $1$, expand them, compress them, and repeat until we get the set of trajectory lengths of $H$. 
The other is based on the construction of a layer graph where a policy corresponds to a flow in this graph and we show that finding representative trajectories is equivalent to flow decomposition. 

In the next subsection, we will describe the expanding and compressing approach and defer the flow decomposition based approach to Appendix~\ref{app:approach-flow-based} due to space considerations. 
We remark that the flow decomposition approach has a running time of $\bigO(H^2\abs{\cS}^2 +k^3H\abs{\cS}^2 )$ (see appendix for details), which underperforms the expanding and compressing approach (see Theorem~\ref{thm:traj-compression}) whenever $|\cS|H + |\cS|k^3 = \omega(k^4 + k|\cS|)$. 
For presentation purposes, in the following, we only consider the deterministic policies. Our techniques can be easily extended to random policies.\footnote{In Section~\ref{sec:policy-level}, we consider a special type of random policy that is a mixed strategy of a deterministic policy (the output from an algorithm that solves for the optimal policy for an MDP with scalar reward) with the ``do nothing'' policy. 
For this specific random policy, we can find weighted trajectory representations for both policies and then apply Algorithm~\ref{alg:Caratheodory} to compress the representation.}


\textbf{Expanding and Compressing Approach.} 

The basic idea is to find $k+1$ trajectories of length $1$ to represent $V^\pi$ first and then increase the length of the trajectories without increasing the number of trajectories.
For policy $\pi$,
let $V^{\pi}(s,h)= \EEs{S_0 = s}{\sum_{t=0}^{h-1} R(S_t, \pi(S_t))}$ be the value of $\pi$ with initial state $S_0 = s$ and time horizon $h$.
Since we study the representation for a fixed policy $\pi$ in this section, we slightly abuse the notation and represent a trajectory by $\tau^\pi = (s,s_1,\ldots,s_H)$.
We denote the state of trajectory $\tau$ at time $t$ as $s^\tau_t =s_t$.
For a trajectory prefix $\tj = (s, s_1,\ldots,s_h)$ of $\pi$ with initial state $s$ and $h\leq H$ subsequent states, the return of $\tau$ is $\Phi(\tj) = R(s, \pi(s))+\sum_{t=1}^{h-1} R(s_t,\pi(s_t))$.
Let $J(\tau)$ be the expected return of trajectories (of length $H$) with the prefix being $\tau$, i.e.,
\[J(\tau):=\Phi(\tau) + V(s^\tau_h, H-h)\,.\]

For any $s\in \cS$, let $\tau \append s$ denote the trajectory of appending $s$ to $\tau$.
We can solve $V^\pi(s,h)$ for all $s\in \cS, h\in [H]$ by dynamic programming in time $\bigO(kH\abs{\cS}^2)$.
Specifically, according to definition, we have $V^\pi(s,1) = R(s, \pi(s))$ and 
\begin{equation}
    V^\pi(s,h+1) = R(s,\pi(s)) +  \sum_{s'\in \cS} \pssp V^\pi(s',h)\,.\label{eq:stepinduction}
\end{equation}
Thus, we can represent $V^\pi$ by
\begin{align*}
    V^\pi =& R(s_0,\pi(s_0)) +\sum_{s\in \cS} P(s|s_0,\pi(s_0)) V^\pi(s,H-1) = \sum_{s\in \cS} P(s|s_0,\pi(s_0)) J(s_0,s)\,.
\end{align*}
By applying C4, we can find a set of representative trajectories of length $1$, $\Qsup{1}\subset \{(s_0,s)|s\in \cS\}$, with $\abs{\Qsup{1}}\leq k+1$ and weights $\betasup{1}\in \spl^{\Qsup{1}}$ such that
\begin{align}
    V^\pi  = \sum_{\tau\in \Qsup{1}}\betasup{1}(\tau) J(\tau)\,.\label{eq:Q1}
\end{align}
Supposing that we are given a set of trajectories $\Qsup{t}$ of length $t$ with weights $\betasup{t}$ such that $V^\pi =\sum_{\tau\in \Qsup{t}} \betasup{t}(\tau)J(\tau)$, we can first increase the length of trajectories by $1$ through Eq~\eqref{eq:stepinduction} and obtain a subset of $\{\tau \append s|\tau\in \Qsup{t},s\in \cS\}$, in which the trajectories are of length $t+1$.
Specifically, we have
\begin{equation}
    V^\pi =\sum_{\tau\in \Qsup{t}, s\in \cS} \betasup{t}(\tau) P(s \vert s^\tau_t, \pi(s^\tau_t))J(\tau\append s)\,.\label{eq:expand}
\end{equation}
Then we would like to compress the above convex combination through C4 as we want to keep track of at most $k+1$ trajectories of length $t+1$ due to the computing time.
More formally, let $J_{\Qsup{t}} := \{J(\tau\append s)|\tau\in \Qsup{t}, s\in \cS\}$ be the set of expected returns and $p_{\Qsup{t},\betasup{t}} \in \spl^{\Qsup{t}\times \cS}$ with $p_{\Qsup{t},\betasup{t}}(\tau\append s) = \betasup{t}(\tau) P(s \vert s^\tau_t, \pi(s^\tau_t))$ be the weights appearing in Eq~\eqref{eq:expand}.
Here $p_{\Qsup{t},\betasup{t}}$ defines a distribution over $J_{\Qsup{t}}$ with the probability of drawing $J(\tau\append s)$ being $p_{\Qsup{t},\betasup{t}}(\tau\append s)$.
Then we can apply C4 over $(J_{\Qsup{t}},p_{\Qsup{t},\betasup{t}})$ and compress the representative trajectories $\{\tau\append s|\tau\in \Qsup{t},s\in \cS\}$.
We start with trajectories of length $1$ and repeat the process of expanding and compressing until we get trajectories of length $H$.
The details are described in Algorithm~\ref{alg:trajcompression}.
\begin{algorithm}[H]\caption{Expanding and compressing trajectories}\label{alg:trajcompression}
\begin{algorithmic}[1]
\STATE compute $V^\pi(s,h)$ for all $s\in \cS, h\in [H]$ by dynamic programming according to Eq~\eqref{eq:stepinduction}
\STATE $\Qsup{0} = \{(s_0)\}$ and $\betasup{0}(s_0) = 1$
\FOR{$t=0,\ldots,H-1$}
\STATE $J_{\Qsup{t}} \leftarrow \{J(\tau\append s)|\tau\in \Qsup{t}, s\in \cS\}$ and $p_{\Qsup{t},\betasup{t}}(\tau\append s) \leftarrow \betasup{t}(\tau) P(s \vert s^\tau_t, \pi(s^\tau_t))$ for $\tau\in \Qsup{t}, s\in \cS$ \COMMENT{expanding step}
\STATE  $(\Jsup{t+1},\betasup{t+1}) \leftarrow \text{C4}(J_{\Qsup{t}},p_{\Qsup{t},\betasup{t}})$\label{alg-line:Qtplus1}
and $\Qsup{t+1} \leftarrow \{\tau| J(\tau)\in \Jsup{t+1}\}$\COMMENT{compressing step}
\ENDFOR
\STATE output $\Qsup{H}$ and $\betasup{H}$
\end{algorithmic}
\end{algorithm}
\vspace{-3pt}

\begin{restatable}{theorem}{trajcompress}\label{thm:traj-compression}
Algorithm~\ref{alg:trajcompression} outputs $\Qsup{H}$ and $\betasup{H}$ satisfying that $\abs{\Qsup{H}}\leq k+1$ and $\sum_{\tau\in \Qsup{H}} \betasup{H}(\tau) \Phi(\tau)=V^\pi$ in time $\bigO(k^4H\abs{\cS} + kH\abs{\cS}^2)$.
\end{restatable}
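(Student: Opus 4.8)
The plan is to prove the three assertions — the size bound $\abs{\Qsup{H}}\le k+1$, the exact value property $\sum_{\tau\in\Qsup{H}}\betasup{H}(\tau)\Phi(\tau)=V^\pi$, and the running time — by carrying a single loop invariant through the $H$ iterations. The invariant I would maintain is that at the start of iteration $t$ the set $\Qsup{t}$ consists of length-$t$ prefixes with $\abs{\Qsup{t}}\le k+1$ and
\[V^\pi=\sum_{\tau\in\Qsup{t}}\betasup{t}(\tau)\,J(\tau).\]
The base case $t=0$ is immediate: $\Qsup{0}=\{(s_0)\}$ has one element, and the length-$0$ prefix has zero accumulated return, so $J((s_0))=V^\pi(s_0,H)=V^\pi$. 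The essential design point, which I would flag up front, is that the invariant is phrased in terms of $J$ (the expected full-horizon return conditioned on the prefix) rather than the partial return $\Phi$; this is exactly what keeps the running mean equal to $V^\pi$ at every intermediate length.

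First I would establish the inductive step, which is the heart of the argument. The key algebraic fact is a one-step consistency identity for $J$: for any length-$t$ prefix $\tau$ with last state $s^\tau_t$,
\[J(\tau)=\sum_{s\in\cS}P(s\mid s^\tau_t,\pi(s^\tau_t))\,J(\tau\append s).\]
I would prove this by expanding $J(\tau\append s)=\Phi(\tau\append s)+V^\pi(s,H-t-1)$, using $\Phi(\tau\append s)=\Phi(\tau)+R(s^\tau_t,\pi(s^\tau_t))$, factoring the term $\Phi(\tau)+R(s^\tau_t,\pi(s^\tau_t))$ out of the sum (the transition probabilities sum to one), and recognizing the residual $R(s^\tau_t,\pi(s^\tau_t))+\sum_s P(s\mid s^\tau_t,\pi(s^\tau_t))V^\pi(s,H-t-1)$ as exactly $V^\pi(s^\tau_t,H-t)$ by the Bellman recursion~\eqref{eq:stepinduction}, which collapses to $\Phi(\tau)+V^\pi(s^\tau_t,H-t)=J(\tau)$. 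Summing this identity against $\betasup{t}$ reproduces the expanding step~\eqref{eq:expand}, so expansion preserves the invariant value. The compressing step then applies $\text{C4}$ to $(J_{\Qsup{t}},p_{\Qsup{t},\betasup{t}})$; by Lemma~\ref{thm:cara} the output $\betasup{t+1}$ is a distribution supported on at most $k+1$ vectors with the same mean, giving $\sum_{\tau\in\Qsup{t+1}}\betasup{t+1}(\tau)J(\tau)=V^\pi$ and $\abs{\Qsup{t+1}}\le k+1$, which restores the invariant. Carrying it to $t=H$, every $\tau\in\Qsup{H}$ is a full trajectory, so $J(\tau)=\Phi(\tau)+V^\pi(s^\tau_H,0)=\Phi(\tau)$, and the invariant becomes precisely the claimed exact value property together with the size bound.

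For the running time, I would account the two phases separately. The dynamic program filling $V^\pi(s,h)$ over all $s\in\cS,\,h\in[H]$ costs $\bigO(kH\abs{\cS}^2)$, since each application of~\eqref{eq:stepinduction} sums $\abs{\cS}$ many $k$-dimensional vectors. Each of the $H$ loop iterations expands $\Qsup{t}$ (of size $\le k+1$) into $J_{\Qsup{t}}$ of size $\le(k+1)\abs{\cS}$; by storing $\Phi(\tau)$ and the last state of each kept prefix, every $J(\tau\append s)$ is computable in $\bigO(k)$ time, so the expanding step costs $\bigO(k^2\abs{\cS})$, while the C4 call on $(k+1)\abs{\cS}$ vectors costs $\bigO(k^4\abs{\cS})$ by Lemma~\ref{thm:cara}. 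Multiplying the per-iteration cost by $H$ and adding the dynamic-program cost yields $\bigO(k^4H\abs{\cS}+kH\abs{\cS}^2)$.

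The step I expect to be the main obstacle is less the computation than a bookkeeping subtlety in the compressing step: Lemma~\ref{thm:cara} is stated for a \emph{set} of vectors with a distribution over it, whereas here distinct prefixes $\tau\append s$ may share the same return vector $J(\tau\append s)$, and the line $\Qsup{t+1}\leftarrow\{\tau\mid J(\tau)\in\Jsup{t+1}\}$ recovers prefixes from returns. I would resolve this by treating the objects fed to C4 as trajectory-labeled vectors, so that the mean-preservation and cardinality guarantees of Lemma~\ref{thm:cara} apply verbatim to the labeled instance and the recovery of $\Qsup{t+1}$ is unambiguous. To also honor the support condition from the weighted-trajectory-set definition, I would restrict the expansion to successor states $s$ with $P(s\mid s^\tau_t,\pi(s^\tau_t))>0$; neither adjustment affects the size, value, or time bounds.
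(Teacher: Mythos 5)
Your proposal is correct and follows essentially the same route as the paper's own proof: induction on prefix length maintaining the invariant $V^\pi=\sum_{\tau\in \Qsup{t}}\betasup{t}(\tau)J(\tau)$, with the expanding step justified by the Bellman recursion in Eq~\eqref{eq:stepinduction}, the compressing step justified by the mean-preservation and cardinality guarantees of Lemma~\ref{thm:cara}, the observation that $J(\tau)=\Phi(\tau)$ at $t=H$, and the identical accounting of the dynamic-programming and per-iteration C4 costs. Your write-up is in fact slightly more careful than the paper's on two points it glosses over --- isolating the one-step consistency identity $J(\tau)=\sum_{s}P(s\mid s^\tau_t,\pi(s^\tau_t))J(\tau\append s)$ with the correct horizon index, and handling duplicate return vectors and the support condition via trajectory-labeled vectors --- but these are refinements of the same argument, not a different one.
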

The proof of Theorem~\ref{thm:traj-compression} follows immediately from the construction of the algorithm.
According to Eq~\eqref{eq:Q1}, we have
$V^\pi  = \sum_{\tau\in \Qsup{1}}\betasup{1}(\tau) J(\tau)$.
Then we can show that the output of Algorithm~\ref{alg:trajcompression} is a valid weighted trajectory set by induction on the length of representative trajectories.
C4 guarantees that $\abs{\Qsup{t}} \leq k+1$ for all $t=1,\ldots,H$, and thus, we only keep track of at most $k+1$ trajectories at each step and achieve the computation guarantee in the theorem.
Combined with Theorem~\ref{thm:est-without-tau}, we derive the following Corollary.
\begin{corollary}
Running the algorithm in Theorem~\ref{thm:est-without-tau} with weighted trajectory set representation returned by Algorithm~\ref{alg:trajcompression} gives us the same guarantee as that of Theorem~\ref{thm:est-without-tau} in time $\bigO(k^2  H|\cS|^2 |\cA|+ (k^5H\abs{\cS} + k^2H\abs{\cS}^2)\log(\frac{k }{\epsilon}))$.
\end{corollary}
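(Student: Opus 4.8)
The plan is to split the claim into two essentially independent parts: (i) that the optimality guarantee of Theorem~\ref{thm:est-without-tau} transfers verbatim when comparison queries use weighted trajectory sets instead of explicit policies, and (ii) that the stated running time is obtained by adding the cost of constructing those representations to the unchanged policy-computation cost.

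For part (i), the key observation is that the algorithm underlying Theorem~\ref{thm:est-without-tau} interacts with the user \emph{only} through comparison queries, and all of its subsequent computation (forming $\hat A$, solving $\hat A x = \be_1$, and outputting $\pi^{\hat w}$) depends on the returned feedback bits alone, not on how the two compared policies were presented. It therefore suffices to show that replacing each explicit policy by its weighted trajectory set leaves every piece of feedback unchanged, and this is immediate from the exact value property (Theorem~\ref{thm:traj-compression}): if $\pi$ has weighted trajectory set $\{(p_j,\tau_j)\}$ then $\sum_j p_j \Phi(\tau_j) = V^\pi$, so for every user $w^*$ the personalized return $\sum_j p_j \inner{w^*}{\Phi(\tau_j)}$ equals the personalized value $\inner{w^*}{V^\pi}$ exactly. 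Hence for any two compared policies the quantity $\inner{w^*}{V^{\pi_1}-V^{\pi_2}}$ that drives the $\epsilon$-threshold feedback rule is identical under both representations, the user returns the same answer to every query, and the algorithm produces the identical trace and identical output $\pi^{\hat w}$. It thus inherits the bound $v^*-\inner{w^*}{V^{\pi^{\hat w}}}\leq \bigO((\sqrt{k}+1)^{d+\frac{14}{3}}\epsilon^{1/3})$.

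For part (ii), I would keep the policy-computation term $\bigO(k^2 H|\cS|^2|\cA|)$ from Theorem~\ref{thm:est-without-tau} unchanged, since the basis policies and $\pi^{\hat w}$ are still produced by value iteration, and simply add the cost of building the representations needed in the $\bigO(k\log(k/\epsilon))$ comparison queries. By Theorem~\ref{thm:traj-compression}, one deterministic policy's weighted trajectory set is built in time $\bigO(k^4 H|\cS| + kH|\cS|^2)$, so summing over all queries gives $\bigO\big((k^4 H|\cS| + kH|\cS|^2)\cdot k\log(k/\epsilon)\big)=\bigO\big((k^5 H|\cS| + k^2 H|\cS|^2)\log(k/\epsilon)\big)$, and adding the policy-computation term yields exactly the bound in the statement.

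The one point requiring care — the main obstacle — is that the ratio-estimation queries of Algorithm~\ref{alg:alpha} compare $\pi_{i+1}$ against the \emph{mixed} policy $\hat\alpha_i\pi_1 + (1-\hat\alpha_i)\pi_0$, whereas Algorithm~\ref{alg:trajcompression} applies only to deterministic policies. I would resolve this exactly as indicated in the footnote of Section~\ref{sec:trajectory-level}: compute the weighted trajectory sets of the two deterministic components $\pi_1$ and $\pi_0$ separately (the latter being the trivial singleton with weight $1$ on the all-$s_0$ trajectory), scale their weights by $\hat\alpha_i$ and $1-\hat\alpha_i$, and concatenate. This produces at most $k+2$ trajectory–weight pairs, all drawn from the support of the mixture, whose weighted return equals $\hat\alpha_i V^{\pi_1}+(1-\hat\alpha_i)\bZero = V^{\hat\alpha_i\pi_1+(1-\hat\alpha_i)\pi_0}$; applying C4 (Lemma~\ref{thm:cara}) compresses it back to $k+1$ pairs in $\bigO(k^4)$ additional time, dominated by the $\bigO(k^4 H|\cS|)$ term already counted. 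Because the exact value property is preserved under this mixing-and-compression step, the feedback-invariance argument of part (i) applies to these queries as well, which completes the proof.
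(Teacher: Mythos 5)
Your proposal is correct and follows essentially the same route as the paper, which derives this corollary directly by combining Theorem~\ref{thm:est-without-tau} (feedback invariance via the exact value property, so the query answers and hence the output policy are unchanged) with the per-query construction cost of Theorem~\ref{thm:traj-compression} multiplied by the $\bigO(k\log(k/\epsilon))$ query count. You also correctly handle the one genuine subtlety—representing the mixed policies $\hat\alpha_i\pi_1 + (1-\hat\alpha_i)\pi_0$ by scaling, concatenating, and recompressing with C4—exactly as the paper prescribes in its footnote in Section~\ref{sec:trajectory-level}.
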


\section{Discussion}\label{app:discussion}
\vspace{-3pt}
In this paper, we designed efficient algorithms for learning users' preferences over multiple objectives from comparative feedback. The efficiency is expressed in both the running time and number of queries (both polynomial in $H, \abs{\cS}, \abs{\cA}, k$ and logarithmic in $1/\epsilon$).
The learned preferences of a user can then be used to reduce the problem of finding a personalized optimal policy for this user to a (finite horizon) single scalar reward MDP, a problem with a known efficient solution. 
As we have focused on minimizing the policy comparison queries, our algorithms are based on polynomial time pre-processing calculations that save valuable comparison time for users.

The results in Section~\ref{sec:policy-level} are of independent interest and can be applied to a more general learning setting, where for some unknown linear parameter $w^*$, given a set of points $X$ and access to comparison queries of any two points, the goal is to learn $\argmax_{x\in X} \inner{w^*}{x}$.
E.g., in personalized recommendations for coffee beans in terms of the coffee profile described by the coffee suppliers (body, aroma, crema, roast level,...), while users could fail to describe their optimal 
coffee beans profile, adopting the methodology in Section~\ref{sec:policy-level} can retrieve the ideal coffee beans for a user using comparisons (where the mixing with ``do nothing’’ is done by diluting the coffee with water and the optimal coffee for a given profile is the one closest to it).

When moving from the explicit representation of policies as mappings from states to actions to a more natural policy representation  as a weighted trajectory set, we then obtained the same optimality guarantees in terms of the number of queries. 
While there could be other forms of policy representations (e.g., a small subset of common states), one advantage of our weighted trajectory set representation is that it captures the essence of the policy multi-objective value in a clear manner via $\bigO(k)$ trajectories and weights. The algorithms provided in  Section~\ref{sec:trajectory-level} are standalone and could also be of independent interest for explainable RL~\citep{Alharin20SurveyInt}. For example, to exemplify the multi-objective performance of generic robotic vacuum cleaners (this is beneficial if we only have e.g., $3$ of them--- we can apply the algorithms in Section~\ref{sec:trajectory-level} to generate weighted trajectory set representations and compare them directly without going through the algorithm in Section~\ref{sec:policy-level}.).

An interesting direction for future work is to relax the assumption that the MDP is known in advance. One direct way is to first learn the model (in model-based RL), then apply our algorithms in the learned MDP. The sub-optimality of the returned policy will then depend on both the estimation error of the model and the error introduced by our algorithms (which depends on the parameters in the learned model).

\section*{Acknowledgements}
This work was supported in part by the National Science Foundation under grants CCF-2212968 and ECCS-2216899 and by the Defense Advanced Research Projects Agency under cooperative agreement HR00112020003. The views expressed in this work do not necessarily reflect the position or the policy of the Government and no official endorsement should be inferred. Approved for public release; distribution is unlimited.

This project was supported in part by funding from the European Research Council (ERC) under the European Union's Horizon 2020 research and innovation program (grant agreement number 882396), by the Israel Science Foundation (grant number 993/17), Tel Aviv University Center for AI and Data Science (TAD), the Eric and Wendy Schmidt Fund, and the Yandex Initiative for Machine Learning at Tel Aviv University.

We would like to thank all anonymous reviewers, especially Reviewer Gnoo, for their constructive comments.

\bibliography{reference}

\begin{thebibliography}{}

\bibitem[Ailon, 2012]{ailon12}
Ailon, N. (2012).
\newblock An active learning algorithm for ranking from pairwise preferences
  with an almost optimal query complexity.
\newblock {\em Journal of Machine Learning Research}, 13(1):137--164.

\bibitem[Ailon et~al., 2014]{Ailon+14}
Ailon, N., Karnin, Z.~S., and Joachims, T. (2014).
\newblock Reducing dueling bandits to cardinal bandits.
\newblock In {\em ICML}, volume~32, pages 856--864.

\bibitem[Akrour et~al., 2012]{akrour2012april}
Akrour, R., Schoenauer, M., and Sebag, M. (2012).
\newblock {APRIL}: {A}ctive preference learning-based reinforcement learning.
\newblock In {\em Proceedings of the Joint European Conference on Machine
  Learning and Knowledge Discovery in Databases}, pages 116--131.

\bibitem[Alharin et~al., 2020]{Alharin20SurveyInt}
Alharin, A., Doan, T.-N., and Sartipi, M. (2020).
\newblock Reinforcement learning interpretation methods: A survey.
\newblock {\em IEEE Access}, 8:171058--171077.

\bibitem[Balcan et~al., 2015]{Balcan2015}
Balcan, M., Blum, A., and Vempala, S.~S. (2015).
\newblock Efficient representations for lifelong learning and autoencoding.
\newblock In {\em Proceedings of the Annual Conference on Learning Theory
  (COLT)}, pages 191--210.

\bibitem[Balcan et~al., 2016]{BalcanVW16}
Balcan, M.-F., Vitercik, E., and White, C. (2016).
\newblock Learning combinatorial functions from pairwise comparisons.
\newblock In {\em Proceedings of the Annual Conference on Learning Theory
  (COLT)}.

\bibitem[Barrett and Narayanan, 2008]{barrett08}
Barrett, L. and Narayanan, S. (2008).
\newblock Learning all optimal policies with multiple criteria.
\newblock In {\em Proceedings of the International Conference on Machine
  Learning (ICML)}, pages 41--47.

\bibitem[Bengs et~al., 2021]{bengs2021preference}
Bengs, V., Busa-Fekete, R., El~Mesaoudi-Paul, A., and H{\"u}llermeier, E.
  (2021).
\newblock Preference-based online learning with dueling bandits: A survey.
\newblock {\em J. Mach. Learn. Res.}

\bibitem[Bhatia et~al., 2020]{bhatia20}
Bhatia, K., Pananjady, A., Bartlett, P., Dragan, A., and Wainwright, M.~J.
  (2020).
\newblock Preference learning along multiple criteria: {A} game-theoretic
  perspective.
\newblock In {\em Advances in Neural Information Processing Systems (NeurIPS)},
  pages 7413--7424.

\bibitem[Biyik and Sadigh, 2018]{biyik18}
Biyik, E. and Sadigh, D. (2018).
\newblock Batch active preference-based learning of reward functions.
\newblock In {\em Proceedings of the Conference on Robot Learning (CoRL)},
  pages 519--528.

\bibitem[Blum and Shao, 2020]{blum2020online}
Blum, A. and Shao, H. (2020).
\newblock Online learning with primary and secondary losses.
\newblock {\em Advances in Neural Information Processing Systems},
  33:20427--20436.

\bibitem[Chatterjee, 2007]{chatterjee07}
Chatterjee, K. (2007).
\newblock Markov decision processes with multiple long-run average objectives.
\newblock In {\em Proceedings of the International Conference on Foundations of
  Software Technology and Theoretical Computer Science (FSTTCS)}, pages
  473--484.

\bibitem[Chatterjee et~al., 2006]{chatterjee06}
Chatterjee, K., Majumdar, R., and Henzinger, T.~A. (2006).
\newblock Markov decision processes with multiple objectives.
\newblock In {\em Proceedings of the Annual Symposium on Theoretical Aspects of
  Computer Science (STACS)}, pages 325--336.

\bibitem[Chen et~al., 2019]{chen19a}
Chen, X., Ghadirzadeh, A., Bj{\"o}rkman, M., and Jensfelt, P. (2019).
\newblock Meta-learning for multi-objective reinforcement learning.
\newblock In {\em Proceedings of the IEEE/RSJ International Conference on
  Intelligent Robots and Systems (IROS)}, pages 977--983.

\bibitem[Cheng et~al., 2011]{cheng11}
Cheng, W., F{\"u}rnkranz, J., H{\"u}llermeier, E., and Park, S.-H. (2011).
\newblock Preference-based policy iteration: {L}everaging preference learning
  for reinforcement learning.
\newblock In {\em Proceedings of the Joint European Conference on Machine
  Learning and Knowledge Discovery in Databases (ECML PKDD)}, pages 312--327.

\bibitem[Cheung, 2019]{cheung19}
Cheung, W.~C. (2019).
\newblock Regret minimization for reinforcement learning with vectorial
  feedback and complex objectives.
\newblock In {\em Advances in Neural Information Processing Systems (NeurIPS)}.

\bibitem[Christiano et~al., 2017]{christiano2017deep}
Christiano, P.~F., Leike, J., Brown, T., Martic, M., Legg, S., and Amodei, D.
  (2017).
\newblock Deep reinforcement learning from human preferences.
\newblock In {\em Advances in Neural Information Processing Systems (NeurIPS)}.

\bibitem[Doumpos and Zopounidis, 2007]{doumpos07}
Doumpos, M. and Zopounidis, C. (2007).
\newblock Regularized estimation for preference disaggregation in multiple
  criteria decision making.
\newblock {\em Computational Optimization and Applications}, 38(1):61--80.

\bibitem[F{\"u}rnkranz et~al., 2012]{furnkranz12}
F{\"u}rnkranz, J., H{\"u}llermeier, E., Cheng, W., and Park, S.-H. (2012).
\newblock Preference-based reinforcement learning: {A} formal framework and a
  policy iteration algorithm.
\newblock {\em Machine Learning}, 89(1):123--156.

\bibitem[Hayes et~al., 2022]{HayesRBKMRVZDHH22MORLsurvey}
Hayes, C.~F., Radulescu, R., Bargiacchi, E., K{\"{a}}llstr{\"{o}}m, J.,
  Macfarlane, M., Reymond, M., Verstraeten, T., Zintgraf, L.~M., Dazeley, R.,
  Heintz, F., Howley, E., Irissappane, A.~A., Mannion, P., Now{\'{e}}, A.,
  de~Oliveira~Ramos, G., Restelli, M., Vamplew, P., and Roijers, D.~M. (2022).
\newblock A practical guide to multi-objective reinforcement learning and
  planning.
\newblock {\em Autonomous Agents and Multi-Agent Systems}.

\bibitem[Ibarz et~al., 2018]{ibarz18}
Ibarz, B., Leike, J., Pohlen, T., Irving, G., Legg, S., and Amodei, D. (2018).
\newblock Reward learning from human preferences and demonstrations in {Atari}.
\newblock In {\em Advances in Neural Information Processing Systems (NeurIPS)}.

\bibitem[Jain et~al., 2015]{jain15}
Jain, A., Sharma, S., Joachims, T., and Saxena, A. (2015).
\newblock Learning preferences for manipulation tasks from online coactive
  feedback.
\newblock {\em International Journal of Robotics Research}, 34(10):1296--1313.

\bibitem[Jamieson and Nowak, 2011]{jamieson2011active}
Jamieson, K.~G. and Nowak, R. (2011).
\newblock Active ranking using pairwise comparisons.
\newblock {\em Advances in neural information processing systems}, 24.

\bibitem[Kane et~al., 2017]{kane2017active}
Kane, D.~M., Lovett, S., Moran, S., and Zhang, J. (2017).
\newblock Active classification with comparison queries.
\newblock In {\em 2017 IEEE 58th Annual Symposium on Foundations of Computer
  Science (FOCS)}, pages 355--366. IEEE.

\bibitem[Knox et~al., 2022]{knox22}
Knox, W.~B., Hatgis-Kessell, S., Booth, S., Niekum, S., Stone, P., and Allievi,
  A. (2022).
\newblock Models of human preference for learning reward functions.
\newblock {\em arXiv preprint arXiv:2206.02231}.

\bibitem[Lee et~al., 2021]{lee21}
Lee, K., Smith, L., and Abbeel, P. (2021).
\newblock {PEBBLE}: {F}eedback-efficient interactive reinforcement learning via
  relabeling experience and unsupervised pre-training.
\newblock {\em arXiv preprint arXiv:2106.05091}.

\bibitem[Mannor et~al., 2014]{mannor2014approachability}
Mannor, S., Perchet, V., and Stoltz, G. (2014).
\newblock Approachability in unknown games: Online learning meets
  multi-objective optimization.
\newblock In {\em Conference on Learning Theory}, pages 339--355. PMLR.

\bibitem[Mannor and Shimkin, 2001]{mannor01}
Mannor, S. and Shimkin, N. (2001).
\newblock The steering approach for multi-criteria reinforcement learning.
\newblock In {\em Advances in Neural Information Processing Systems (NeurIPS)}.

\bibitem[Mannor and Shimkin, 2004]{mannor04}
Mannor, S. and Shimkin, N. (2004).
\newblock A geometric approach to multi-criterion reinforcement learning.
\newblock {\em Journal of Machine Learning Research}, 5:325--360.

\bibitem[Marinescu et~al., 2017]{Marinescu_Razak_Wilson_2017}
Marinescu, R., Razak, A., and Wilson, N. (2017).
\newblock Multi-objective influence diagrams with possibly optimal policies.
\newblock {\em Proceedings of the AAAI Conference on Artificial Intelligence}.

\bibitem[McAuley, 2022]{mcauley2022}
McAuley, J. (2022).
\newblock {\em Personalized Machine Learning}.
\newblock Cambridge University Press.
\newblock in press.

\bibitem[Pacchiano et~al., 2022]{PacchianoDuelingRL}
Pacchiano, A., Saha, A., and Lee, J. (2022).
\newblock Dueling {RL:} reinforcement learning with trajectory preferences.

\bibitem[Ren et~al., 2018]{Ren+18}
Ren, W., Liu, J., and Shroff, N.~B. (2018).
\newblock P{AC} ranking from pairwise and listwise queries: Lower bounds and
  upper bounds.
\newblock {\em arXiv preprint arXiv:1806.02970}.

\bibitem[Roijers et~al., 2013]{roijers13}
Roijers, D.~M., Vamplew, P., Whiteson, S., and Dazeley, R. (2013).
\newblock A survey of multi-objective sequential decision-making.
\newblock {\em Journal of Artificial Intelligence Research}, 48:67--113.

\bibitem[Rothkopf and Dimitrakakis, 2011]{rothkopf2011preference}
Rothkopf, C.~A. and Dimitrakakis, C. (2011).
\newblock Preference elicitation and inverse reinforcement learning.
\newblock In {\em Proceedings of the Joint European Conference on Machine
  Learning and Knowledge Discovery in Databases}, pages 34--48.

\bibitem[Sadigh et~al., 2017]{sadigh17}
Sadigh, D., Dragan, A.~D., Sastry, S., and Seshia, S.~A. (2017).
\newblock Active preference-based learning of reward functions.
\newblock In {\em Proceedings of Robotics: Science and Systems (RSS)}.

\bibitem[Saha and Gopalan, 2018]{SG18}
Saha, A. and Gopalan, A. (2018).
\newblock Battle of bandits.
\newblock In {\em Uncertainty in Artificial Intelligence}.

\bibitem[Saha and Gopalan, 2019]{SGwin18}
Saha, A. and Gopalan, A. (2019).
\newblock {PAC Battling Bandits in the Plackett-Luce Model}.
\newblock In {\em Algorithmic Learning Theory}, pages 700--737.

\bibitem[Saha et~al., 2021]{saha2021dueling}
Saha, A., Koren, T., and Mansour, Y. (2021).
\newblock Dueling convex optimization.
\newblock In {\em International Conference on Machine Learning}, pages
  9245--9254. PMLR.

\bibitem[Sui et~al., 2017]{Sui+17}
Sui, Y., Zhuang, V., Burdick, J., and Yue, Y. (2017).
\newblock Multi-dueling bandits with dependent arms.
\newblock In {\em Conference on Uncertainty in Artificial Intelligence},
  UAI'17.

\bibitem[Sui et~al., 2018]{sui18survey}
Sui, Y., Zoghi, M., Hofmann, K., and Yue, Y. (2018).
\newblock Advancements in dueling bandits.
\newblock In {\em IJCAI}, pages 5502--5510.

\bibitem[Wang et~al., 2022]{wang2022imo}
Wang, N., Wang, H., Karimzadehgan, M., Kveton, B., and Boutilier, C. (2022).
\newblock Imo\^{} 3: Interactive multi-objective off-policy optimization.
\newblock In {\em Proceedings of the Thirty-First International Joint
  Conference on Artificial Intelligence, IJCAI-22}.

\bibitem[Wilson et~al., 2012]{wilson2012bayesian}
Wilson, A., Fern, A., and Tadepalli, P. (2012).
\newblock A {B}ayesian approach for policy learning from trajectory preference
  queries.
\newblock In {\em Advances in Neural Information Processing Systems (NeurIPS)}.

\bibitem[Wirth et~al., 2017a]{WirthRlSurvey2017}
Wirth, C., Akrour, R., Neumann, G., and F\"{u}rnkranz, J. (2017a).
\newblock A survey of preference-based reinforcement learning methods.
\newblock {\em J. Mach. Learn. Res.}

\bibitem[Wirth et~al., 2017b]{wirth17}
Wirth, C., Akrour, R., Neumann, G., and F{\"u}rnkranz, J. (2017b).
\newblock A survey of preference-based reinforcement learning methods.
\newblock {\em Journal of Machine Learning Research}, 18(136):1--46.

\bibitem[Wirth et~al., 2016]{wirth2016model}
Wirth, C., F{\"u}rnkranz, J., and Neumann, G. (2016).
\newblock Model-free preference-based reinforcement learning.
\newblock In {\em Proceedings of the National Conference on Artificial
  Intelligence (AAAI)}.

\bibitem[Yona et~al., 2022]{YonaMEG22}
Yona, G., Moran, S., Elidan, G., and Globerson, A. (2022).
\newblock Active learning with label comparisons.
\newblock In {\em Proceedings of the Conference on Uncertainty in Artificial
  Intelligence (UAI)}.

\bibitem[Zoghi et~al., 2014]{Zoghi+14RUCB}
Zoghi, M., Whiteson, S., Munos, R., Rijke, M.~d., et~al. (2014).
\newblock Relative upper confidence bound for the $k$-armed dueling bandit
  problem.
\newblock In {\em JMLR Workshop and Conference Proceedings}, number~32, pages
  10--18. JMLR.

\end{thebibliography}

\newpage
\appendix
\section{Related Work}\label{sec:related_work}
\textbf{Multi-objective sequential decision making}
There is a long history of work on multi-objective sequential decision making~\citep{roijers13}, with one key focus being the realization of efficient algorithms for approximating the Pareto front~\citep{chatterjee06,chatterjee07, Marinescu_Razak_Wilson_2017}.  Instead of finding a possibly optimal policy, we concentrate on specific user preferences and find a policy that is optimal for that specific user. Just like the ideal car for one person could be a Chevrolet Spark (small) and for another, it is a Ford Ranger (a truck). 

In the context of multi-objective RL~\citep{HayesRBKMRVZDHH22MORLsurvey}, the goal can be formulated as one of learning a policy for which the average return vector belongs to a target set (hence the term ``multi-criteria'' RL), which existing work has treated as a stochastic game~\citep{mannor01,mannor04}. Other works seek to maximize (in expectation) a scalar version of the reward that may correspond to a weighted sum of the multiple objectives~\citep{barrett08,chen19a} as we consider here, or a nonlinear function of the objectives~\citep{cheung19}. 
Multi-objective online learning has also been studied, see \citep{mannor2014approachability,blum2020online} for example. 

The parameters that define this scalarization function (e.g., the relative objective weights) are often unknown and vary with the task setting or user. In this case, preference learning~\citep{WirthRlSurvey2017} is commonly used to elicit the value of these parameters. \citet{doumpos07} describe an approach to eliciting a user's relative weighting in the context of multi-objective decision-making. \citet{bhatia20} learn preferences over multiple objectives from pairwise queries using a game-theoretic approach to identify optimal randomized policies. In the context of RL involving both scalar and vector-valued objectives, user preference queries provide an alternative to learning from demonstrations, which may be difficult for people to provide (e.g., in the case of robots with high degrees-of-freedom), or explicit reward specifications~\citep{cheng11, rothkopf2011preference, akrour2012april, wilson2012bayesian, furnkranz12, jain15, wirth2016model, christiano2017deep, wirth17, ibarz18, lee21}. These works typically assume that noisy human preferences over a pair of trajectories are correlated with the difference in their
utilities 
(i.e., the reward acts as a latent term predictive of preference). Many contemporary methods estimate the latent reward by minimizing the cross-entropy loss between the reward-based predictions and the human-provided preferences (i.e., finding the reward that maximizes the likelihood of the observed preferences)~\citep{christiano2017deep, ibarz18, lee21, knox22,PacchianoDuelingRL}.

\citet{wilson2012bayesian} describe a Bayesian approach to policy learning whereby they query a user for their preference between a pair of trajectories and use these preferences to maintain a posterior distribution over the latent policy parameters. The task of choosing the most informative queries is challenging due to the continuous space of trajectories, and is generally NP-hard~\citep{ailon12}. Instead, they assume access to a distribution over trajectories that accounts for their feasibility and relevance to the target policy, and they describe two heuristic approaches to selecting trajectory queries based on this distribution. Finally, \citet{sadigh17} describe an approach to active preference-based learning in continuous state and action spaces. Integral to their work is the ability to synthesize dynamically feasible trajectory queries. \citet{biyik18} extend this approach to the batch query setting.

\textbf{Comparative feedback in other problems}
Comparative feedback has been studied in other problems in learning theory, e.g., combinatorial functions \citep{BalcanVW16}. 
One closely related problem is active ranking/learning using pairwise comparisons ~\citep{jamieson2011active,kane2017active,saha2021dueling,YonaMEG22}.
These works usually consider a given finite sample of points.
\citet{kane2017active} implies a lower bound of the number of comparisons is linear in the cardinality of the set even if the points satisfy the linear structural constraint as we assume in this work. 
In our work, the points are value vectors generated by running different policies under the same MDP and thus have a specific structure. Besides, we allow comparison of policies not in the policy set. Thus, we are able to obtain the query complexity sublinear in the number of policies.
Another related problem using comparative/preference feedback is dueling bandits that aim to learn through pairwise feedback~\citep{Ailon+14,Zoghi+14RUCB} 
(see also \citet{bengs2021preference,sui18survey} for surveys), or more generally any subsetwise feedback~\citep{Sui+17,SG18,SGwin18,Ren+18}. However, unlike dueling bandits, we consider noiseless comparative feedback.

\section{Proof of Lemma~\ref{lmm:V1}}\label{app:V1}
\lmmVinit*
\begin{proof}
%
We first show that, according to our algorithm (lines~\ref{alg-line:initV1-1}-\ref{alg-line:initV1-5} of Algorithm~\ref{alg:policy-threshold-free}), the returned $\pi_1$ satisfies that
$$\inner{w^*}{V^{\pi_1}}\geq \max_{i\in [k]}\inner{w^*}{V^{\pi^{\be_i}}}-\epsilon\,.$$ 
This can be proved by induction over $k$. In the base case of $k=2$, it's easy to see that the returned $\pi_1$ satisfies the above inequality.
Suppose the above inequality holds for any $k\leq n-1$ and we prove that it will also hold for $k=n$.
After running the algorithm over $j=2,\ldots,k-1$ (line $2$), the returned policy $\pi_{e^*}$ satisfies that
$$\inner{w^*}{V^{\pi^{e^*}}}\geq \max_{i\in [k-1]}\inner{w^*}{V^{\pi^{\be_i}}}-\epsilon\,.$$
Then there are two cases, 
\begin{itemize}
    \item If $\inner{w^*}{V^{\pi^{e^*}}}< \inner{w^*}{V^{\pi^{\be_k}}}-\epsilon$, we will return $\pi_1 = \pi^{\be_k}$ and also, $\inner{w^*}{V^{\pi^{\be_k}}}\geq \inner{w^*}{V^{\pi^{\be_i}}}-\epsilon$ for all $i\in [k]$.
    \item If $\inner{w^*}{V^{\pi^{e^*}}}\geq \inner{w^*}{V^{\pi^{\be_k}}}-\epsilon$, then we will return $\pi^{e^*}$ and it satisfies $\inner{w^*}{V^{\pi^{e^*}}}\geq \max_{i\in [k]}\inner{w^*}{V^{\pi^{\be_i}}}-\epsilon$\,.
\end{itemize}


As $\pi^{\be_i}$ is the optimal personalized policy when the user's preference vector is $\be_i$, we have that $$v^* =\inner{w^*}{V^{\pi^*}} = \sum_{i=1}^k w^*_i \inner{V^{\pi^*}}{\be_i}\leq \sum_{i=1}^k w^*_i \inner{V^{\pi^{\be_i}}}{\be_i} \leq \inner{w^*}{\sum_{i=1}^k V^{\pi^{\be_i}}}\,,$$
where the last inequality holds because the entries of $V^{\pi^{\be_i}}$ and $w^*$ are non-negative.
Therefore, there exists $i\in [k]$ such that $\inner{w^*}{V^{\pi^{\be_i}}} \geq \frac{1}{k} \inner{w^*}{V^{\pi^*}} =\frac{1}{k}v^*$. 

Then we have 
$$\inner{w^*}{V^{\pi_1}}\geq \max_{i\in[k]}\inner{w^*}{V^{\pi^{\be_i}}} -\epsilon \geq \frac{1}{k} v^*-\epsilon \geq \frac{1}{2k} v^*\,,$$ when $\epsilon \leq \frac{v^*}{2k}$.
By rearranging terms, we have $\frac{v^*}{\inner{w^*}{V^{\pi_1}}} \leq 2k$.

By setting $\calpha = 2k$, we have $\abs{\hat \alpha_i - \alpha_i}\inner{w^*}{V^{\pi_1}} \leq \calpha\epsilon = 2k\epsilon$ and thus, $\abs{\hat \alpha_i - \alpha_i}\leq \frac{4k^2\epsilon}{v^*}$.
\end{proof}
\section{Pseudo Code of Computation of the Basis Ratios}\label{app:bin-search}
The pseudo code of searching $\hat\alpha_i$'s is described in Algorithm~\ref{alg:alpha}.
\begin{algorithm}[H]\caption{Computation of Basis Ratios}\label{alg:alpha}
    \begin{algorithmic}[1]
        \STATE \textbf{input:} $(V^{\pi_1},\ldots, V^{\pi_d})$ and $\calpha$
        \FOR{$i=1,\ldots, d-1$}
        \STATE let $l=0$, $h=2C_\alpha$ and $\hat \alpha_i = C_\alpha$
        \WHILE{True}
        \IF{$\hat \alpha_i>1$}
        \STATE compare $\pi_1$ and $\frac{1}{\hat \alpha_i}\pi_{i+1} + (1-\frac{1}{\hat \alpha_i})\pi_0$; 
        \textbf{if} $\pi_1 \succ \frac{1}{\hat \alpha_i}\pi_{i+1} + (1-\frac{1}{\hat \alpha_i})\pi_0$ \textbf{then} $h\leftarrow \hat \alpha_i$, $\hat \alpha_i\leftarrow \frac{l+h}{2}$; \textbf{if} $\pi_1 \prec \frac{1}{\hat \alpha_i}\pi_{i+1} + (1-\frac{1}{\hat \alpha_i})\pi_0$ \textbf{then} $l\leftarrow \hat \alpha_i$, $\hat \alpha_i\leftarrow \frac{l+h}{2}$  
        \ELSE
        \STATE compare $\pi_{i+1}$ and $\hat{\alpha}_i\pi_1 + (1-\hat{\alpha}_i)\pi_0$; \textbf{if} $\hat{\alpha}_i\pi_1 + (1-\hat{\alpha}_i)\pi_0\succ \pi_{i+1}$ \textbf{then} $h\leftarrow \hat \alpha_i$, $\hat \alpha_i\leftarrow \frac{l+h}{2}$; \textbf{if} $\hat{\alpha}_i\pi_1 + (1-\hat{\alpha}_i)\pi_0\prec \pi_{i+1}$ \textbf{then} $l\leftarrow \hat \alpha_i$, $\hat \alpha_i\leftarrow \frac{l+h}{2}$
        \ENDIF
        \IF{``indistinguishable'' is returned}
        \STATE break
        \ENDIF
        \ENDWHILE
        \ENDFOR
        \STATE \textbf{output: } $(\hat \alpha_1,\ldots,\hat \alpha_{d-1})$ 
    \end{algorithmic}
\end{algorithm}
\section{Proof of Theorem~\ref{thm:est-without-tau}}
\thmwithouttau*

\begin{proof}
Theorem~\ref{thm:est-without-tau} follows by setting $\ealpha = \frac{4k^2\epsilon}{v^*}$ and $C_\alpha = 2k$ as shown in Lemma \ref{lmm:V1} and combining the results of Lemma~\ref{lmm:gap-hatw-w} and \ref{lmm:est-without-tau} with the triangle inequality.

Specifically, for any policy $\pi$, we have
\begin{align}\label{eq:Lem1Helper}
\begin{split}
\;\abs{\inner{\hat w}{V^{\pi}} - \inner{w'}{V^{\pi}}} \leq & \abs{\inner{\hat w}{V^{\pi}} - \inner{\hat w^{(\delta)}}{V^{\pi}}}+\abs{\inner{\hat w^{(\delta)}}{V^{\pi}} - \inner{w'}{V^{\pi}}}  \\
\leq &\bigO((\sqrt{k} +1)^{d-d_\delta} \calpha (\frac{\calpha\cv^4d_\delta^{\frac{3}{2}}\norm{w'}_2^2\ealpha }{\delta^2}+\sqrt{k} \delta \norm{w'}_2))\\
\leq & \bigO((\sqrt{k} +1)^{d-d_\delta +3}\norm{w'}_2 (\frac{\cv^4 k^4 \norm{w'}_2\epsilon}{v^*\delta^2}+ \delta))\,.
\end{split}
\end{align}

Since $\norm{w'} = \frac{\norm{w^*}}{\inner{w^*}{V^{\pi_1}}}$ and $\inner{w^*}{V^{\pi_1}} \geq \frac{v^*}{2k}$ from 
Lemma~\ref{lmm:V1},
we derive 
\begin{align*}
    &v^*- \inner{w^*}{V^{\pi^{\hat w}}} = \inner{w^*}{V^{\pi_1}} \left(\inner{w'}{V^{\pi^*}} - \inner{w'}{V^{\pi^{\hat w}}}\right)
    \\
    \leq&  \inner{w^*}{V^{\pi_1}}\left(\inner{\hat w}{V^{\pi^*}} - \inner{\hat w}{V^{\pi^{\hat w}}} + \bigO((\sqrt{k} +1)^{d-d_\delta +3}\norm{w'}_2 (\frac{\cv^4 k^4 \norm{w'}_2\epsilon}{v^*\delta^2}+ \delta)) \right)
    \\
    \leq&   \bigO\left(\inner{w^*}{V^{\pi_1}}(\sqrt{k} +1)^{d-d_\delta +3}\norm{w'}_2 (\frac{\cv^4 k^4 \norm{w'}_2\epsilon}{v^*\delta^2}+ \delta)\right)\\
    =&   \bigO\left((\sqrt{k} +1)^{d-d_\delta +3}\norm{w^*}_2 (\frac{\cv^4 k^5 \norm{w^*}_2\epsilon}{v^{*2}\delta^2}+ \delta)\right)\\
    =& \bigO\left((\frac{\cv^2 \norm{w^*}_2^2}{v^*})^\frac{2}{3}(\sqrt{k} +1)^{d+ \frac{16}{3}}  \epsilon^\frac{1}{3}\right)\,.
\end{align*}
The first inequality follows from
\begin{align*}
    \inner{w'}{V^{\pi^*}} - \inner{w'}{V^{\pi^{\hat w}}}=& \inner{w'}{V^{\pi^*}}- \inner{w'}{V^{\pi^{\hat w}}}\\ &+\left(\inner{\hat w}{V^{\pi^*}}-\inner{\hat w}{V^{\pi^*}}\right)+\left(\inner{\hat w}{V^{\pi^{\hat w}}}-\inner{\hat w}{V^{\pi^{\hat w}}}\right), 
\end{align*}
and applying (\ref{eq:Lem1Helper}) twice- once for $\pi^*$ and once for $\pi^{\hat w}$. The  last inquality follows by setting $\delta = \left(\frac{\cv^4k^5 \norm{w^*}_2\epsilon}{v^{*2}}\right)^{\frac{1}{3}}$.
\end{proof}
\section{Proof of Lemma~\ref{lmm:gap-hatw-w}}
\lmmhatww*
To prove Lemma~\ref{lmm:gap-hatw-w}, we will first define a matrix, $ A^\textrm{(full)}$.

Given the output $(V^{\pi_1},\ldots, V^{\pi_{d}})$ of Algorithm~\ref{alg:policy-threshold-free}, we have $\rank(\Span(\{V^{\pi_1},\ldots, V^{\pi_{d_\delta}}\})) = d_\delta$.

Let $\basis_1,\ldots,\basis_{d-d_\delta}$ be a set of orthonormal vectors that are orthogonal to $\Span(V^{\pi_1},\ldots, V^{\pi_{d_\delta}})$ and together with $V^{\pi_1},\ldots, V^{\pi_{d_\delta}}$ form a basis for $\Span(\{V^\pi|\pi\in \Pi\})$.

We define $A^\textrm{(full)}\in \R^{d\times k}$ as the matrix of replacing the last $d-d_\delta$ rows of $A$ with $\basis_1,\ldots,\basis_{d-d_\delta}$, i.e.,
\begin{equation*}
    A^\textrm{(full)} = \begin{pmatrix}
    V^{\pi_1\top}\\
    (\alpha_1 V^{\pi_1} -V^{\pi_2})^\top\\
    \vdots\\
    (\alpha_{d_\delta-1} V^{\pi_1} -V^{\pi_{d_\delta}})^\top\\
    \basis_1^\top\\
    \vdots\\
    \basis_{d-d_\delta}^\top
    \end{pmatrix}\,.
\end{equation*}
\begin{observation}
We have that $\Span(A^\textrm{(full)}) = \Span(\{V^\pi|\pi\in \Pi\})$ and $\rank(A^\textrm{(full)}) =d$. 
\end{observation}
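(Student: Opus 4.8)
The plan is to establish both assertions by analyzing the row space of $A^\textrm{(full)}$ directly, exploiting the near-triangular structure of its first $d_\delta$ rows together with the orthogonality of the remaining rows. I treat $\Span(A^\textrm{(full)})$ as the span of the rows of the matrix.

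First I would show that the span of the first $d_\delta$ rows coincides with $\Span(V^{\pi_1},\ldots,V^{\pi_{d_\delta}})$. Each such row is by construction a linear combination of $V^{\pi_1},\ldots,V^{\pi_{d_\delta}}$: the first row is $V^{\pi_1}$, and for $i=2,\ldots,d_\delta$ the $i$-th row is $\alpha_{i-1}V^{\pi_1}-V^{\pi_i}$. Hence the row span is contained in $\Span(V^{\pi_1},\ldots,V^{\pi_{d_\delta}})$. For the reverse inclusion, I recover each generator: $V^{\pi_1}$ is the first row, and for $i\geq 2$ we have $V^{\pi_i}=\alpha_{i-1}V^{\pi_1}-(\alpha_{i-1}V^{\pi_1}-V^{\pi_i})$, a combination of the first row and the $i$-th row. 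Thus the two spans are equal, and since $\rank(\Span(\{V^{\pi_1},\ldots,V^{\pi_{d_\delta}}\}))=d_\delta$ is given, the first $d_\delta$ rows are linearly independent.

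Next I would bring in the remaining rows $\basis_1,\ldots,\basis_{d-d_\delta}$. By their defining property they are orthonormal, orthogonal to $\Span(V^{\pi_1},\ldots,V^{\pi_{d_\delta}})$, and together with $V^{\pi_1},\ldots,V^{\pi_{d_\delta}}$ form a basis of $\Span(\{V^\pi\mid\pi\in\Pi\})$. Using the span equality from the previous step, the row space of $A^\textrm{(full)}$ is $\Span(V^{\pi_1},\ldots,V^{\pi_{d_\delta}})+\Span(\basis_1,\ldots,\basis_{d-d_\delta})$, and the orthogonality makes this a direct sum of dimension $d_\delta+(d-d_\delta)=d$, so $\rank(A^\textrm{(full)})=d$. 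Because $\{V^{\pi_1},\ldots,V^{\pi_{d_\delta}},\basis_1,\ldots,\basis_{d-d_\delta}\}$ is precisely a basis of $\Span(\{V^\pi\mid\pi\in\Pi\})$, this row space equals $\Span(\{V^\pi\mid\pi\in\Pi\})$, settling both claims.

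I do not expect a genuine obstacle here; the Observation is a bookkeeping consequence of how $A^\textrm{(full)}$ is assembled. The only point requiring care is the span equality for the first $d_\delta$ rows, i.e.\ verifying that replacing each $V^{\pi_i}$ by $\alpha_{i-1}V^{\pi_1}-V^{\pi_i}$ is an invertible operation on the row space. This is immediate once one notes that $V^{\pi_1}$ itself is retained as the first row, so the replacement is an elementary row operation and preserves the span regardless of the values of the $\alpha_{i-1}$. After that observation, the rank count and the span identity follow directly from the orthogonality of the $\basis_j$ to the first block.
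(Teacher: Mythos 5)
Your proof is correct: the paper states this Observation without any proof, treating it as immediate from the construction of $A^\textrm{(full)}$, and your argument supplies exactly the routine verification the paper implicitly relies on---the first $d_\delta$ rows are obtained from $V^{\pi_1},\ldots,V^{\pi_{d_\delta}}$ by invertible row operations (so their span and rank are preserved), and the orthonormal block $\basis_1,\ldots,\basis_{d-d_\delta}$ by definition completes this set to a basis of $\Span(\{V^\pi \mid \pi\in\Pi\})$, giving rank $d_\delta+(d-d_\delta)=d$. There is nothing to correct; your write-up simply makes explicit what the paper leaves as bookkeeping.
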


\begin{restatable}{lemma}{lmmwwprime}\label{lmm:gap-w-w'}
    For all $w \in \R^k$ satisfying $A^\textrm{(full)} w = \be_1$, we have $\abs{w\cdot V^\pi - w'\cdot V^\pi}\leq \sqrt{k} \delta \norm{w'}_2$ for all $\pi$.
\end{restatable}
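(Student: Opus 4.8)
\textbf{Proof proposal for Lemma~\ref{lmm:gap-w-w'}.}

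The plan is to exploit the structure of $A^\textrm{(full)}$, which differs from the exact (noiseless) system only in the block $\basis_1,\ldots,\basis_{d-d_\delta}$ that spans $\Span(V^{\pi_1},\ldots,V^{\pi_{d_\delta}})^\perp$. The key observation is that the true normalized preference $w' = \frac{w^*}{\inner{w^*}{V^{\pi_1}}}$ satisfies the \emph{first} $d_\delta$ rows of $A^\textrm{(full)}$ exactly: by the definition of the ratios in Eq~\eqref{eq:wexact}, $\inner{w'}{V^{\pi_1}} = 1$ and $\inner{w'}{\alpha_{i-1}V^{\pi_1} - V^{\pi_i}} = 0$ for $i=2,\ldots,d_\delta$. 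So if $w$ also satisfies $A^\textrm{(full)} w = \be_1$, then $w$ and $w'$ agree on all of these first $d_\delta$ constraints, meaning $w - w'$ is orthogonal to $\Span(V^{\pi_1},\ldots,V^{\pi_{d_\delta}})$ in the sense that $\inner{w-w'}{V^{\pi_i}} = 0$ for each $i\leq d_\delta$ (taking appropriate linear combinations of the rows recovers each $V^{\pi_i}$).

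First I would make this precise: since $V^{\pi_1},\ldots,V^{\pi_{d_\delta}}$ are linearly independent and each is a linear combination of the first $d_\delta$ rows of $A^\textrm{(full)}$ (the $i$-th row gives $\alpha_{i-1}V^{\pi_1} - V^{\pi_i}$, and row $1$ gives $V^{\pi_1}$, so by back-substitution every $V^{\pi_i}$ lies in the row span), both $w$ and $w'$ produce the same inner products against each $V^{\pi_i}$, $i\leq d_\delta$. Hence $w - w' \perp \Span(V^{\pi_1},\ldots,V^{\pi_{d_\delta}})$. Now decompose the value vector of any policy as $V^\pi = u + \xi$, where $u \in \Span(V^{\pi_1},\ldots,V^{\pi_{d_\delta}})$ and $\xi \in \Span(V^{\pi_1},\ldots,V^{\pi_{d_\delta}})^\perp$ is the orthogonal remainder. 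Then
\[
\abs{\inner{w}{V^\pi} - \inner{w'}{V^\pi}} = \abs{\inner{w - w'}{u + \xi}} = \abs{\inner{w - w'}{\xi}},
\]
since $\inner{w-w'}{u} = 0$ by the orthogonality just established.

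Next I would bound $\abs{\inner{w-w'}{\xi}}$. This is where the structure of the algorithm enters: by the Proof Sketch of Theorem~\ref{thm:est-without-tau}, for the orthogonal component $\xi$ (a vector in $\Span(V^{\pi_1},\ldots,V^{\pi_{d_\delta}})^\perp$ whose magnitude is at most $\norm{V^\pi}_2 \leq \cv$) we have the componentwise bound $\abs{\inner{\xi}{V^\pi}}\leq \sqrt{k}\delta\,\norm{\xi}_2$ for every policy — the line~\ref{alg-line:largest-component} selection guarantees no policy value has a component exceeding $\delta$ along any direction of $\Span(V^{\pi_1},\ldots,V^{\pi_{d_\delta}})^\perp$. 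The cleaner route, however, is to observe directly that I only need to control the projection of $V^\pi$ onto the orthogonal complement. Writing $\xi$ as the orthogonal projection of $V^\pi$, the Proof Sketch argument (via Cauchy--Schwarz over the orthonormal basis $\rho_1,\ldots,\rho_{k-d_\delta}$ and the fact that $\abs{\inner{\rho_j}{V^\pi}}\leq\delta$) yields $\norm{\xi}_2 \leq \sqrt{k}\,\delta$. Combining with $\abs{\inner{w-w'}{\xi}} \leq \norm{w-w'}_2\norm{\xi}_2$ and noting that the relevant factor reduces to $\norm{w'}_2$ after the appropriate cancellation gives the claimed bound $\sqrt{k}\,\delta\,\norm{w'}_2$.

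I expect the main obstacle to be the final inner-product estimate: pinning down exactly why $\abs{\inner{w-w'}{\xi}}$ is controlled by $\sqrt{k}\,\delta\,\norm{w'}_2$ rather than by $\norm{w-w'}_2$ (which could be huge, as emphasized in the Proof Technique paragraph). The resolution is that one does not project $w-w'$ but instead projects $V^\pi$: since $\inner{w-w'}{V^\pi} = \inner{w'}{\text{(projection discrepancy)}}$ once we use that $w$ exactly reproduces $w'$'s behavior on the spanned subspace, the only surviving term is $\inner{w'}{\xi}$ with $\norm{\xi}_2\leq\sqrt{k}\delta$, and $\abs{\inner{w'}{\xi}}\leq\norm{w'}_2\norm{\xi}_2 \leq \sqrt{k}\delta\norm{w'}_2$. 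I would therefore rewrite the difference as $\inner{w}{V^\pi} - \inner{w'}{V^\pi} = -\inner{w'}{\xi}$ using $\inner{w}{V^\pi} = \inner{w}{u}$ and $\inner{w}{u} = \inner{w'}{u}$, which isolates the genuinely small quantity and avoids the vacuous $\norm{w-w'}_2$ bound entirely.
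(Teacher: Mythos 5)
Your proof is correct, and it is essentially the paper's own argument written in dual form rather than a genuinely different route. The paper expands $V^\pi = A^{\textrm{(full)}\top}a$ in the rows of $A^\textrm{(full)}$, bounds the tail coefficients by $\sqrt{\sum_{i>d_\delta}a_i^2}\leq \sqrt{k}\delta$, and shifts $w$ to the auxiliary vector $\tilde w = w+\sum_i \inner{\basis_i}{w'}\basis_i$, which satisfies $A^\textrm{(full)}\tilde w = A^\textrm{(full)} w'$ and hence $\tilde w\cdot V^\pi = w'\cdot V^\pi$; you never introduce $\tilde w$, but your final identity $w\cdot V^\pi - w'\cdot V^\pi = -\inner{w'}{\xi}$ is literally the same quantity, since $\xi=\sum_{i>d_\delta} a_i\basis_{i-d_\delta}$, and both proofs then apply the same $\norm{\xi}_2\leq\sqrt{k}\delta$ bound (the round-$(d_\delta+1)$ selection rule plus Cauchy--Schwarz over the $\rho_j$) and the same final Cauchy--Schwarz against $\norm{w'}_2$. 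The one step you should make explicit is $\inner{w}{\xi}=0$: it does not follow merely from $\xi\perp\Span(V^{\pi_1},\ldots,V^{\pi_{d_\delta}})$, because $w$ is constrained to be orthogonal only to the $d-d_\delta$ vectors $\basis_i$, not to the whole $(k-d_\delta)$-dimensional orthogonal complement, and since $\rank(A^\textrm{(full)})=d\leq k$ a solution $w$ may carry an arbitrary component in $\Span(\{V^{\pi'}\mid \pi'\in\Pi\})^\perp$. The fix is one line --- $V^\pi$ itself lies in $\Span(\{V^{\pi'}\mid \pi'\in\Pi\})$, so its orthogonal component $\xi$ lies in $\Span(\basis_1,\ldots,\basis_{d-d_\delta})$, where $w$'s orthogonality does apply --- and this is exactly the fact the paper encodes when it writes $V^\pi$ as a combination of the rows of $A^\textrm{(full)}$; relatedly, your opening description of the $\basis_i$ as spanning $\Span(V^{\pi_1},\ldots,V^{\pi_{d_\delta}})^\perp$ is accurate only in the special case $d=k$.
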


We then show that there exists a $w \in \R^k$ satisfying $A^\textrm{(full)} w = \be_1$ such that $\abs{\hatwd\cdot V^\pi - w\cdot V^\pi}$ is small for all $\pi\in \Pi$.
\begin{restatable}{lemma}{lmmminnorm}\label{lmm:min_norm_hatw_delta}
    If $\abs{\hat \alpha_i-\alpha_i}\leq \ealpha$ and $\alpha_i\leq \calpha$ for all $i\in [d-1]$, 
    for every $\delta \geq 4 \calpha^{\frac{2}{3}}\cv d^{\frac{1}{3}}\ealpha^\frac{1}{3}$
    there exists a $w \in \R^k$ satisfying $A^\textrm{(full)} w = \be_1$ s.t. $\abs{\hatwd\cdot V^\pi - w\cdot V^\pi}\leq \bigO( \frac{\calpha\cv^4d_\delta^{\frac{3}{2}}\norm{w'}_2^2\ealpha }{\delta^2})$ for all $\pi$.
\end{restatable}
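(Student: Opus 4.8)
The plan is to turn the statement into an explicit identity for $\hatwd\cdot V^\pi - w\cdot V^\pi$ in terms of the ratio errors, and then to control the resulting coordinates through the large-component guarantee of Algorithm~\ref{alg:policy-threshold-free}. Write $\Delta_i:=\hat\alpha_i-\alpha_i$ (so $\abs{\Delta_i}\le\ealpha$) and let $A^{(\delta)}\in\R^{d_\delta\times k}$ be the exact-ratio analogue of $\hatAd$, i.e. the matrix with first row $V^{\pi_1\top}$ and $i$-th row $(\alpha_{i-1}V^{\pi_1}-V^{\pi_i})^\top$; then $\hatAd=A^{(\delta)}+E$ with $E=\bg\,V^{\pi_1\top}$ a \emph{rank-one} perturbation, $\bg=(0,\Delta_1,\dots,\Delta_{d_\delta-1})^\top$, and $\norm{E}_2\le\sqrt{d_\delta}\,\ealpha\,\cv$. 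I record three facts: (a) the minimum-norm solution $\hatwd$ lies in the row space of $\hatAd$, which is exactly $\cV:=\Span(V^{\pi_1},\dots,V^{\pi_{d_\delta}})$, so $\hatwd$ is orthogonal to each $\basis_\ell$; (b) by Eq~\eqref{eq:wexact}, $\alpha_{i-1}=\inner{w'}{V^{\pi_i}}$ and $\inner{w'}{V^{\pi_1}}=1$, so $w'$ solves $A^{(\delta)}x=\be_1$; (c) since $\rank(A^\textrm{(full)})=d$ (the Observation), the system $A^\textrm{(full)}x=\be_1$ is consistent, so a solution $w$ exists.

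Next I would compute the inner product directly. Expand $V^\pi=\sum_{j=1}^{d}c_j r_j$ in the rows $r_1,\dots,r_d$ of $A^\textrm{(full)}$ (legitimate, as these span $\{V^\pi\mid\pi\in\Pi\}$). From $A^\textrm{(full)}w=\be_1$ we get $w\cdot r_j=\ind{j=1}$; from the first equation of $\hatAd\hatwd=\be_1$ we get $V^{\pi_1}\cdot\hatwd=1$, hence $\hatwd\cdot r_1=1$; combining the $i$-th equation $(\hat\alpha_{i-1}V^{\pi_1}-V^{\pi_i})\cdot\hatwd=0$ with $V^{\pi_1}\cdot\hatwd=1$ gives $\hatwd\cdot r_j=\alpha_{j-1}-\hat\alpha_{j-1}=-\Delta_{j-1}$ for $2\le j\le d_\delta$; and by fact (a), $\hatwd\cdot r_j=0$ for $j>d_\delta$. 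Every term therefore cancels except the middle block, yielding the key identity $(\hatwd-w)\cdot V^\pi=-\sum_{j=2}^{d_\delta}c_j\Delta_{j-1}$, which is in particular independent of the chosen solution $w$.

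The \textbf{main obstacle} is to bound $\sum_{j=2}^{d_\delta}\abs{c_j}$ (equivalently $\norm{(c_2,\dots,c_{d_\delta})}_2$) by a \emph{polynomial} in $1/\delta$. Passing through the smallest singular value of $A^{(\delta)}$ naively fails: Gram--Schmidt only certifies it exponentially small in $d_\delta$. The way out is the selection rule of line~\ref{alg-line: returnedu}: each $u_i$ satisfies $u_i\perp\Span(V^{\pi_1},\dots,V^{\pi_{i-1}})$, $\abs{\inner{u_i}{V^{\pi_i}}}>\delta$, and---because $\pi^{\pm u_i}$ maximizes $\pm\inner{u_i}{V^\pi}$---no policy exceeds this component, so $\abs{\inner{u_i}{V^\pi}}\le\abs{\inner{u_i}{V^{\pi_i}}}$ for \emph{all} $\pi$. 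Writing the projection $P_\cV V^\pi=\sum_i b_i V^{\pi_i}$ (so $c_j=-b_j$ for $j\ge2$) and testing against $u_1,\dots,u_{d_\delta}$ produces an upper-triangular system $N b=g$ with $N_{i\ell}=\inner{u_i}{V^{\pi_\ell}}$, whose diagonal obeys $\abs{N_{ii}}>\delta$, whose off-diagonal entries are bounded \emph{by the diagonal of their own row} ($\abs{N_{i\ell}}\le\abs{N_{ii}}$), and whose right-hand side is controlled up to the $\sqrt{k}\,\delta$ slack of the projection onto $\cV^\perp$ from the proof sketch. Off-diagonals bounded by the diagonal is not by itself enough (a Kahan-type matrix still has exponentially small smallest singular value); the crucial extra ingredient is the nonnegativity of all value vectors together with the preference geometry $w'\ge\bZero,\ \inner{w'}{V^{\pi_1}}=1$, which rules out the adversarial sign pattern and collapses the condition number to order $\norm{w'}_2^2/\delta^2$ up to $\poly(\cv,d_\delta)$ factors. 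I expect this polynomial control of the condition number to be the technically hardest step.

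Finally I would assemble the pieces: from the key identity and $\abs{\Delta_i}\le\ealpha$, $\abs{(\hatwd-w)\cdot V^\pi}\le\ealpha\sum_{j=2}^{d_\delta}\abs{c_j}\le\bigO\!\big(\tfrac{\calpha\cv^4 d_\delta^{3/2}\norm{w'}_2^2\ealpha}{\delta^2}\big)$, the hypothesis $\delta\ge4\calpha^{2/3}\cv d^{1/3}\ealpha^{1/3}$ guaranteeing that the rank-one perturbation $E$ is small relative to the just-established condition number, so that the coordinate bound remains valid. Together with Lemma~\ref{lmm:gap-w-w'} and the triangle inequality this immediately gives Lemma~\ref{lmm:gap-hatw-w}.
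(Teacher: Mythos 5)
Your key identity is correct, and it is a genuinely clean reformulation: since $\hatwd$ is the minimum-norm solution it lies in the row space of $\hatAd$, which equals $\Span(V^{\pi_1},\ldots,V^{\pi_{d_\delta}})$, so expanding $V^\pi=\sum_{j=1}^d c_j r_j$ in the rows of $A^\textrm{(full)}$ and using $\hatwd\cdot r_1=1$, $\hatwd\cdot r_j=-\Delta_{j-1}$ for $2\le j\le d_\delta$, $\hatwd\cdot r_j=0$ for $j>d_\delta$, and $w\cdot r_j=\ind{j=1}$ indeed yields $(\hatwd-w)\cdot V^\pi=-\sum_{j=2}^{d_\delta}c_j\Delta_{j-1}$, independently of which solution $w$ of $A^\textrm{(full)}x=\be_1$ is taken. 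Moreover, because the lemma must hold for every admissible realization of the $\hat\alpha_i$'s and the $c_j$'s do not depend on them, your reduction is not merely sufficient but equivalent: the lemma holds if and only if $\sum_{j=2}^{d_\delta}\abs{c_j}=\bigO\left(\calpha\cv^4 d_\delta^{3/2}\norm{w'}_2^2/\delta^2\right)$ uniformly over $\pi$.

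The problem is that this coefficient bound \emph{is} the entire content of the lemma, and your proposal does not prove it. You correctly diagnose that the upper-triangular system with $\abs{N_{ii}}>\delta$ and $\abs{N_{i\ell}}\le\abs{N_{ii}}$ only yields a Kahan-type bound, exponential in $d_\delta$, and then you assert that nonnegativity of the value vectors plus the preference geometry ``rules out the adversarial sign pattern.'' That assertion is unsupported, and I see no mechanism by which it could work: the directions $u_i$ have entries of both signs, so nonnegativity of the $V^{\pi_\ell}$ imposes no sign constraint on $N_{i\ell}=\inner{u_i}{V^{\pi_\ell}}$, and the paper never invokes reward nonnegativity in this lemma (it is used only in Lemma~\ref{lmm:V1}). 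The paper's actual route avoids expanding $V^\pi$ in the possibly ill-conditioned row basis altogether: it (i) proves via Lemma~\ref{lmm:Balcan} (Balcan et al.) that the angle between $\Span(A^{(\delta)}_{2:})$ and $\Span(\hat A^{(\delta)}_{2:})$ is at most $\eta_{\ealpha,\delta}=\bigO(\calpha\cv^2 d_\delta\ealpha/\delta^2)$ --- and this, not a perturbation-versus-condition-number comparison, is where the hypothesis $\delta\ge 4\calpha^{2/3}\cv d^{1/3}\ealpha^{1/3}$ enters; (ii) constructs $w$ explicitly from $\hatwd$ by subtracting its component in $\Span(A^{(\delta)}_{2:})$ and rescaling (Eq~\eqref{eq:creatw}), so that $\abs{(\hatwd-w)\cdot V^\pi}$ is controlled by Cauchy--Schwarz against $\norm{V^\pi}_2\le\cv$ and the angle bound; and (iii) separately shows $\norm{\hatwd}_2\le\frac{3}{2}\norm{w'}_2$ by exhibiting the auxiliary solution $\hatwo$ built from $w'$, exploiting minimality of $\norm{\hatwd}_2$. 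These three steps are exactly the polynomial control your argument needs and lacks; without an argument at that level, your proof is incomplete at its central step (and your closing appeal to the rank-one perturbation $E$ plays no role once the exact identity is in hand).
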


We now derive Lemma~\ref{lmm:gap-hatw-w} using the above two lemmas.
\begin{proof}[Proof of Lemma~\ref{lmm:gap-hatw-w}]
Let $w$ be defined in Lemma~\ref{lmm:min_norm_hatw_delta}. Then for any policy $\pi$, we have
\begin{align*}
    &\abs{\hatwd  \cdot V^\pi - w'\cdot V^\pi} \leq \abs{\hatwd \cdot V^\pi - w\cdot V^\pi}+\abs{w\cdot V^\pi - w'\cdot V^\pi}
    \\
    \leq& \bigO( \frac{\calpha\cv^4d_\delta^{\frac{3}{2}}\norm{w'}_2^2\ealpha }{\delta^2}+\sqrt{k} \delta \norm{w'}_2) \,,
\end{align*}
by applying Lemma~\ref{lmm:gap-w-w'} and \ref{lmm:min_norm_hatw_delta}.
\end{proof}

\section{Proofs of Lemma~\ref{lmm:gap-w-w'} and Lemma~\ref{lmm:min_norm_hatw_delta}}\label{app:tool-lmms}
\lmmwwprime*
\begin{proof}[Proof of Lemma~\ref{lmm:gap-w-w'}]
Since $\Span(A^\textrm{(full)}) = \Span(\{V^\pi|\pi\in \Pi\})$, for every policy $\pi$, the value vector can be represented as a linear combination of row vectors of $A^\textrm{(full)}$, i.e., there exists $a = (a_1,\dots,a_d)\in \R^d$ s.t. 
\begin{align}\label{eq:lem4Helper}
    V^\pi = \sum_{i=1}^d{a_i A^\textrm{(full)}_i} = A^{\textrm{(full)}\top} a\,.
\end{align}
Now, for any unit vector $\xi \in \Span(\basis_1,\ldots,\basis_{d-d_\delta})$, we have $\inner{V^\pi}{\xi} \leq \sqrt{k} \delta$.

The reason is that at each round ${d_\delta+1}$, we pick an orthonormal basis $\rho_1,\ldots,\rho_{k-{d_\delta}}$ of $\Span(V^{\pi_1},\ldots,V^{\pi_{d_\delta}})^\perp$  (line~\ref{alg-line:orthonormal-basis} in Algorithm~\ref{alg:policy-threshold-free}) and pick $u_{d_\delta+1}$ to be the one in which there exists a policy with the largest component as described in line~\ref{alg-line:largest-component}.
Hence, $\abs{\inner{\rho_j}{V^\pi}}\leq \delta$ for all $j\in [k-{d_\delta}]$.

It follows from  Cauchy-Schwarz inequality that  $\inner{\xi}{V^\pi}= \sum_{j=1}^{k-{d_\delta}} \inner{\xi}{\rho_j}\inner{\rho_j}{V^\pi} \leq \sqrt{k}\delta$.

Combining  with the observation that $b_1,\ldots,b_{d-d_\delta}$ are pairwise 
 orthogonal 
and that each of them is 
orthogonal to $\Span(V^{\pi_1},\ldots,V^{\pi_{d_\delta}})$ we have $$\sum_{i={d_\delta+1}}^d a_i^2 = \abs{\inner{V^\pi}{\sum_{i=d_\delta+1}^d a_{i} \basis_{i-d_\delta}}} \leq \sqrt{\sum_{i={d_\delta+1}}^d a_i^2} \sqrt{k}\delta\,,$$
which implies that
\begin{equation}
    \sqrt{\sum_{i={d_\delta+1}}^d a_i^2} \leq \sqrt{k}\delta\,.\label{eq:anorm}
\end{equation}

Since $w'$ satisfies $Aw' =\be_1$, we have $$A^\textrm{(full)} w' = (1,0,\ldots,0, \inner{\basis_1}{w'},\ldots,\inner{\basis_{d-d_\delta}}{w'})\,.$$
For any $w \in \R^k$ satisfying $A^\textrm{(full)} w = \be_1$, consider $\tilde w = w +\sum_{i= 1}^{d-d_\delta}\inner{\basis_{i}}{w'} \basis_{i}$. 
Then we have $A^\textrm{(full)}\tilde w = A^\textrm{(full)}w'$.

Thus, applying (\ref{eq:lem4Helper}) twice, we get
\begin{align*}
    \tilde w\cdot V^\pi =  \tilde w^\top A^{\textrm{(full)}\top}a = w'^\top A^{\textrm{(full)}\top}a = w'\cdot V^\pi\,.
\end{align*} 
Hence, 
\begin{align*}
    &\abs{w\cdot V^\pi- w'\cdot V^\pi} = \abs{w\cdot V^\pi- \tilde w\cdot V^\pi} \stackrel{(a)}=\abs{\sum_{i=1}^d a_i (w - \tilde w)\cdot A^\textrm{(full)}_i  }\\
    =& \abs{\sum_{i=d_\delta +1}^{d} a_i \inner{\basis_{i-d_\delta}}{w'}} 
    \stackrel{(b)}{\leq}  \sqrt{\sum_{i={d_\delta+1}}^d a_i^2} \norm{w'}_2 \stackrel{(c)}{\leq} \sqrt{k} \delta \norm{w'}_2\,,
\end{align*}
where Eq~(a) follows from (\ref{eq:lem4Helper}), inequality~(b) from Cauchy-Schwarz, and inequality~(c) from applying \eqref{eq:anorm}.

\end{proof}

\lmmminnorm*
Before proving Lemma~\ref{lmm:min_norm_hatw_delta}, we introduce some notations and a claim.
\begin{itemize}
    \item For any $x,y\in\R^k$, let $\theta(x,y)$ denotes the angle between $x$ and $y$.
    \item For any subspace $U\subset \R^k$, let $\theta(x,U) := \min_{y\in U}\theta(x,y)$.
    \item For any two subspaces $U,U'\subset \R^k$, we define $\theta(U,U')$ as $\theta(U,U') = \max_{x\in U} \min_{y\in U'}\theta(x,y)$.
    \item For any matrix $M$, let $M_i$ denote the $i$-th row vector of $M$, $M_{i:j}$ denote the submatrix of $M$ composed of rows $i,i+1,\ldots,j$, and $M_{i:}$ denote the submatrix composed of all rows $j\geq i$.
    \item Let $\Span(M)$ denote the span of the rows of $M$.
\end{itemize}

Recall that $\hat A\in \R^{d\times k}$ is defined as 
\begin{align*}
    \hat A = \begin{pmatrix}
    V^{\pi_1\top}\\
    (\hat \alpha_1 V^{\pi_1} -V^{\pi_2})^\top\\
    \vdots\\
    (\hat \alpha_{d-1} V^{\pi_1} -V^{\pi_d})^\top
    \end{pmatrix}\,,
\end{align*}
which is the approximation of matrix $A\in \R^{d\times k}$ defined by true values of $\alpha_i$, i.e.,
\begin{align*}
    A = \begin{pmatrix}
    V^{\pi_1\top}\\
    (\alpha_1 V^{\pi_1} -V^{\pi_2})^\top\\
    \vdots\\
    (\alpha_{d-1} V^{\pi_1} -V^{\pi_d})^\top
    \end{pmatrix}\,.
\end{align*}
We denote by $\hat A^{(\delta)} = \hat A_{1:d_\delta}, A^{(\delta)}=A_{1:d_\delta}\in \R^{d_\delta\times k}$ the sub-matrices comprised of the first $d_\delta$ rows of $\hat A$ and $A$ respectively.

\begin{restatable}{claim}{lmmspanangle}\label{clm:spanangle}
If $\abs{\hat \alpha_i-\alpha_i}\leq \ealpha$ and $\alpha_i\leq \calpha$ for all $i\in [d-1]$, 
    for every $\delta \geq 4 \calpha^{\frac{2}{3}}\cv d^{\frac{1}{3}}\ealpha^\frac{1}{3}$, we have
\begin{equation}
    \theta(\Span(A^{(\delta)}_{2:}), \Span(\hat A^{(\delta)}_{2:})) \leq \eta_{\ealpha,\delta}\,,\label{eq:theta-A-hatA}
\end{equation}
and
\begin{equation}
    \theta(\Span(\hat A^{(\delta)}_{2:}), \Span(A^{(\delta)}_{2:})) \leq 
\eta_{\ealpha,\delta}\,,\label{eq:theta-hatA-A}
\end{equation}
where $\eta_{\ealpha,\delta} = \frac{4\calpha\cv^2 d_\delta\ealpha }{\delta^2}$.
\end{restatable}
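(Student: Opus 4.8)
The plan is to read \eqref{eq:theta-A-hatA} and \eqref{eq:theta-hatA-A} as a subspace-perturbation statement: $\hat A^{(\delta)}_{2:}$ is a perturbation of $A^{(\delta)}_{2:}$, and I want to bound how far the row space rotates. The first thing I would record is that the perturbation is confined to a single direction. For each $i\in\{2,\ldots,d_\delta\}$ the two matrices differ in row $i$ only by $(\hat\alpha_{i-1}-\alpha_{i-1})V^{\pi_1}$, so $\hat A^{(\delta)}_{2:}-A^{(\delta)}_{2:}$ is rank one with every row parallel to $V^{\pi_1}$, and its Frobenius norm is at most $\sqrt{d_\delta}\,\ealpha\cv$ (using $\abs{\hat\alpha_i-\alpha_i}\le\ealpha$ and $\norm{V^{\pi_1}}_2\le\cv$). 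This confinement is what lets the final bound be clean.

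Given this, I would prove both inequalities by the ``exhibit a nearby vector'' characterization of $\theta$. For an arbitrary $x=\sum_{i}z_i(\alpha_i V^{\pi_1}-V^{\pi_{i+1}})\in\Span(A^{(\delta)}_{2:})$ take the candidate $y=\sum_i z_i(\hat\alpha_i V^{\pi_1}-V^{\pi_{i+1}})\in\Span(\hat A^{(\delta)}_{2:})$; then $y-x=\big(\sum_i z_i(\hat\alpha_i-\alpha_i)\big)V^{\pi_1}$ lies along $V^{\pi_1}$ and has norm at most $\ealpha\norm{z}_1\cv\le\sqrt{d_\delta}\,\ealpha\cv\norm{z}_2$, so $\sin\theta(x,y)\le \norm{y-x}_2/\norm{x}_2$. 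Running the identical construction with the roles of $\hat\alpha$ and $\alpha$ exchanged gives \eqref{eq:theta-hatA-A}, so the two directions are symmetric and everything reduces to bounding $\norm{x}_2$ below in terms of $\norm{z}_2$, i.e.\ to a lower bound $\sigma_{\min}(A^{(\delta)}_{2:})\ge \delta^2/(4\calpha\cv\sqrt{d_\delta})$, which would immediately yield $\sin\theta\le \eta_{\ealpha,\delta}$.

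The crux is this conditioning estimate, and it is where the whole construction of Algorithm~\ref{alg:policy-threshold-free} enters. I would work in the orthonormal directions $u_2,\ldots,u_{d_\delta}$ it produces: each $u_{i+1}$ is orthogonal to $\Span(V^{\pi_1},\ldots,V^{\pi_i})$ and, since $i+1\le d_\delta$, satisfies $\abs{\inner{u_{i+1}}{V^{\pi_{i+1}}}}>\delta$, while the maximality in line~\ref{alg-line:largest-component} guarantees $\abs{\inner{u_{i+1}}{V^{\pi}}}\le \abs{\inner{u_{i+1}}{V^{\pi_{i+1}}}}$ for every policy $\pi$. Expanding $\sum_i z_i(\alpha_i V^{\pi_1}-V^{\pi_{i+1}})$ in the basis $u_1,\ldots,u_{d_\delta}$ makes the coordinate matrix lower triangular with diagonal entries exceeding $\delta$ in magnitude, and projecting onto $u_1^\perp$ kills the $V^{\pi_1}$ part (the direction carrying the $\alpha_i$, which is why $\calpha$ and $\cv$ appear in the denominator). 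I would then recover $\norm{z}_2$ from $\norm{\sum_i z_i b_i}_2$ by back-substitution starting from the highest index, where only the diagonal ($>\delta$) contributes, using the maximality bound to control the off-diagonal cross terms.

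The hard part is precisely keeping this back-substitution from amplifying the perturbation: through a triangular system it is very easy to accumulate geometric blow-up, and the sole purpose of choosing $u_{i+1}$ to carry the \emph{largest} remaining component is to suppress exactly that amplification so the floor on $\sigma_{\min}$ stays polynomial, of the form $\delta^2/(\calpha\cv\sqrt{d_\delta})$. Making this quantitative — combining the fresh $\delta$-sized orthogonal component available at each stage with the maximality property — is the technical heart of the claim. The last routine check is that the hypothesis $\delta\ge 4\calpha^{2/3}\cv d^{1/3}\ealpha^{1/3}$ forces $\norm{y-x}_2$ to sit well below $\norm{x}_2$, so that we remain in the small-angle regime where the $\sin\theta\le\norm{y-x}_2/\norm{x}_2$ estimate is valid and the stated bound $\eta_{\ealpha,\delta}=4\calpha\cv^2 d_\delta\ealpha/\delta^2$ follows.
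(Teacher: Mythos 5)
Your opening moves are fine --- the perturbation is indeed rank one along $V^{\pi_1}$, and $\norm{y-x}_2\leq \sqrt{d_\delta}\,\ealpha\cv\norm{z}_2$ for the same-coefficient pairing --- but that pairing is exactly where the proposal breaks. Pairing $x=\sum_i z_i(\alpha_i V^{\pi_1}-V^{\pi_{i+1}})$ with $y=\sum_i z_i(\hat\alpha_i V^{\pi_1}-V^{\pi_{i+1}})$ forces you to prove $\sigma_{\min}(A^{(\delta)}_{2:})\geq \delta^2/(4\calpha\cv\sqrt{d_\delta})$, and this bound does not follow from the structure you cite; it is false for matrices satisfying that structure. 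First, a repairable slip: the $u_i$ are \emph{not} orthonormal as a set (each $u_i$ is orthogonal to $\Span(V^{\pi_1},\ldots,V^{\pi_{i-1}})$, not to the earlier $u_j$'s), so you must work in the Gram--Schmidt directions $\psi_i$ instead. But then what the algorithm actually guarantees is a lower-triangular coefficient matrix whose diagonal entries exceed $\delta$ and whose off-diagonal entries in column $j$ are only bounded by $\sqrt{k}$ times the $j$-th diagonal entry (this is Eq.~\eqref{eq:psi-u}). A triangular matrix with these properties can have smallest singular value of order $\delta/(\sqrt{k}+1)^{d_\delta}$: take every diagonal entry equal to $\delta$ and every subdiagonal entry equal to $-\sqrt{k}\,\delta$ (a Kahan-type matrix), whose inverse has entries growing like $(1+\sqrt{k})^{i-j}$. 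So back-substitution unavoidably loses a factor of order $\sqrt{k}+1$ \emph{per level}; the maximality in line~\ref{alg-line:largest-component} caps the per-level factor at $\sqrt{k}$ but does nothing to stop the accumulation. The paper itself is evidence that this is not a technicality you can engineer away: in the proof of Lemma~\ref{lmm:est-without-tau} the very same back-substitution (Eq.~\eqref{eq:inducteps}) produces the exponential factor $(\sqrt{k}+1)^{d-d_\delta}$ that then appears in Theorem~\ref{thm:est-without-tau}. Your plan needs that factor to be polynomial, and it is not.

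The conceptual reason the paper's proof survives while yours does not is that the subspace angle compares \emph{sets of directions}, and when the coefficient map is ill-conditioned the nearest vector of $\Span(\hat A^{(\delta)}_{2:})$ to a given $x$ is not the one with the same coordinates $z$; committing to the same-coefficient witness is lossy precisely in the regime you are worried about. The paper never inverts the triangular system at all. It proves two per-row \emph{angular} facts --- $\theta(A^{(\delta)}_i,\hat A^{(\delta)}_i)\leq \ealpha\cv/\delta$ (via the decomposition of $V^{\pi_i}$ along and orthogonal to $V^{\pi_1}$, using $v_i^\perp\geq\delta$ and the $1$-Lipschitzness of $\arctan$) and $\theta(\hat A^{(\delta)}_i,\Span(\hat A^{(\delta)}_{2:i-1}))\geq \delta/((\calpha+1)\cv)$ (via $\abs{\hat A^{(\delta)}_i\cdot u_i}\geq\delta$ and $\norm{\hat A^{(\delta)}_i}_2\leq(\calpha+1)\cv$) --- and then invokes Lemma 3 of \cite{Balcan2015}, a subspace-perturbation lemma that converts exactly these two ingredients into a subspace-angle bound of $2d_\delta\,\eacc/\gamma_\text{new}$, linear in $d_\delta$; the hypothesis $\delta\geq 4\calpha^{2/3}\cv d^{1/3}\ealpha^{1/3}$ is what certifies that lemma's admissibility condition $\eacc\leq\gamma_\text{new}^2/(10 d_\delta)$. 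To salvage your route you would have to either prove such a subspace-perturbation lemma yourself or identify additional structure that rules out Kahan-type conditioning of $A^{(\delta)}_{2:}$; neither appears in your outline, so the crux of the claim remains unproved.
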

To prove the above claim, we use the following lemma by~\cite{Balcan2015}.

\begin{lemma}[Lemma 3 of~\cite{Balcan2015}]\label{lmm:Balcan}
Let $U_l = \Span(\xi_1,\ldots,\xi_l)$ and $\hat U_l = \Span(\hat \xi_1,\ldots,\hat \xi_l)$.
Let $\eacc, \gamma_\text{new}\geq 0$ and $\eacc\leq \gamma_\text{new}^2/(10l)$, and assume that $\theta(\hat \xi_i, \hat U_{i-1})\geq \gamma_\text{new}$ for $i=2,\ldots, l$, and that  $\theta(\xi_{i}, \hat \xi_{i})\leq \eacc$ for $i=1,\ldots, l$.

Then,
$$\theta(U_l, \hat U_l)\leq 2l\frac{\eacc}{\gamma_\text{new}}\,.$$
\end{lemma}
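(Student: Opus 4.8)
The plan is to prove the bound by induction on the number of vectors $l$, reducing the subspace-gap estimate to a per-step perturbation analysis. Writing $P_l^\perp = \Proj_{\hat U_l^\perp}$, I would use that for the (equidimensional) subspaces $U_l,\hat U_l$ one has $\sin\theta(U_l,\hat U_l) = \max_{x\in U_l,\,\norm{x}=1}\norm{P_l^\perp x}$, so it suffices to bound $\norm{P_l^\perp x}$ for an arbitrary unit $x\in U_l$. The base case $l=1$ is immediate: $U_1,\hat U_1$ are lines, so $\theta(U_1,\hat U_1)=\theta(\xi_1,\hat\xi_1)\le\eacc\le 2\eacc/\gamma_{\text{new}}$ since $\gamma_{\text{new}}\le\pi/2<2$. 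I would then set $\beta_j := \theta(U_j,\hat U_j)$ and aim for the additive recursion $\beta_l\le\beta_{l-1}+2\eacc/\gamma_{\text{new}}$, which unrolls to the claimed bound.

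For the inductive step I would decompose a unit $x\in U_l$ as $x=a+b$ with $a=\Proj_{U_{l-1}}x\in U_{l-1}$ and $b=x-a$ orthogonal to $U_{l-1}$; since $U_l=U_{l-1}\oplus\Span(\xi_l)$, the vector $b$ is parallel to the new direction $(\xi_l)_\perp:=\xi_l-\Proj_{U_{l-1}}\xi_l$. The term $a$ is handled by the inductive hypothesis: as $\hat U_{l-1}\subseteq\hat U_l$ we get $\norm{P_l^\perp a}\le\norm{\Proj_{\hat U_{l-1}^\perp}a}\le\norm{a}\sin\beta_{l-1}$. For $b$ I would use that $\xi_l$ lies within angle $\eacc$ of $\hat\xi_l\in\hat U_l$ and that $\Proj_{U_{l-1}}\xi_l$ lies within angle $\beta_{l-1}$ of $\hat U_{l-1}\subseteq\hat U_l$, so the \emph{absolute} component of $(\xi_l)_\perp$ outside $\hat U_l$ is $O((\eacc+\beta_{l-1})\norm{\xi_l})$. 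Converting this to an angle requires dividing by $\norm{(\xi_l)_\perp}=\norm{\xi_l}\sin\theta(\xi_l,U_{l-1})$, and here the non-degeneracy hypothesis is essential: by the triangle inequality for angles I would lower bound $\theta(\xi_l,U_{l-1})\ge\theta(\hat\xi_l,\hat U_{l-1})-\theta(\xi_l,\hat\xi_l)-\beta_{l-1}\ge\gamma_{\text{new}}-\eacc-\beta_{l-1}$.

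The hard part is precisely this conversion: the new direction has small norm $\sim\sin\gamma_{\text{new}}$, so passing from an absolute error to an angular error amplifies everything by $1/\sin\gamma_{\text{new}}$; a naive triangle-inequality bound then feeds the accumulated gap $\beta_{l-1}$ back into the numerator divided by $\gamma_{\text{new}}$, and over $l$ steps this threatens to compound geometrically (ratio $1+\Theta(1/\gamma_{\text{new}})$) rather than additively. Controlling this is exactly where the quantitative hypothesis $\eacc\le\gamma_{\text{new}}^2/(10l)$ is used: under it the inductive bound gives $\beta_{l-1}\le 2(l-1)\eacc/\gamma_{\text{new}}\le\gamma_{\text{new}}/5$, so throughout the recursion $\beta_{l-1}$ stays a small \emph{constant fraction} of $\gamma_{\text{new}}$; this keeps $\theta(\xi_l,U_{l-1})\ge\tfrac12\gamma_{\text{new}}$ (hence the amplification factor a bounded $O(1/\gamma_{\text{new}})$) and forces the amplified $\beta_{l-1}/\gamma_{\text{new}}$ contribution to be dominated by the clean $O(\eacc/\gamma_{\text{new}})$ per-step term. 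The delicate bookkeeping turning ``$\beta_{l-1}$ is a small fraction of $\gamma_{\text{new}}$'' into an honest additive increment of at most $2\eacc/\gamma_{\text{new}}$ (rather than a multiplicative blow-up) is the main obstacle; I would carry it out either through the Pythagorean identity $\norm{P_l^\perp x}^2=\norm{\Proj_{\hat U_{l-1}^\perp}x}^2-\inner{x}{\hat m}^2$, with $\hat m$ the unit new direction of $\hat U_l$, so that the first-order cross terms cancel, or equivalently by tracking Gram--Schmidt bases $q_i,\hat q_i$ and bounding $\norm{q_i-\hat q_i}$ via the projection-difference estimate $\norm{\Proj_{U_{i-1}}-\Proj_{\hat U_{i-1}}}_{\mathrm{op}}=\sin\beta_{i-1}$ in place of a crude sum of earlier per-vector errors.
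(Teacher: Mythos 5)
The first thing to note is that the paper does not prove this lemma at all: it is imported as a black box (``Lemma 3 of \citet{Balcan2015}'') and used only inside the proof of Claim~\ref{clm:spanangle}, so there is no internal proof to compare your argument against; I can only judge it on its own terms. On those terms it has a fatal gap, and the gap sits exactly at the step you defer. You correctly isolate the danger (the accumulated error $\beta_{l-1}$ re-enters the per-step bound divided by $\gamma_{\text{new}}$), but your claim that the hypothesis $\eacc\leq\gamma_{\text{new}}^2/(10l)$ forces the amplified $\beta_{l-1}/\gamma_{\text{new}}$ contribution to be ``dominated by the clean $O(\eacc/\gamma_{\text{new}})$ per-step term'' is arithmetically wrong: with $\beta_{l-1}\approx 2(l-1)\eacc/\gamma_{\text{new}}$, that contribution is $\approx 2(l-1)\eacc/\gamma_{\text{new}}^2$, which exceeds $\eacc/\gamma_{\text{new}}$ by the factor $2(l-1)/\gamma_{\text{new}}$; keeping $\beta_{l-1}\leq\gamma_{\text{new}}/5$ bounds the amplification \emph{factor}, not the product. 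The recursion you actually get is $\beta_l\lesssim\beta_{l-1}\left(1+C/\gamma_{\text{new}}\right)+C\eacc/\gamma_{\text{new}}$, which compounds geometrically, and neither of your proposed rescues (the Pythagorean identity, or Gram--Schmidt tracking via $\norm{\Proj_{U_{i-1}}-\Proj_{\hat U_{i-1}}}=\sin\beta_{i-1}$) removes the offending term: the new orthogonal direction really can absorb the old tilt and then be amplified by $1/\sin\gamma_{\text{new}}$ upon normalization.

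In fact no bookkeeping can close this gap, because the statement as written is false for $l\geq 3$ and small $\gamma_{\text{new}}$; the compounding you feared is realizable. Take $l=3$, $\gamma=\gamma_{\text{new}}\leq 1/4$, $\eacc=\gamma^2/30$, and in $\R^5$ set
\begin{align*}
&\hat \xi_1=e_1,\qquad \hat \xi_2=\cos\gamma\,e_1+\sin\gamma\,e_2,\qquad \hat \xi_3=\cos\gamma\,e_2+\sin\gamma\,e_3,\\
&\xi_1=\cos\eacc\,e_1+\sin\eacc\,e_4,\qquad \xi_2=\cos\eacc\,\hat \xi_2-\sin\eacc\,e_4,\qquad \xi_3=\cos\eacc\,\hat \xi_3+\sin\eacc\,e_5.
\end{align*}
Every hypothesis holds with equality: $\theta(\hat \xi_2,\hat U_1)=\theta(\hat \xi_3,\hat U_2)=\gamma$ and $\theta(\xi_i,\hat \xi_i)=\eacc$ (the analogous angle conditions also hold for the unhatted sequence, so no reinterpretation of the hypotheses helps). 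Now the vector
\begin{equation*}
z \;=\; \frac{\cos^2\gamma}{\sin\gamma}\,\xi_1\;-\;\frac{\cos\gamma}{\sin\gamma}\,\xi_2\;+\;\xi_3\;\in\;U_3
\end{equation*}
has vanishing $e_1$ and $e_2$ coordinates: $z=\cos\eacc\sin\gamma\,e_3+\sin\eacc\,\frac{\cos\gamma(1+\cos\gamma)}{\sin\gamma}\,e_4+\sin\eacc\,e_5$. Hence
\begin{equation*}
\tan\theta(z,\hat U_3)\;\geq\;\tan\eacc\cdot\frac{\cos\gamma(1+\cos\gamma)}{\sin^2\gamma}\;\approx\;\frac{2\eacc}{\gamma^2},
\end{equation*}
while the lemma asserts $\theta(U_3,\hat U_3)\leq 6\eacc/\gamma$; the ratio of the two is $\approx 1/(3\gamma)$, already $>3$ at $\gamma=0.1$ and unbounded as $\gamma\to 0$. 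The mechanism is precisely your recursion with no cancellation: $U_2$ is tilted from $\hat U_2$ by $\approx 2\eacc/\gamma$ along $e_4\perp\hat U_3$; the projection of $\xi_3$ onto $U_2$ (norm $\approx\cos\gamma$) inherits that tilt, and subtracting it injects an $e_4$-component of size $\approx 2\eacc/\gamma$ into the new direction, which is then amplified by $1/\sin\gamma$ when normalized --- a multiplicative factor of roughly $\cot\gamma$ per step. So the honest output of your induction is a bound of the form $(\eacc/\gamma_{\text{new}})(1+c/\gamma_{\text{new}})^{l}$, and the stated bound can only hold under stronger hypotheses (e.g., $\gamma_{\text{new}}$ bounded below by a constant, or new directions taken orthogonal to the previous span). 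If Lemma 3 of \citet{Balcan2015} is correct in its original form, it must rely on structure not preserved in this paper's restatement; as restated here, the lemma admits no proof.
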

\begin{proof}[Proof of Claim~\ref{clm:spanangle}]
For all $2\leq i \leq d_\delta$, we have that
\begin{align*}
    \theta(\hat A^{(\delta)}_i, \Span(\hat A^{(\delta)}_{2:i-1})) \geq& \theta(\hat A^{(\delta)}_i, \Span(\hat A^{(\delta)}_{1:i-1})) \geq \sin( \theta(\hat A^{(\delta)}_i, \Span(\hat A^{(\delta)}_{1:i-1})))\\ \stackrel{(a)}{\geq}&\frac{\abs{\hat A^{(\delta)}_i \cdot u_i}}{\norm{\hat A^{(\delta)}_i}_2}\stackrel{(b)}{\geq}
    \frac{\delta}{\norm{\hat A^{(\delta)}_i}_2} = \frac{\delta}{\norm{\hat \alpha_{i-1} V^{\pi_1} -V^{\pi_i}}_2} \geq \frac{\delta}{(\calpha+1)\cv}
\,,
\end{align*}
where Ineq~(a) holds as $u_i$ is orthogonal to $\Span(\hat A^{(\delta)}_{1:i-1})$ according to line~\ref{alg-line:orthonormal-basis} of Algorithm~\ref{alg:policy-threshold-free} and Ineq~(b) holds due to $\abs{\hat A^{(\delta)}_i \cdot u_i} = \abs{V^{\pi_i}\cdot u_i}\geq \delta$. 
The last inequality holds due to $\norm{\hat \alpha_{i-1} V^{\pi_1} -V^{\pi_i}}_2 \leq \hat \alpha_{i-1} \norm{V^{\pi_1}}_2+\norm{V_i}_2 \leq (\calpha+1)\cv$.

Similarly, we also have
\[\theta(A^{(\delta)}_i, \Span(A^{(\delta)}_{2:i-1}))\geq \frac{\delta}{(\calpha+1)\cv}\,.\]

We continue by decomposing $V^{\pi_i}$ in the direction of $V^{\pi_1}$ and the direction perpendicular to $V^{\pi_1}$. 

For convince, we denote $v_i^\parallel: = V^{\pi_i}\cdot \frac{V^{\pi_1}}{\norm{V^{\pi_1}}_2}$, $V_i^\parallel :=v_i^\parallel \frac{V^{\pi_1}}{\norm{V^{\pi_1}}_2}$, $V_i^\perp :=V^{\pi_i} -V_i^\parallel$ and $v_i^\perp := \norm{V_i^\perp}_2$.

Then we have 
\[\theta( A^{(\delta)}_i,\hat A^{(\delta)}_i) = \theta(\alpha_{i-1} V^{\pi_1} -V^{\pi_i},\hat\alpha_{i-1} V^{\pi_1} -V^{\pi_i})= \theta(\alpha_{i-1} V^{\pi_1} -V_{i}^\parallel- V_i^\perp,\hat\alpha_{i-1} V^{\pi_1} -V_{i}^\parallel- V_i^\perp)\,.\]
If $(\hat \alpha_{i-1} V^{\pi_1} -V_{i}^\parallel)\cdot(\alpha_{i-1} V^{\pi_1} -V_{i}^\parallel) \geq 0$, i.e., $\hat \alpha_{i-1} V^{\pi_1} -V_{i}^\parallel $ and $\alpha_{i-1} V^{\pi_1} -V_{i}^\parallel$ are in the same direction, then
\begin{align}
    \theta(A^{(\delta)}_i,\hat  A^{(\delta)}_i) 
    &= \abs{\arctan \frac{\norm{\hat \alpha_{i-1} V^{\pi_1} -V_{i}^\parallel}_2}{v_i^\perp}-\arctan \frac{\norm{\alpha_{i-1} V^{\pi_1} -V_{i}^\parallel}_2}{v_i^\perp} }\nonumber\\
    & \leq \abs{\frac{\norm{\hat \alpha_{i-1} V^{\pi_1} -V_{i}^\parallel}_2}{v_i^\perp} - \frac{\norm{\alpha_{i-1} V^{\pi_1} -V_{i}^\parallel}_2}{v_i^\perp}}\label{eq:arctan}\\
    &=\frac{\abs{\hat \alpha_{i-1} - \alpha_{i-1}}\norm{V^{\pi_1}}_2}{v_i^\perp}\nonumber\\
    &\leq \frac{\ealpha \cv}{\delta}\,,\label{eq:applytau}
\end{align}
where Ineq~\eqref{eq:arctan} follows from the fact that the derivative of $\arctan$ is at most $1$, i.e., $\frac{\partial \arctan x}{\partial x} =\lim_{a\to x}\frac{\arctan a-\arctan x}{a-x}= \frac{1}{1+x^2}\leq 1$.

Inequality~\eqref{eq:applytau} holds since $v_i^\perp \geq \abs{\inner{V^{\pi_i}}{u_i}}\geq \delta$.

If $(\hat \alpha_{i-1} V^{\pi_1} -V_{i}^\parallel)\cdot(\alpha_{i-1} V^{\pi_1} -V_{i}^\parallel) < 0$, i.e., $\hat \alpha_{i-1} V^{\pi_1} -V_{i}^\parallel $ and $\alpha_{i-1} V^{\pi_1} -V_{i}^\parallel$ are in the opposite directions, then we have $\norm{\hat \alpha_{i-1} V^{\pi_1} -V_{i}^\parallel}_2+\norm{\alpha_{i-1} V^{\pi_1} -V_{i}^\parallel}_2 = \norm{(\hat \alpha_{i-1} - \alpha_{i-1}) V^{\pi_1}}_2\leq \ealpha \norm{V^{\pi_1}}_2$. 

Similarly, we have
\begin{align*}
    \theta(\hat A^{(\delta)}_i, A^{(\delta)}_i) 
    &= \abs{\arctan \frac{\norm{\hat \alpha_{i-1} V^{\pi_1} -V_{i}^\parallel}_2}{v_i^\perp}+\arctan \frac{\norm{\alpha_{i-1} V^{\pi_1} -V_{i}^\parallel}_2}{v_i^\perp} } \\
    & \leq \abs{\frac{\norm{\hat \alpha_{i-1} V^{\pi_1} -V_{i}^\parallel}_2}{v_i^\perp} + \frac{\norm{\alpha_{i-1} V^{\pi_1} -V_{i}^\parallel}_2}{v_i^\perp}}\\
    &\leq\frac{\ealpha \norm{V^{\pi_1}}_2}{\abs{v_i^\perp}}\nonumber\\
    &\leq \frac{\ealpha \cv}{\delta}\,.
\end{align*}
By applying Lemma~\ref{lmm:Balcan} with $\eacc =  \frac{\ealpha \cv}{\delta}$, $ \gamma_\text{new}=\frac{\delta}{(\calpha+1)\cv}$, $(\xi_i,\hat \xi_i) = (A_{i+1},\hat A_{i+1})$ (and $(\xi_i,\hat \xi_i) = (\hat A_{i+1}, A_{i+1})$), we have that when 
$\delta \geq 10^{\frac{1}{3}}(\calpha+1)^{\frac{2}{3}}\cv d_\delta^{\frac{1}{3}}\ealpha^\frac{1}{3}$,
\begin{equation*}
    \theta(\Span(A^{(\delta)}_{2:}), \Span(\hat A^{(\delta)}_{2:})) \leq \frac{2d_\delta(\calpha+1) \cv^2\ealpha }{\delta^2}=\eta_{\ealpha,\delta}\,,
\end{equation*}
and
\begin{equation*}
    \theta(\Span(\hat A^{(\delta)}_{2:}), \Span(A^{(\delta)}_{2:})) \leq 
\eta_{\ealpha,\delta}\,.
\end{equation*}
This completes the proof of Claim~\ref{clm:spanangle} since $C_\alpha\geq 1$.
\end{proof}

\begin{proof}[Proof of Lemma~\ref{lmm:min_norm_hatw_delta}]
Recall that $ \hatwd=\argmin_{\hat A^{(\delta)} x =\be_1}\norm{x}_2$ is the minimum norm solution to $\hat A^{(\delta)} x =\be_1$. 

Thus, $\inner{\hatwd}{\basis_i} = 0$ for all $i\in [d-d_\delta]$.

Let $\lambda_1,\ldots,\lambda_{d_\delta-1}$ be any orthonormal basis of $\Span(A^{(\delta)}_{2:})$.

We construct a vector $w$ satisfying $A^\textrm{(full)} w =\be_1$ by removing $\hatwd$'s component in $\Span(A^{(\delta)}_{2:})$ and rescaling.

Formally, 
\begin{equation}\label{eq:creatw}
    w := \frac{\hatwd - \sum_{i=1}^{{d_\delta}-1} \inner{\hatwd}{\lambda_i} \lambda_i}{1 -V^{\pi_1}\cdot( \sum_{i=1}^{{d_\delta}-1} \inner{\hatwd}{\lambda_i} \lambda_i)}\,.
\end{equation}

It is direct to verify that $A_1\cdot w = V^{\pi_1}\cdot w = 1$ and $A_i \cdot w = 0$ for $i=2,\ldots,d_\delta$. As a result, $A^{(\delta)}w =\be_1$.

Combining with the fact that $\hatwd$ has zero component in $\basis_i$ for all $i\in [d-d_\delta]$, we have $A^\textrm{(full)} w =\be_1$.

According to Claim~\ref{clm:spanangle}, we have $$\theta(\Span(A^{(\delta)}_{2:}), \Span(\hat A^{(\delta)}_{2:})) \leq  \eta_{\ealpha,\delta}.$$

Thus, there exist unit vectors $\tilde \lambda_1,\ldots,\tilde \lambda_{d_\delta-1}\in \Span(\hat A^{(\delta)}_{2:})$ such that $\theta(\lambda_i, \tilde \lambda_i)\leq \eta_{\ealpha,\delta}$.

Since $\hat A^{(\delta)} \hatwd = \be_1$, we have $\hatwd \cdot \tilde \lambda_i = 0$ for all 
$i=1,\ldots,d_\delta-1$, and therefore,
\begin{align*}
    \abs{\hatwd \cdot \lambda_i} =  \abs{\hatwd \cdot (\lambda_i-\tilde \lambda_i)} \leq \norm{\hatwd}_2 \eta_{\ealpha,\delta}\,.
\end{align*}
This implies that for any policy $\pi$,
\[\abs{V^\pi\cdot\sum_{i=1}^{d_\delta-1} (\hatwd\cdot \lambda_i) \lambda_i}\leq \norm{V^\pi}_2 \sqrt{d_\delta}\norm{\hatwd}_2\eta_{\ealpha,\delta} \leq \cv\sqrt{d_\delta}\norm{\hatwd}_2\eta_{\ealpha,\delta} \,.\]
Denote by $\gamma =V^{\pi_1}\cdot( \sum_{i=1}^{{d_\delta}-1} \inner{\hatwd}{\lambda_i} \lambda_i)$, which is no greater than $\cv\sqrt{d_\delta}\norm{\hatwd}_2\eta_{\ealpha,\delta} $.

We have that 
\begin{align}
    \abs{\hatwd \cdot V^\pi - w\cdot V^\pi}& \leq \abs{\hatwd \cdot V^\pi - \frac{1}{1-\gamma} \hatwd\cdot V^\pi} + \abs{\frac{1}{1-\gamma} \hatwd\cdot V^\pi - w\cdot V^\pi}\nonumber\\
    &\leq \frac{\gamma \norm{\hatwd}_2 \cv}{1-\gamma} +\frac{1}{1-\gamma}\abs{\sum_{i=1}^{d_\delta-1} (\hatwd\cdot \lambda_i) \lambda_i\cdot V^\pi}\nonumber\\
    &\leq \frac{\gamma \norm{\hatwd}_2 \cv}{1-\gamma} +\frac{\cv\sqrt{d_\delta}\norm{\hatwd}_2\eta_{\ealpha,\delta}}{1-\gamma}\nonumber\\
    &= 2(\cv \norm{\hatwd}_2 +1)\cv\sqrt{d_\delta}\norm{\hatwd}_2\eta_{\ealpha,\delta},\label{eq:hatwwintermediate}
\end{align}
where the last equality holds when $\cv\sqrt{d_\delta}\norm{\hatwd}_2\eta_{\ealpha,\delta}\leq \frac{1}{2}$.

Now we show that $\norm {\hatwd}_2 \leq C\norm{w'}_2$ for some constant $C$.

Since $\hatwd$ is the minimum norm solution to $\hat A^{(\delta)}x= \be_1$, we will construct another solution to $\hat A^{(\delta)}x= \be_1$, denoted by $\hatwo$ in a simillar manner to the construction in Eq~\eqref{eq:creatw}, and show that $\norm{\hatwo}_2\leq C\norm{w'}$.


Let $\xi_1,\ldots,\xi_{d_\delta-1}$ be any orthonormal basis of $\Span(\hat A^{(\delta)}_{2:})$.

We construct a $\hatwo$ s.t. $\hat A^{(\delta)}\hatwo = \be_1$ by removing the component of $w'$  in $\Span(\hat A^{(\delta)}_{2:})$ and rescaling.

Specifically, let 
\begin{equation}
    \hatwo = \frac{ w'- \sum_{i=1}^{{d_\delta}-1} \inner{w'}{\xi_i} \xi_i}{1-\inner{V^{\pi_1}}{( \sum_{i=1}^{{d_\delta}-1} \inner{w'}{\xi_i} \xi_i)}}\,.
\end{equation}
Since $A^{(\delta)}w' = \be_1$, 
it directly follows that $\inner{\hat A_1}{\hatwo} = \inner{V^{\pi_1}}{\hatwo} = 1$ and that $\inner{\hat A_i}{\hatwo} = 0$ for $i=2,\ldots,d_\delta$, i.e., $\hat A^{(\delta)}\hatwo =\be_1$.

Since Claim~\ref{clm:spanangle} implies that $\theta( \Span(\hat A^{(\delta)}_{2:}), \Span(A^{(\delta)}_{2:})) \leq  \eta_{\ealpha,\delta}$, there exist unit vectors $\tilde \xi_1,\ldots,\tilde \xi_{d_\delta-1}\in \Span(A^{(\delta)}_{2:})$ such that $\theta(\xi_i,\tilde \xi_i)\leq \eta_{\ealpha,\delta}$.

As $w'$ has zero component in $\Span(A^{(\delta)}_{2:})$, $w'$ should have a small component in $\Span(\hat A^{(\delta)}_{2:})$.

In particular,
\begin{align*}
     \abs{\inner{w'}{\xi_i}}= \abs{\inner{w'}{\xi_i-\tilde \xi_i}} \leq \norm{w'}_2 \eta_{\ealpha,\delta}\,,
\end{align*}
which implies that 
\begin{align*}
    \norm{\sum_{i=1}^{{d_\delta}-1} \inner{w'}{\xi_i} \xi_i}_2\leq \sqrt{d_\delta} \norm{w'}_2 \eta_{\ealpha,\delta}\,.
\end{align*}
Hence
\begin{equation*}
    \abs{\inner{V^{\pi_1}}{( \sum_{i=1}^{{d_\delta}-1} \inner{w'}{\xi_i} \xi_i)}} \leq \cv\sqrt{d_\delta} \norm{w'}_2 \eta_{\ealpha,\delta} \,.
\end{equation*}
As a result, $\norm{\hatwo}_2 \leq \frac{3}{2}\norm{w'}_2$ when $ \cv\sqrt{d_\delta} \norm{w'}_2 \eta_{\ealpha,\delta} \leq \frac{1}{3}$, which is true when $\ealpha$ is small enough.

According to Lemma~\ref{lmm:V1}, $\ealpha\leq \frac{4k^2 \epsilon}{v^*}$, $\ealpha\rightarrow 0$ as $\epsilon\rightarrow 0$.

Thus, we have $\norm{\hatwd}_2\leq \norm{\hatwo}_2\leq \frac{3}{2} \norm{w'}_2$ and $\cv\sqrt{d_\delta}\norm{\hatwd}_2\eta_{\ealpha,\delta}\leq \frac{1}{2}$.

Combined with Eq~\eqref{eq:hatwwintermediate}, we get
\begin{align*}
    \abs{\hatwd \cdot V^\pi - w\cdot V^\pi} = \bigO\left((\cv \norm{w'}_2 +1)\cv\sqrt{d_\delta}\norm{w'}_2\eta_{\ealpha,\delta}\right)\,.
\end{align*}
Since $\cv \norm{w'}_2 \geq \abs{\inner{V^{\pi_1}}{w'}} = 1$, by taking $\eta_{\ealpha,\delta} = \frac{4\calpha\cv^2 d_\delta\ealpha }{\delta^2}$ into the above equation, we have
\begin{align*}
    \abs{\hatwd \cdot V^\pi - w\cdot V^\pi}  =\bigO\left( \frac{\calpha\cv^4 d_\delta^{\frac{3}{2}}\norm{w'}_2^2\ealpha }{\delta^2}\right)\,,
\end{align*}
which completes the proof.
\end{proof}
\section{Proof of Lemma~\ref{lmm:est-without-tau}}\label{app:est-without-tau}
\lmmwithouttau*
\begin{proof}

Given the output $(V^{\pi_1},\ldots, V^{\pi_{d}})$ of Algorithm~\ref{alg:policy-threshold-free}, we have $\rank(\Span(\{V^{\pi_1},\ldots, V^{\pi_{d_\delta}}\})) = d_\delta$.

For $i=d_\delta+1,\ldots,d$, let $\psi_{i}$ be the normalized vector of
$V^{\pi_i}$'s projection into $\Span(V^{\pi_1},\ldots,V^{\pi_{i-1}})^\perp$ with $\norm{\psi_i}_2 = 1$. 

Then we have that $\Span(V^{\pi_1},\ldots,V^{\pi_{i-1}},\psi_i) = \Span(V^{\pi_1},\ldots,V^{\pi_{i}})$ and that $\{\psi_i|i=d_\delta+1,\ldots,d\}$ are orthonormal.

For every policy $\pi$, the value vector can be represented as a linear combination of $\hat A_1,\ldots,\hat A_{d_\delta}, \psi_{d_\delta+1},\ldots, \psi_d$, i.e., there exists a unique $a = (a_1,\dots,a_d)\in \R^d$ s.t.
$V^\pi = \sum_{i=1}^{d_\delta} a_i \hat A_i +\sum_{i=d_\delta+1}^{d} a_i \psi_i$.

Since $\psi_i$ is orthogonal to $\psi_j$ for all $j\neq i$ and $\psi_i$ is are orthogonal to $\Span(\hat A_1,\ldots,\hat A_{d_\delta})$, we have $a_i = \inner{V^\pi}{\psi_i}$ for $i\geq d_\delta+1$.

This implies that 
\begin{align*}
    \abs{\inner{\hat w}{V^{\pi}} - \inner{\hatwd}{V^{\pi}}} \leq & \underbrace{\abs{\sum_{i=1}^{d_\delta}a_i\left(\inner{\hat w}{\hat A_i} - \inner{w^{(\delta)}}{\hat A_i}\right)}}_{(a)} + \underbrace{\abs{\sum_{i=d_\delta+1}^{d} a_i \inner{\hat w}{\psi_i}}}_{(b)} \\
    &+ \underbrace{\abs{\sum_{i=d_\delta+1}^{d} a_i \inner{\hatwd}{ \psi_i}}}_{(c)}\,.
\end{align*}
Since $\hatAd \hat w = \hatAd \hatwd =\be_1$, we have term $(a) = 0$.

We move on to bound term (c).

Note that the vectors $\{\psi_i|i=d_\delta+1,\ldots,d\}$ are orthogonal to $\Span(V^{\pi_1},\ldots, V^{\pi_{d_\delta}})$ and that together with $V^{\pi_1},\ldots, V^{\pi_{d_\delta}}$ they form a basis for $\Span(\{V^\pi|\pi\in \Pi\})$.

Thus, we can let $\basis_i$ in the proof of Lemma~\ref{lmm:gap-hatw-w} be $ \psi_{i+d_\delta}$.

In addition, all the properties of $\{\basis_i|i\in [d-d_\delta]\}$ also apply to $\{\psi_i|i=d_\delta+1,\ldots,d\}$ as well.

Hence, similarly to Eq~\eqref{eq:anorm},
\begin{equation*}
    \sqrt{\sum_{i={d_\delta+1}}^d a_i^2} \leq \sqrt{k}\delta\,.
\end{equation*}

%
Consequentially, we can bound term $(c)$ is by
$$(c) \leq \sqrt{k}\delta \norm{\hatwd}_2 =\frac{3}{2} \sqrt{k}\delta\norm{w'}_2$$
since $\norm{\hatwd}_2\leq \frac{3}{2}\norm{w'}_2$ when $ \cv\sqrt{d_\delta} \norm{w'}_2 \eta_{\ealpha,\delta} \leq \frac{1}{3}$ as discussed in the proof of Lemma~\ref{lmm:min_norm_hatw_delta}.

Now all is left is to bound term (b).

We cannot bound term (b) in the same way as that of term (c) because $\norm{\hat w}_2$ is not guaranteed to be bounded by $\norm{w'}_2$.

For $i=d_\delta+1,\ldots, d$, we define 
\[\epsilon_i:= \abs{\inner{\psi_i}{\hat A_i}} \,.\]
For any $i,j=d_\delta+1,\ldots, d$, $\psi_i$ is perpendicular to $V^{\pi_1}$, thus
$\abs{\inner{\psi_i}{\hat A_j}} = \abs{\inner{\psi_i}{\hat \alpha_{j-1}V^{\pi_1} - V^{\pi_j}}}= \abs{\inner{\psi_i}{V^{\pi_j}}}$.
Especially, we have
\[
\epsilon_i = \abs{\inner{\psi_i}{\hat A_i}} = \abs{\inner{\psi_i}{V^{\pi_i}}}\,.
\]
Let $\hat A_i^\parallel := \hat A_i- \sum_{j=d_\delta+1}^{d}\inner{\hat  A_i}{\psi_j}\psi_j$ denote $\hat A_i$'s projection into $\Span(\hat A_1,\ldots,\hat A_{d_\delta})$.

Since $\hat A_i$ has zero component in direction $\psi_j$ for $j>i$, we have $\hat A_i^\parallel = \hat A_i- \sum_{j=d_\delta+1}^{i}\inner{\hat  A_i}{\psi_j}\psi_j$.

Then, we have
\begin{align*}
    0= \inner{\hat w}{\hat A_i} = \hat w\cdot \hat A_i^\parallel + \hat w\cdot \sum_{j=d_\delta+1}^i\inner{\hat  A_i}{\psi_j}\psi_j= \hat w\cdot \hat A_i^\parallel -  \sum_{j=d_\delta+1}^i\inner{V^{\pi_i}}{\psi_j}\inner{\hat w}{\psi_j}\,,
\end{align*}
where the first equation holds due to $\hat A \hat w = \be_1$.

By rearranging terms, we have
\begin{align}
    \inner{V^{\pi_i}}{\psi_i} \inner{ \hat w}{\psi_i} = \hat w\cdot \hat A_i^\parallel -  \sum_{j=d_\delta+1}^{i-1}\inner{V^{\pi_i}}{\psi_j}\inner{\hat w}{\psi_j}\,.\label{eq:wui}
\end{align}
Recall that at iteration $j$ of Algorithm~\ref{alg:policy-threshold-free}, in line~\ref{alg-line:orthonormal-basis} we pick an orthonormal basis $\rho_1,\ldots, \rho_{k+1-j}$ of $\Span(V^{\pi_1},\ldots,V^{\pi_{j-1}})^\perp$. Since $\psi_j$ is in $ \Span(V^{\pi_1},\ldots,V^{\pi_{j-1}})^\perp$ according to the definition of $\psi_j$,
$\abs{\inner{V^{\pi_i}}{\psi_j}}$ is no greater then the norm of $V^{\pi_i}$'s projection into $\Span(V^{\pi_1},\ldots,V^{\pi_{j-1}})^\perp$.

Therefore, we have
\begin{align}
    &\abs{\inner{V^{\pi_i}}{\psi_j}} \leq \sqrt{k} \max_{l\in [k+1-j]} \abs{\inner{V^{\pi_i}}{\rho_l}}\stackrel{(d)}{\leq} 
    \sqrt{k} \max_{l\in [k+1-j]} \max(\abs{\inner{V^{\pi^{\rho_l}}}{\rho_l}},\abs{\inner{V^{\pi^{-\rho_l}}}{-\rho_l}})\nonumber\\
    \stackrel{(e)}{=}&
    \sqrt{k} \abs{\inner{V^{\pi_j}}{u_j}}\stackrel{(f)}{\leq} \sqrt{k} \abs{\inner{V^{\pi_j}}{\psi_j}}=\sqrt{k}\epsilon_j\,,\label{eq:psi-u}
\end{align}
where inequality (d) holds because $\pi^{\rho_l}$ is the optimal personalized policy with respect to the preference vector $\rho_l$, and Equation (e) holds due to the definition of $u_j$ (line~\ref{alg-line:largest-component} of Algorithm~\ref{alg:policy-threshold-free}). 
Inequality (f) holds since $\inner{V^{\pi_j}}{\psi_j}$ is the norm of $V^{\pi_j}$'s projection in $\Span(V^{\pi_1},\ldots,V^{\pi_{j-1}})^\perp$ and $u_j$ belongs to $\Span(V^{\pi_1},\ldots,V^{\pi_{j-1}})^\perp$.

By taking absolute value on both sides of Eq~\eqref{eq:wui}, we have
\begin{align}
    \epsilon_i \abs{\inner{ \hat w}{\psi_i}}
    = \abs{\hat w\cdot \hat A_i^\parallel -  \sum_{j=d_\delta+1}^{i-1}\inner{V^{\pi_i}}{\psi_j}\inner{\hat w}{\psi_j}}
    \leq \abs{\hat w\cdot \hat A_i^\parallel} +  \sqrt{k}\sum_{j=d_\delta+1}^{i-1}\epsilon_j\abs{ \inner{\hat w}{\psi_j}}\,.\label{eq:inducteps}
\end{align}
We can now bound $\abs{\hat w\cdot \hat A_i^\parallel}$ as follows.
\begin{align}
    \abs{\hat w\cdot \hat A_i^\parallel} &= \abs{\hatwd\cdot \hat A_i^\parallel}\label{eq:hatwwdelta}\\
    &= \abs{\hatwd\cdot (\hat A_i- \sum_{j=d_\delta+1}^{d}\inner{\hat  A_i}{\psi_j}\psi_j)} = \abs{\hatwd\cdot \hat A_i}\label{eq:applyminnorm}\\
    &\leq \abs{\hatwd\cdot A_i} + \abs{\hatwd\cdot (\hat A_i-A_i)}\nonumber\\
    &\leq \abs{w'\cdot A_i} + \abs{(\hatwd-w')\cdot A_i} + \abs{\hatwd\cdot (\hat A_i-A_i)}\nonumber\\
    &\leq 0+ (\calpha+1)\sup_{\pi} \abs{(\hatwd-w')\cdot V^\pi} +  \cv \norm{\hatwd}_2 \ealpha\nonumber\\
    &\leq C'\calpha\epsilon^{(\delta)}\nonumber\,,
\end{align}
for some constant $C'>0$.

Eq~\eqref{eq:hatwwdelta} holds because $\hatAd \hat w = \hatAd \hatwd = \be_1$ and $\hat A_i^\parallel$ belongs to $\Span(\hatAd)$. 
Eq~\eqref{eq:applyminnorm} holds because $\hatwd$ is the minimum norm solution to $\hatAd x = \be_1$, which implies that $\hatwd \cdot \psi_i = 0$. 
The last inequality follows by applying Lemma~\ref{lmm:gap-hatw-w}.

We will bound $\epsilon_i \abs{\inner{ \hat w}{\psi_i}}$ by induction on $i=d_\delta+1,\ldots,d$.

In the base case of $i=d_\delta+1$, 
\begin{align*}
    \epsilon_{d_\delta+1}  \abs{\inner{ \hat w}{\psi_{d_\delta+1}}}\leq \abs{\hat w\cdot \hat A_{d_\delta+1}^\parallel} \leq  C'\calpha\epsilon^{(\delta)}\,.
\end{align*}
Then, by induction through Eq~\eqref{eq:inducteps}, we have for $i = d_\delta+2,\ldots,d$,
\[\epsilon_i \abs{\inner{ \hat w}{\psi_i}} \leq (\sqrt{k} +1)^{i-d_\delta-1} C'\calpha\epsilon^{(\delta)}\,.\]
Similar to the deviation of Eq~\eqref{eq:psi-u}, we pick an orthonormal basis $\rho_1,\ldots,\rho_{k+1-i}$ of $\Span(V^{\pi_1},\ldots,V^{\pi_{i-1}})^\perp$ at line~\ref{alg-line:orthonormal-basis} of Algorithm~\ref{alg:policy-threshold-free}, then we have that, for any policy $\pi$,
\begin{align*}
    \abs{\inner{V^\pi}{\psi_i}} \leq \sqrt{k} \max_{l\in [k+1-i]} \abs{\inner{V^{\pi}}{\rho_l}}\leq   \sqrt{k} \abs{\inner{V^{\pi_{i}}}{u_i}} \leq \sqrt{k}\abs{\inner{V^{\pi_{i}}}{\psi_i}} = \sqrt{k}\epsilon_i\,.
\end{align*}
Then we have that term (b) is bounded by 
\begin{align*}
    (b) =& \abs{\sum_{i=d_\delta+1}^{d} \inner{V^\pi}{\psi_i} \inner{\hat w}{\psi_i}}
    \leq \sum_{i=d_\delta+1}^{d} \abs{\inner{V^\pi}{\psi_i}}\cdot \abs{{\inner{\hat w}{\psi_i}}}\\
    \leq& \sqrt{k}\sum_{i=d_\delta+1}^{d} \epsilon_i \abs{{\inner{\hat w}{\psi_i}}}
    \leq (\sqrt{k} +1)^{d-d_\delta} C'\calpha\epsilon^{(\delta)}\,.
\end{align*}
Hence we have that for any policy $\pi$,
\begin{equation*}
    \abs{\inner{\hat w}{V^{\pi}} - \inner{\hatwd}{V^{\pi}}} \leq (\sqrt{k} +1)^{d-d_\delta} C'\calpha\epsilon^{(\delta)} +\frac{3}{2} \sqrt{k}\delta\norm{w'}_2\,.
\end{equation*}
\end{proof}
\section{Proof of Theorem~\ref{thm:est-with-threshold}}\label{app:alg_with_threshold}
\thmestwithtau*

\begin{proof}[Proof of Theorem~\ref{thm:est-with-threshold}]
As shown in Lemma~\ref{lmm:V1}, we set $C_\alpha = 2k$ and have
$\ealpha = \frac{4k^2\epsilon}{v^*}$.
We have $\norm{w'} = \frac{\norm{w^*}}{\inner{w^*}{V^{\pi_1}}}$ and showed that $\inner{w^*}{V^{\pi_1}} \geq \frac{v^*}{2k}$ in the proof of Lemma \ref{lmm:V1}.

By applying Lemma~\ref{lmm:gap-hatw-w} and setting $\delta = \left(\frac{\cv^4k^5 \norm{w^*}_2\epsilon}{v^{*2}}\right)^{\frac{1}{3}}$, 
we have

\begin{align*}
    &v^*- \inner{w^*}{V^{\pi^{\hatwd}}} = \inner{w^*}{V^{\pi_1}} \left(\inner{w'}{V^{\pi^*}} - \inner{w'}{V^{\pi^{\hatwd}}}\right)
    \\
    \leq&  \inner{w^*}{V^{\pi_1}}\left(\inner{\hatwd}{V^{\pi^*}} - \inner{\hatwd}{V^{\pi^{\hatwd}}} +\bigO(\sqrt{k}\norm{w'}_2 (\frac{\cv^4 k^5 \norm{w^*}_2\epsilon}{v^{*2}})^\frac{1}{3})\right)\\
    =&\bigO(\sqrt{k}\norm{w^*}_2 (\frac{\cv^4 k^5 \norm{w^*}_2\epsilon}{v^{*2}})^\frac{1}{3})\,.
\end{align*}
\end{proof}



\section{Dependency on $\epsilon$}\label{app:eps-dependence}
In this section, we would like to discuss a potential way of improving the dependency on $\epsilon$ in Theorems~\ref{thm:est-without-tau} and \ref{thm:est-with-threshold}.

Consider a toy example where the returned three basis policies are $\pi_1$ with $V^{\pi_1} =(1,0,0)$, $\pi_2$ with $V^{\pi_2} =(1,1,1)$ and $\pi_3$ with $V^{\pi_3} = (1, \eta,-\eta)$ for some $\eta>0$ and $w^* = (1,w_2,w_3)$ for some $w_2,w_3$.

The estimated ratio $\hat \alpha_1$ lies in $ [1+w_2+w_3-\epsilon,1+w_2+w_3+\epsilon]$, and $\hat \alpha_2$ lies in $ [1+\eta w_2-\eta w_3 - \epsilon,1+\eta w_2-\eta w_3 + \epsilon]$. 
Suppose that $\hat \alpha_1 = 1+w_2+w_3+\epsilon$ and $\hat \alpha_2 = 1+\eta w_2-\eta w_3 + \epsilon$.

By solving
\begin{equation*}
    \begin{pmatrix}
    1 & 0 & 0\\
    w_2+w_3+\epsilon & -1 & -1\\
    \eta w_2-\eta w_3 + \epsilon & -\eta & \eta
    \end{pmatrix}
    \hat w = \begin{pmatrix}
        1\\ 0 \\ 0
    \end{pmatrix}\,
\end{equation*}
we can derive $\hat w_2 = w_2 +\frac{\epsilon}{2}(1+\frac{1}{\eta})$ and $\hat w_3 = w_3 +\frac{\epsilon}{2}(1-\frac{1}{\eta})$.

The quantity measuring sub-optimality we care about is $\sup_\pi \abs{\inner{\hat w }{V^\pi} - \inner{w^*}{V^\pi}}$, which is upper bounded by $\cv \norm{\hat w - w^*}_2$. 
But the $\ell_2$ distance between $\hat w$ and $w^*$ depends on the condition number of $\hat A$, which is large when $\eta$ is small. 
To obtain a non-vacuous upper bound  in Section~\ref{sec:policy-level}, we introduce another estimate $\hat w^{(\delta)}$ based on the truncated version of $\hat A$ and then upper bound $\norm{\hat w^{(\delta)} - w^*}_2$ and $\sup_\pi \abs{\inner{\hat w }{V^\pi} - \inner{\hat w^{(\delta)}}{V^\pi}}$ separately.

However, it is unclear if $\sup_\pi \abs{\inner{\hat w }{V^\pi} - \inner{w^*}{V^\pi}}$ depends on the condition number of $\hat A$. 
Due to the construction of Algorithm~\ref{alg:policy-threshold-free}, we can obtain some extra information about the set of all policy values.

First, since we find $\pi_2$ before $\pi_3$, $\eta$ must be no greater than $1$. 
According to the algorithm, $V^{\pi_2}$ is the optimal policy when the preference vector is $u_2$ (see line~\ref{alg-line: returnedu} of Algorithm~\ref{alg:policy-threshold-free} for the definition of $u_2$) and $V^{\pi_3}$ is the optimal policy when the preference vector is $u_3 = (0,1,-1)$.
Note that the angle between $u_2$ and $V^{\pi_2}$ is no greater than 45 degrees according to the definition of $u_2$. 
Then the values of all policies can only lie in the small box $B = \{x\in \R^3| \abs{u_2^\top x}\leq \abs{\inner{u_2}{V^{\pi_2}}}, \abs{u_3^\top x}\leq \abs{\inner{u_3}{V^{\pi_3}}}\}$. 
It is direct to check that for any $x\in B$, $\abs{\inner{\hat w}{x} - \inner{w^*}{x}} < (1+\sqrt{2})\epsilon$.
This example illustrates that even when the condition number of $\hat A$ is large, $\sup_\pi \abs{\inner{\hat w }{V^\pi} - \inner{w^*}{V^\pi}}$ can be small.
It is unclear if this holds in general.
Applying this additional information to upper bound $\sup_\pi \abs{\inner{\hat w }{V^\pi} - \inner{w^*}{V^\pi}}$ directly instead of through bounding $\cv \norm{\hat w - w^*}_2$ is a possible way of improving the term $\epsilon^\frac{1}{3}$.

\section{Description of C4 and Proof of Lemma~\ref{thm:cara}}\label{app:Caratheodory}
\begin{algorithm}[H]\caption{$\text{C4}
$: Compress  Convex Combination using Carathéodory's theorem}\label{alg:Caratheodory}
    \begin{algorithmic}[1]
        \STATE \textbf{input} a set of $k$-dimensional vectors $M \subset \R^k$ and a distribution $p\in \spl^M$
        \WHILE{$\abs{M}>k+1$}
            \STATE arbitrarily pick $k+2$ vectors $\mu_1,\ldots,\mu_{k+2}$ from $M$\label{alg-line:anyvecs}
            \STATE solve for $x\in \R^{k+2}$ s.t. $\sum_{i=1}^{k+2} x_i (\mu_i\append 1) = \bZero$, where $\mu \append 1$ denote the vector of appending $1$ to $\mu$
            \STATE  $i_0 \leftarrow \argmax_{i\in [k+2]} \frac{\abs{x_i}}{p(\mu_i)}$
            \STATE \textbf{if} $x_{i_0}<0$\label{alg-line:flipsign} \textbf{then} $x\leftarrow -x$
            \STATE $\gamma \leftarrow \frac{p(\mu_{i_0})}{x_{i_0}}$ and $\forall i\in [k+2]$, $p(\mu_i)\leftarrow p(\mu_i) - \gamma x_i$ 
            \STATE remove $\mu_i$ with $p(\mu_i) =0$ from $M$
        \ENDWHILE
        \STATE \textbf{output} $M$ and $p$
    \end{algorithmic}
\end{algorithm}
\cara*
\begin{proof}
The proof is similar to the proof of Carathéodory's theorem.
Given the vectors $\mu_1,\ldots,\mu_{k+2}$ picked in line~\ref{alg-line:anyvecs} of Algorithm~\ref{alg:Caratheodory} and their probability masses $p(\mu_i)$, we solve $x\in \R^{k+2}$ s.t. $\sum_{i=1}^{k+2} x_i (\mu_i\append 1) = \bZero$ in the algorithm. 

Note that there exists a non-zero solution of $x$ because $\{\mu_i\append 1|i\in[k+2]\}$ are linearly dependent.
Besides, $x$ satisfies $\sum_{i=1}^{d+2} x_i = 0$.

Therefore, $$\sum_{i=1}^{d+2} (p(\mu_i) - \gamma x_i) = \sum_{i=1}^{d+2} p(\mu_i).$$

For all $i$, if $x_i<0$, $p(\mu_i) - \gamma x_i \geq 0$ as $\gamma>0$; if $x_i>0$, then $\frac{x_i}{p(\mu_i)}\leq \frac{x_{i_0}}{p(\mu_{i_0})}=\frac{1}{\gamma}$ and thus $p(\mu_i) - \gamma x_i\geq 0$.

Hence, after one iteration, the updated $p$ is still a probability over $M$ (i.e., $p(\mu)\geq 0$ for all $\mu\in M$ and $\sum_{\mu\in M}p(M)=1$).
Besides, $\sum_{i=1}^{d+2} (p(\mu_i) - \gamma x_i) \mu_i = \sum_{i=1}^{d+2} p(\mu_i)\mu_i - \gamma \sum_{i=1}^{d+2} x_i \mu_i =  \sum_{i=1}^{d+2} p(\mu_i)\mu_i$.

Therefore, after one iteration, the expected value $\EEs{\mu\sim p}{\mu}$ is unchanged.

When we finally output $(M',q)$, we have that $q$ is a distribution over $M$ and that $\EEs{\mu\sim q}{\mu} =\EEs{\mu\sim p}{\mu}$.

Due to line~\ref{alg-line:flipsign} of the algorithm, we know that $x_{i_0}>0$.
Hence $p(\mu_{i_0}) - \gamma x_{i_0} = p(\mu_{i_0}) - \frac{p(\mu_{i_0})}{x_{i_0}} x_{i_0} =0$. 

We remove at least one vector $\mu_{i_0}$ from $M$ and we will run for at most $\abs{M}$ iterations.

Finally, solving $x$ takes $\bigO(k^3)$ time and thus, Algorithm~\ref{thm:cara} takes $\bigO(\abs{M}k^3)$ time in total.
\end{proof}

\section{Flow Decomposition Based Approach}\label{app:approach-flow-based}
We first introduce an algorithm based on the idea of flow decomposition.

For that, we construct a layer graph $G=((\Lsup{0}\cup\ldots \cup\Lsup{H+1}), E)$ with $H+2$ pairwise disjoint layers $\Lsup{0},\ldots,\Lsup{H+1}$, where every layer $t\leq H$ contains a set of vertices labeled by the (possibly duplicated) states reachable at the corresponding time step $t$, i.e., $\{s\in \cS \vert \Pr(S_t =s \vert S_0 = s_0) >0\}$.

Let us denote by $\xsup{t}_s$ the vertex in $\Lsup{t}$ labeled by state $s$.

Layer $\Lsup{H+1} =\{\xsup{H+1}_*\}$ contains only an artificial vertex, $\xsup{H+1}_*$, labeled by an artificial state $*$.

For $t=0,\ldots,H-1$, for every $\xsup{t}_s\in \Lsup{t}$, $\xsup{t+1}_{s'}\in \Lsup{t+1}$, we connect $\xsup{t}_s$ and $\xsup{t+1}_{s'}$  by an edge labeled by $(s,s')$ if $\pssp>0$.
Every vertex $\xsup{H}_s$ in layer $H$ is connected to $\xsup{H+1}_*$ by one edge, which is labeled by $(s, *)$.

We denote by $\Esup{t}$ the edges between $\Lsup{t}$ and $\Lsup{t+1}$.
Note that every trajectory $\tau = (s_0,s_1,\ldots,s_H)$ corresponds to a single path $(\xsup{0}_{s_0},\xsup{1}_{s_1},\ldots, \xsup{H}_{s_H},\xsup{H+1}_*)$ of length $H+2$ from $\xsup{0}_{s_0}$ to $\xsup{H+1}_*$.

This is a one-to-one mapping and in the following, we use path and trajectory interchangeably. 

The policy corresponds to a $(\xsup{0}_{s_0},\xsup{H+1}_*)$-flow with flow value $1$ in the graph $G$. 
In particular, the flow is defined as follows.

When the layer $t$ is clear from the context, we actually refer to vertex $\xsup{t}_s$ by saying vertex $s$.

For $t=0,\ldots,H-1$, for any edge $(s,s') \in \Esup{t}$, let $f:E\rightarrow \mathbb R^+$ be defined as
\begin{equation}
    f(s,s') = \sum_{\tj: (s^\tau_t,s^\tau_{t+1}) = (s,s')} q^\pi(\tau)\,,\label{eq:fdef}
\end{equation}
where $q^\pi(\tau)$ is the probability of $\tau$ being sampled.

For any edge $(s, *)\in \Esup{H}$, let
$f(s, *) = \sum_{(s',s)\in \Esup{H-1}} f(s',s)$. 
It is direct to check that the function $f$ is a well-defined flow.
We can therefore compute $f$ by dynamic programming.

For all $(s_0,s)\in \Esup{0}$, we have $f(s_0,s) = P(s|s_0,\pi(s_0))$ and for $(s,s')\in \Esup{t}$,
\begin{equation}
    f(s,s')= \pssp \sum_{s'': (s'',s)\in \Esup{t-1}} f(s'',s)\,.\label{eq:flowinduction}
\end{equation}
Now we are ready to present our algorithm by decomposing $f$ in Algorithm~\ref{alg:flow-decomposition}.

Each iteration in Algorithm~\ref{alg:flow-decomposition} will zero out at least one edge and thus, the algorithm will stop within $\abs{E}$ rounds.
\begin{algorithm}[H]\caption{Flow decomposition based approach}\label{alg:flow-decomposition}
    \begin{algorithmic}[1]
        \STATE initialize $Q\leftarrow \emptyset$.
        \STATE calculate $f(e)$ for all edge $e\in E$ by dynamic programming according to Eq~\eqref{eq:flowinduction}
        \WHILE{$\exists e\in E$ s.t. $f(e)>0$}
            \STATE pick a path $\tau=(s_0,s_1,\ldots,s_H,*)\in \Lsup{0}\times\Lsup{1}\times\ldots\times\Lsup{H+1}$
            s.t. $f(s_i,s_{i+1})>0\; \forall i\geq 0$
            \label{alg-line:path}
            \STATE  $f_\tau \leftarrow \min_{e\text{ in }\tau} f(e)$
            \STATE $Q\leftarrow Q\cup\{(\tau,f_\tau)\}$, $f(e) \leftarrow f(e)-f_\tau$ for $e$ in $\tau$\label{alg-line:flowupdate}
        \ENDWHILE
        \STATE output $Q$
    \end{algorithmic}
\end{algorithm}

\begin{restatable}{theorem}{thmflow}\label{thm:flow-decomposition}
    Algorithm~\ref{alg:flow-decomposition} outputs $Q$ satisfying that $\sum_{(\tau,f_\tau) \in Q} f_\tau \Phi(\tau) = V^{\pi}$ in time $\bigO( H^2\abs{\cS}^2)$.
\end{restatable}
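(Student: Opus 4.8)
The plan is to reduce the claim to the correctness of flow decomposition, after rewriting the return as a sum of rewards placed on the edges of $G$. First I would assign to each edge $e=(s,s')\in\Esup{t}$ with $t\le H-1$ the (vector) reward $r(e):=R(s,\pi(s))$, and set $r(e):=\bZero$ on the layer-$H$ edges into $\xsup{H+1}_*$. Since each trajectory $\tau=(s_0,\dots,s_H)$ is the unique path using exactly one edge between consecutive layers, this yields $\Phi(\tau)=\sum_{e\in\tau}r(e)$. Next I would record that the flow from Eq~\eqref{eq:fdef} satisfies $f(e)=\sum_{\tau\ni e}q^\pi(\tau)$ for every edge $e$ (this is its definition on the first $H$ layers, and for the layer-$H$ edges it equals the mass reaching the corresponding state), and that $f$ is a genuine unit-value flow: conservation at $\xsup{t}_s$ holds because both its in- and out-flow equal $\sum_{\tau:\,s^\tau_t=s}q^\pi(\tau)$, and $\sum_\tau q^\pi(\tau)=1$. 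Combining the two identities gives
\[
V^\pi=\sum_\tau q^\pi(\tau)\Phi(\tau)=\sum_\tau q^\pi(\tau)\sum_{e\in\tau}r(e)=\sum_e r(e)\sum_{\tau\ni e}q^\pi(\tau)=\sum_e r(e)f(e)\,,
\]
so it remains to show that the decomposition reproduces $f$ as a sum of path flows.

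For the decomposition I would maintain the invariant that, writing $f$ for the current residual flow, $f_{\mathrm{init}}(e)=f(e)+\sum_{(\tau,f_\tau)\in Q:\,e\in\tau}f_\tau$ for every edge $e$. This holds at initialization ($Q=\emptyset$) and is preserved by each iteration, since subtracting the bottleneck $f_\tau=\min_{e\in\tau}f(e)$ along a source-to-sink path $\tau$ (line~\ref{alg-line:flowupdate}) subtracts a scaled unit path-flow, leaving $f\ge\bZero$ by the choice of bottleneck and preserving conservation at every internal vertex. The one step needing care is that line~\ref{alg-line:path} is always executable: whenever some edge carries positive residual flow, conservation of $f$ forces a positive-flow out-edge at the head of any positive in-edge (and symmetrically an in-edge at the tail), so the path extends greedily forward to $\xsup{H+1}_*$ and backward to $\xsup{0}_{s_0}$. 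Since each iteration zeros at least the bottleneck edge, the loop runs at most $\abs{E}$ times and terminates with $f\equiv\bZero$.

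At termination the invariant reads $f_{\mathrm{init}}(e)=\sum_{(\tau,f_\tau)\in Q:\,e\in\tau}f_\tau$, whence
\[
\sum_{(\tau,f_\tau)\in Q}f_\tau\Phi(\tau)=\sum_{(\tau,f_\tau)\in Q}f_\tau\sum_{e\in\tau}r(e)=\sum_e r(e)\sum_{(\tau,f_\tau):\,e\in\tau}f_\tau=\sum_e r(e)f_{\mathrm{init}}(e)=V^\pi\,,
\]
which is the asserted equality. For the runtime, $G$ has $\abs{E}=\bigO(H\abs{\cS}^2)$ edges; computing all $f(e)$ by the dynamic program of Eq~\eqref{eq:flowinduction} costs $\bigO(H\abs{\cS}^2)$ (each vertex aggregates at most $\abs{\cS}$ in-edges per layer); and, keeping for each vertex a list of its positive out-edges so that extending a path is $\bigO(1)$ per hop (the deletions of zeroed edges charged across the $\le\abs{E}$ total removals), each of the $\le\abs{E}$ iterations finds its path, bottleneck, and update in $\bigO(H)$ time. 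The total is $\bigO(\abs{E}\,H)=\bigO(H^2\abs{\cS}^2)$.

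I expect the main obstacle to be the decomposition bookkeeping rather than the value identity: specifically, verifying that the residual $f$ stays a nonnegative, flow-conserving function throughout, and the consequent lemma that a fully positive source-to-sink path exists as long as any edge is positive, so that line~\ref{alg-line:path} never fails. This is exactly where the layered-flow structure is genuinely used. Once it is in place, the equality $\sum f_\tau\Phi(\tau)=V^\pi$ is pure accounting via the edge-reward reformulation, and the time bound follows from counting edges and path lengths.
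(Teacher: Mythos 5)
Your proof is correct and follows essentially the same route as the paper's: establish that $f$ is a valid unit flow, argue the path-selection step never fails and the loop terminates within $\abs{E}$ iterations with $f\equiv\bZero$, conclude that the path weights reconstruct the original edge flows, and finish by exchanging sums so that $V^\pi=\sum_e r(e)f(e)=\sum_{(\tau,f_\tau)\in Q}f_\tau\Phi(\tau)$, with the identical $\bigO(\abs{E}H)=\bigO(H^2\abs{\cS}^2)$ runtime accounting. The only cosmetic differences are that you maintain the edge-flow reconstruction as an explicit loop invariant and justify path existence by greedy layer-by-layer extension, whereas the paper asserts the reconstruction at termination and uses a cut/reachability contradiction for path existence; both are sound.
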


%
The core idea of the proof is that for any edge $(s,s')\in \Esup{t}$, the flow on $(s,s')$ captures the probability of $S_t=s\wedge S_{t+1}=s'$ and thus, the value of the policy $V^\pi$ is linear in $\{f(e)|e\in E\}$.

The output $Q$ has at most $\abs{E}$ number of weighted paths (trajectories).
We can further compress the representation through C4, which takes $O(\abs{Q}k^3)$ time.
\begin{corollary}
Executing Algorithm~\ref{alg:flow-decomposition} with the output $Q$ first and then running C4 over $\{(\Phi(\tau),f_\tau)|(\tau,f_\tau)\in Q\}$ returns a $(k+1)$-sized weighted trajectory representation in time $\bigO(H^2\abs{\cS}^2 +k^3H\abs{\cS}^2 )$.
\end{corollary}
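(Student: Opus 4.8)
The plan is to present the corollary as the composition of two results already in hand: the correctness and running time of the flow-decomposition procedure (Theorem~\ref{thm:flow-decomposition}) and the compression guarantee of C4 (Lemma~\ref{thm:cara}). First I would invoke Theorem~\ref{thm:flow-decomposition} to obtain the output $Q=\{(\tau,f_\tau)\}$ of Algorithm~\ref{alg:flow-decomposition} with $\sum_{(\tau,f_\tau)\in Q} f_\tau \Phi(\tau)=V^\pi$, computed in time $\bigO(H^2\abs{\cS}^2)$; every $\tau\in Q$ is a path whose edges all carry positive flow and is therefore in the support of $\pi$.

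Next I would bound $\abs{Q}$, since this is what controls the cost of the subsequent C4 call. Each iteration of the while loop subtracts $f_\tau=\min_{e\text{ in }\tau} f(e)$ along the chosen path, zeroing out at least the minimizing edge, so the loop runs at most $\abs{E}$ times and $\abs{Q}\leq \abs{E}$. Counting edges in the layer graph, the inter-layer sets $\Esup{0},\ldots,\Esup{H-1}$ contain at most $\abs{\cS}^2$ edges each and $\Esup{H}$ contains at most $\abs{\cS}$ edges, so $\abs{E}=\bigO(H\abs{\cS}^2)$ and hence $\abs{Q}=\bigO(H\abs{\cS}^2)$. I would also record that the weights form a probability distribution: the flow has value $1$ and flow decomposition conserves total mass, so $\sum_{(\tau,f_\tau)\in Q} f_\tau=1$.

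I would then feed C4 the set $M=\{\Phi(\tau):(\tau,f_\tau)\in Q\}$ of $k$-dimensional return vectors, indexed by their trajectories so that coinciding returns are kept distinct (exactly as Algorithm~\ref{alg:trajcompression} pairs each return with its trajectory), together with the distribution $p$ defined by $p(\Phi(\tau))=f_\tau$. By Lemma~\ref{thm:cara}, $\text{C4}(M,p)$ returns $M'\subset M$ with $\abs{M'}\leq k+1$ and a distribution $q$ satisfying $\EEs{\mu\sim q}{\mu}=\EEs{\mu\sim p}{\mu}=\sum_{(\tau,f_\tau)\in Q} f_\tau\Phi(\tau)=V^\pi$, in time $\bigO(\abs{M}k^3)=\bigO(k^3H\abs{\cS}^2)$. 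The surviving trajectories with weights $q$ then satisfy the three defining properties of a weighted trajectory set representation from Section~\ref{sec:model}: the weights are nonnegative and sum to $1$ because $q$ is a distribution; each surviving trajectory lies in $Q$ and is therefore in the support of $\pi$; and the weighted return equals $V^\pi$. Its size is at most $k+1$, and summing the two running times gives $\bigO(H^2\abs{\cS}^2+k^3H\abs{\cS}^2)$, as claimed.

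I expect no deep obstacle, as the statement is a clean composition of the two lemmas; the content lies in the bookkeeping. The two points that require care are the edge count $\abs{E}=\bigO(H\abs{\cS}^2)$, which is what turns C4's generic $\bigO(\abs{M}k^3)$ bound into the stated $k^3H\abs{\cS}^2$ term, and the verification that the flow weights $f_\tau$ sum to $1$ so that Lemma~\ref{thm:cara} applies verbatim; the remaining step is simply mapping each compressed return vector in $M'$ back to its originating trajectory to read off the final weighted trajectory set.
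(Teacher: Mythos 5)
Your proposal is correct and follows exactly the route the paper takes: compose Theorem~\ref{thm:flow-decomposition} (correctness and the $\bigO(H^2\abs{\cS}^2)$ decomposition time, with $\abs{Q}\leq\abs{E}=\bigO(H\abs{\cS}^2)$) with Lemma~\ref{thm:cara} (C4 compresses to at most $k+1$ trajectories in $\bigO(\abs{Q}k^3)$ time while preserving the expectation). Your additional bookkeeping --- verifying $\sum_{(\tau,f_\tau)\in Q} f_\tau=1$ via flow conservation and checking the support and exact-value properties of the resulting weighted trajectory set --- is sound and only makes explicit what the paper leaves implicit.
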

We remark that the running time of this flow decomposition approach  underperforms that of the expanding and compressing approach (see Theorem~\ref{thm:traj-compression}) whenever $|\cS|H + |\cS|k^3 = \omega(k^4 + k|\cS|)$. 

\section{Proof of Theorem~\ref{thm:flow-decomposition}}\label{app:flow-decomposition}
\thmflow*
\begin{proof}
\textbf{Correctness: }
The function $f$ defined by Eq~\eqref{eq:fdef} is a well-defined flow 
since for all $t=1,\ldots, H$, for all $s\in \Lsup{t}$, we have that
\begin{align*}
    &\sum_{s'\in \Lsup{t+1}:(s,s')\in \Esup{t}} f(s,s') 
    =
    \sum_{s'\in \Lsup{t+1}:(s,s')\in \Esup{t}}\sum_{\tj: (s^\tau_t,s^\tau_{t+1}) = (s,s')} q^\pi(\tau)
    = \sum_{\tj: s^\tau_t = s}q^\pi(\tau)\\
    = &\sum_{s''\in \Lsup{t-1}:(s'',s)\in \Esup{t-1}} f(s'',s)\,.
\end{align*}
In the following, 
we first show that Algorithm~\ref{alg:flow-decomposition} will terminate with $f(e)=0$ for all $e\in E$.

First, after each iteration, $f$ is still a feasible $(\xsup{0},\xsup{H+1})$-flow feasible flow with the total flow out-of $\xsup{0}$ reduced by $f_\tau$.
Besides, for edge $e$ with $f(e)>0$ at the beginning, we have $f(e)\geq 0$ throughout the algorithm because we never reduce $f(e)$ by an amount greater than $f(e)$.

Then, since $f$ is a $(\xsup{0},\xsup{H+1})$-flow and $f(e)\geq 0$ for all $e\in E$, we can always find a path $\tau$ in line~\ref{alg-line:path} of Algorithm~\ref{alg:flow-decomposition}.

Otherwise, the set of vertices reachable from $\xsup{0}$ through edges with positive flow does not contain $\xsup{H+1}$ and the flow out of this set equals the total flow out-of $\xsup{0}$. But since other vertices are not reachable, there is no flow out of this set, which is a contradiction.

In line~\ref{alg-line:flowupdate}, there exists at least one edge $e$ such that $f(e)>0$ is reduced to $0$. 
Hence, the algorithm will run for at most $\abs{E}$ iterations and terminate with $f(e)=0$ for all $e\in E$. 

Thus we have that for any $(s,s')\in \Esup{t}$, $f(s,s')=\sum_{(\tau,f_\tau)\in Q: (s^\tau_t,s^\tau_{t+1}) = (s,s')}f_\tau$.

Then we have
\begin{align*}
    V^\pi &= \sum_{\tj}q^\pi(\tau) \Phi(\tau)
    = \sum_{\tj}q^\pi(\tau) \left( \sum_{t=0}^{H-1} R(s^\tau_t,\pi(s^\tau_t))\right) \\
    & =  \sum_{t=0}^{H-1} \sum_{\tj}q^\pi(\tau) R(s^\tau_t,\pi(s^\tau_t))\\
    & =  \sum_{t=0}^{H-1} \sum_{(s,s')\in \Esup{t}} R(s,\pi(s)) \sum_{\tj: (s^\tau_t,s^\tau_{t+1}) =(s,s')}q^\pi(\tau) \\
    & =  \sum_{t=0}^{H-1} \sum_{(s,s')\in \Esup{t}}R(s,\pi(s)) f(s,s')\\
    & =  \sum_{t=0}^{H-1} \sum_{(s,s')\in \Esup{t}}R(s,\pi(s)) \sum_{(\tau,f_\tau)\in Q: (s^\tau_t,s^\tau_{t+1}) = (s,s')}f_\tau \\
    & = \sum_{(\tau,f_\tau)\in Q}f_\tau ( \sum_{t=0}^{H-1} R(s^\tau_t,\pi(s^\tau_t)))\\
    & =\sum_{(\tau,f_\tau)\in Q}f_\tau \Phi(\tau)\,.
\end{align*}
\textbf{Computational complexity: }
Solving $f$ takes $\bigO(\abs{E})$ time.
The algorithm will run for $\bigO(\abs{E})$ iterations and each iteration takes $\bigO(H)$ time.
Since $\abs{E} = \bigO(\abs{\cS}^2 H)$, the total running time of Algorithm~\ref{alg:flow-decomposition} is $\bigO(\abs{\cS}^2 H^2)$.
C4 will take $\bigO(k^3\abs{E})$ time.
\end{proof}
\section{Proof of Theorem~\ref{thm:traj-compression}}\label{app:traj-compression}
\trajcompress*
\begin{proof}
\textbf{Correctness:} 
C4 guarantees that $\abs{\Qsup{H}}\leq k+1$. 

We will prove $\sum_{\tau \in \Qsup{H}} \betasup{H}_\tau \Phi(\tau)=V^\pi$ by induction on $t=1,\ldots,H$.

Recall that for any trajectory $\tau$ of length $h$, $J(\tau)=\Phi(\tau) + V(s^\tau_h, H-h)$ was defined as the expected return of trajectories (of length $H$) with the prefix being $\tau$.

In addition, recall that $J_{\Qsup{t}} = \{J(\tau\append s)|\tau\in \Qsup{t}, s\in \cS\}$ and $p_{\Qsup{t},\betasup{t}}$ was defined by letting $p_{\Qsup{t},\betasup{t}}(\tau\append s) = \betasup{t}(\tau) P(s \vert s^\tau_t, \pi(s^\tau_t))$.

For the base case, we have that at $t=1$
\begin{align*}
    V^\pi &=R(s_0,\pi(s_0)) + \sum_{s\in \cS}  P(s|s_0,\pi(s_0))V^\pi(s, H-1)
    = \sum_{s\in \cS} P(s|s_0,\pi(s_0)) J((s_0)\append s)\\
    &= \sum_{s\in \cS} p_{\Qsup{0},\betasup{0}}((s_0)\append s) J((s_0)\append s) =\sum_{\tau\in \Qsup{1}} \betasup{1}(\tau)J(\tau)\,.
\end{align*}
Suppose that $V^\pi = \sum_{\tau'\in \Qsup{t}} \betasup{t}(\tau')J(\tau')$ holds at time $t$,
then we prove the statement holds at time $t+1$.
\begin{align*}
    V^\pi&=\sum_{\tau'\in \Qsup{t}} \betasup{t}(\tau')J(\tau')=\sum_{\tau'\in \Qsup{t}} \betasup{t}(\tau')(\Phi(\tau') +  V^\pi(s^{\tau'}_t, H-t)) \\
    &=\sum_{\tau'\in \Qsup{t}} \betasup{t}(\tau')\left(\Phi(\tau') + \sum_{s\in \cS}P(s|s^{\tau'}_t,\pi(s^{\tau'}_t))\left( R(s^{\tau'}_t, \pi(s^{\tau'}_t)) + V^\pi(s, H-t)\right)\right)\\
    &= \sum_{\tau'\in \Qsup{t}}\sum_{s\in \cS} \betasup{t}(\tau')P(s|s^{\tau'}_t,\pi(s^{\tau'}_t))\left(\Phi(\tau') +  R(s^{\tau'}_t, \pi(s^{\tau'}_t)) + V^\pi(s, H-t)\right)\\
    &= \sum_{\tau'\in \Qsup{t}}\sum_{s\in \cS} \betasup{t}(\tau')P(s|s^{\tau'}_t,\pi(s^{\tau'}_t))\left(\Phi(\tau'\append s)  + V^\pi(s, H-t)\right)\\
    &= \sum_{\tau'\in \Qsup{t}}\sum_{s\in \cS} p_{\Qsup{t},\betasup{t}}(\tau'\append s)J(\tau'\append s)\\
    &=\sum_{\tau\in \Qsup{t+1}} \betasup{t+1}(\tau)J(\tau)\,.
\end{align*}
By induction, the statement holds at $t=H$ by induction, i.e., $V^\pi= \sum_{\tau \in \Qsup{H}} \betasup{H}_\tau J(\tau) = \sum_{\tau \in \Qsup{H}} \betasup{H}_\tau \Phi(\tau)$.

\noindent\textbf{Computational complexity:}  Solving $V^\pi(s,h)$ for all $s\in \cS, h\in [H]$ takes time $\bigO(kH\abs{\cS}^2)$. In each round, we need to call C4 for $\leq (k+1)\abs{S}$ vectors, which takes $\bigO(k^4\abs{\cS})$ time. Thus, we need $\bigO(k^4H\abs{\cS} + kH\abs{\cS}^2)$ time in total.
\end{proof}
\section{Example of maximizing individual objective}\label{app:eg-linear-dependent}
\begin{observation}
Assume there exist $k>2$ policies that together assemble $k$ linear independent value vectors. Consider the $k$ different policies $\pi^*_1,\dots,\pi^*_k$ that each $\pi^*_i$ maximizes the objective $i\in[k]$. Then, their respective value vectors $V^*_1,\dots,V^*_k$ are not  necessarily linearly independent. Moreover, if $V^*_1,\dots,V^*_k$ are linearly depended it does not mean that $k$ linearly independent value vectors do not exists.
\end{observation}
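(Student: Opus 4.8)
The plan is to prove the observation by exhibiting a single explicit instance that simultaneously witnesses both parts of the claim. It suffices to construct an MDP whose set of achievable value vectors $\{V^\pi \mid \pi\in\Pi\}$ (i) has full rank $k$, so that $k$ linearly independent value vectors exist, while at the same time (ii) the $k$ single-objective maximizers $\pi^*_1,\dots,\pi^*_k$ all coincide, forcing $V^*_1=\dots=V^*_k$ and hence making these $k$ vectors linearly dependent. The same instance then establishes both that the $V^*_j$ need not be independent and that their dependence does not rule out the existence of $k$ independent value vectors.

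First I would take the simplest possible decision structure: a single-stage ($H=1$) MDP in which, from the initial state $s_0$, the agent selects one action, receives the corresponding reward, and the horizon ends. Under this structure a deterministic policy playing action $a$ has value exactly $R(s_0,a)$, and a randomized policy realizes the corresponding convex combination; hence the achievable value set is the convex hull of the reward vectors of the available actions (including $\bZero$ from the do-nothing action). This reduces the construction of an MDP to the choice of a finite set of reward vectors in $[0,1]^k$.

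Next I would pick the actions so that one action strictly dominates all others on every coordinate, while the remaining actions ``spread out'' across coordinates at a lower level. Concretely, include a dominating action $a^*$ with $R(s_0,a^*)=(1,\dots,1)$ and, for each $j\in[k]$, an action $b_j$ with $R(s_0,b_j)=\tfrac12\be_j$. For every objective $j$ one checks that $V^\pi_j=\sum_a\pi(a)R_j(s_0,a)\le 1=R_j(s_0,a^*)$ with equality only when all mass sits on $a^*$, so the unique maximizer of coordinate $j$ is the policy that deterministically plays $a^*$; thus $\pi^*_1=\dots=\pi^*_k$ and $V^*_1=\dots=V^*_k=(1,\dots,1)$, which are identical and hence linearly dependent for $k>2$. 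On the other hand the policies $b_1,\dots,b_k$ have values $\tfrac12\be_1,\dots,\tfrac12\be_k$, which are linearly independent, so these $k$ policies verify the hypothesis of the observation and give $\rank(\Span(\{V^\pi\mid\pi\in\Pi\}))=k$.

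I expect the only real (and minor) obstacle to be reconciling the two competing requirements on the value set: the dominating action must strictly beat every other action on every coordinate (to pin down the maximizers uniquely), while the spanning policies must each fall strictly below the maximum on their own coordinate (so none of them is itself a maximizer) yet remain spread enough to span $\R^k$. The choice above resolves this tension precisely because $a^*$ exceeds each $b_j$ on coordinate $j$ ($1>\tfrac12$) while the $b_j$'s point in distinct coordinate directions. The remaining bookkeeping is the normalization convention $\norm{V^\pi}_2\ge 1$, which the vectors $\tfrac12\be_j$ violate; I would fix this either by replicating the single step over a longer horizon (scaling all values by $H$, which preserves both rank and the coincidence of the maximizers) or simply by remarking that this normalization is inessential to the purely linear-algebraic content of the observation.
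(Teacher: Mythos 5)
Your construction is internally sound as far as it goes: with a dominating action $a^*$ of reward $(1,\dots,1)$ and actions $b_j$ of reward $\tfrac{1}{2}\be_j$, the unique maximizer of every objective is the deterministic policy playing $a^*$, while the policies playing $b_1,\dots,b_k$ supply $k$ linearly independent value vectors, so the value space has rank $k$. The normalization worry you raise at the end is immaterial (the paper's own example uses rewards far outside $[0,1]$), and under a loose reading of the statement both halves of the claim are witnessed by your instance.

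The gap is in the phrase ``the $k$ \emph{different} policies $\pi^*_1,\dots,\pi^*_k$.'' In your instance the per-objective maximizers are forced to coincide: the maximizer of each objective is unique and equal to the $a^*$-policy, so there do not exist $k$ different policies each maximizing its own objective, and the statement's setup is never actually instantiated --- the ``dependence'' you exhibit is one vector repeated $k$ times. The paper's example is built precisely to avoid this degeneracy: it is a four-armed bandit whose three maximizers are pairwise-distinct policies (arm $1$ for objective $1$, arm $3$ for objective $2$, arm $2$ for objective $3$) with pairwise non-proportional values satisfying the nontrivial relation $\tfrac{5}{6}r(1)+\tfrac{5}{12}r(2)=r(3)$, while a fourth arm $r(4)$ certifies that rank $3$ is attainable. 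Patching your example is not mere bookkeeping: for a dependent vector to still win its own objective against the vectors it depends on, it must be a conic rather than convex combination of them (note the paper's coefficients sum to $\tfrac{15}{12}>1$), so that it protrudes from their hull; this is exactly the idea your dominating-action trick sidesteps. As written, your proof establishes only a degenerate version of the observation; to match the statement --- and its motivation in Section~\ref{subsec:independentvalue}, where the concern is that the $k$ maximizers can look like $k$ usable basis candidates yet silently fail to span --- the maximizers must be genuinely distinct.
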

\begin{proof}
For simplicity, we show an example with a horizon of $H=1$ but the results could be extended to any $H\geq 1$. We will show an example where there are $4$ different value vectors, where $3$ of them are obtained by the $k=3$ policies that maximize the $3$ objectives and have linear dependence. 

Consider an MDP with a single state (also known as Multi-arm Bandit)   with $4$ actions with deterministic   reward vectors (which are also the expected values of the $4$ possible policies in this case):
\[
r(1)=
\begin{pmatrix}
8\\4\\2
\end{pmatrix},
\quad
r(2)=
\begin{pmatrix}
1\\2\\3
\end{pmatrix},
\quad
r(3)=
\begin{pmatrix}
85/12\\25/6\\35/12
\end{pmatrix}\approx
\begin{pmatrix}
7.083\\4.167\\2.9167
\end{pmatrix},
\quad
r(4)=
\begin{pmatrix}
1\\3\\2
\end{pmatrix}.
\]
Denote $\pi^a$ as the fixed policy that always selects action $a$. 
Clearly, policy $\pi^1$ maximizes the first objective, policy  $\pi^2$ the third, and policy  $\pi^3$ the second ($\pi^4$ do not maximize any objective).
However,
\begin{itemize}
    \item $r(3)$ linearly depends on $r(1)$ and $r(2)$ as 
\[
\frac{5}{6}r(1)+\frac{5}{12}r(2)=r(3).
\]
\item In addition, $r(4)$ is linearly independent in $r(1),r(2)$: Assume not. Then, there exists $\beta_1,\beta_2\in \mathbb{R}$ s.t.:
\[
\beta_1\cdot r(1)+\beta_2\cdot r(2) =
\begin{pmatrix}
8\beta_1 +\beta_2 \\4\beta_1 +2\beta_2 \\2\beta_1 +3\beta_2 
\end{pmatrix}
=\begin{pmatrix}
1\\3\\2
\end{pmatrix}= r(4).
\]
Hence, the first equations imply $\beta_2=1-8\beta_1$, and $4\beta_1+2-16\beta_1=3$, hence $\beta_1=-\frac{1}{12}$ and $\beta_2=\frac{5}{3}$. Assigning in the third equation yields $-\frac{1}{6}+5=2$ which is a contradiction.
\end{itemize}
\end{proof}

\end{document}